\DeclareRobustCommand{\eg}{e.g.,\@\xspace}                                      
\DeclareRobustCommand{\ie}{i.e.,\@\xspace}                                      
\DeclareRobustCommand{\wrt}{w.r.t.\@\xspace}                                    
\DeclareRobustCommand{\quotes}[1]{``#1''}
\DeclareMathOperator*{\E}{\mathbb{E}}
\DeclareMathOperator*{\Var}{\mathbb{V}\mathrm{ar}}
\DeclareMathOperator*{\argmax}{arg\,max}
\DeclareMathOperator*{\arginf}{arg\,inf}
\newcommand{\mathbr}[1]{\bm{\mathbf{#1}}}
\newcommand{\de}{\mathrm{d}}
\newcommand{\Ss}{\mathcal{S}}
\newcommand{\As}{\mathcal{A}}
\newcommand{\Bs}{\mathcal{B}}
\newcommand{\SAs}{\mathcal{S} \times \mathcal{A}}
\newcommand{\EOp}[1][\pi]{T^{#1}}
\newcommand{\OOp}{T^*}
\newcommand{\POp}{T^{\delta}}
\newcommand{\aOOp}{\widehat{T}^*}
\newcommand{\aPOp}{\widehat{T}^{\delta}}
\newcommand{\Ppi}[1][\pi]{P^{#1}}
\newcommand{\Nat}[1][]{\mathbb{N}_{\ifthenelse{\isempty{#1}}{}{\ge #1}}}
\newcommand{\Reals}[1][]{\mathbb{R}_{\ifthenelse{\isempty{#1}}{}{\ge #1}}}
\newcommand{\MSPol}{\Pi}
\newcommand{\Id}{\mathrm{Id}}
\newcommand{\Kant}{\mathcal{W}_1}
\newcommand{\algname}[1][]{PFQI({\ifthenelse{\isempty{#1}}{$k$}{#1}})\@\xspace}
\newcommand{\suplip}{\sup_{f: \norm[L][]{f} \le 1}}
\newcommandtwoopt{\norm}[3][p][\mu]{\left\| #3 \right\|_{#1 {\ifthenelse{\isempty{#2}}{}{, #2}}}}
\newcommand*{\bdiv}{%
  \nonscript\mskip-\medmuskip\mkern5mu%
  \mathbin{\operator@font div}\penalty900\mkern5mu%
  \nonscript\mskip-\medmuskip
}
\newcommand{\cvA}{c_{\mathrm{VI}_{1}, k, q, \rho,\nu}}
\newcommand{\cvB}{c_{\mathrm{VI}_{2}, k, q, \rho,\nu}}
\newcolumntype{L}[1]{>{\raggedright\let\newline\\\arraybackslash\hspace{0pt}}m{#1}}
\newcolumntype{C}[1]{>{\centering\let\newline\\\arraybackslash\hspace{0pt}}m{#1}}
\newcolumntype{R}[1]{>{\raggedleft\let\newline\\\arraybackslash\hspace{0pt}}m{#1}}
\declaretheorem[name=Lemma,numberwithin=section]{lemma}
\declaretheorem[name=Proposition,numberwithin=section]{prop}
\declaretheorem[name=Assumption,numberwithin=section]{ass}
\declaretheorem[name=Remark,numberwithin=section]{remark}
\newcounter{proofcounter}
\newcounter{saveequation}
\newcommand\tikzmark[1]{%
  \tikz[remember picture,overlay]\node[inner xsep=0pt] (#1) {};}
\newcommandtwoopt\Textbox[9][0cm][5cm]{%
\begin{tikzpicture}[remember picture,overlay]
  \coordinate (aux) at ([xshift=#2, yshift=#8]#4);
  \node[inner ysep=3pt, xshift=#1, yshift=#9,draw=#6 ,thick, fit=(#3) (aux),baseline, #7] (box) {};
  \node[anchor=north east,font=\rmfamily\small,align=right, color=#6] at (box.north east) {#5};
\end{tikzpicture}%
}
\definecolor{brightBlue}{RGB}{68, 119, 170}
\definecolor{brightCyan}{RGB}{102, 204, 238}
\definecolor{brightGreen}{RGB}{34, 136, 51}
\definecolor{brightYellow}{RGB}{204, 187, 68}
\definecolor{brightRed}{RGB}{238, 102, 119}
\definecolor{brightPurple}{RGB}{170, 51, 119}
\definecolor{brightGrey}{RGB}{187, 187, 187}
\definecolor{vibrantBlue}{RGB}{0, 119, 187}
\definecolor{vibrantCyan}{RGB}{51, 187, 238}
\definecolor{vibrantTeal}{RGB}{0, 153, 136}
\definecolor{vibrantOrange}{RGB}{238, 119, 51}
\definecolor{vibrantRed}{RGB}{204, 51, 17}
\definecolor{vibrantMagenta}{RGB}{238, 51, 119}
\definecolor{vibrantGrey}{RGB}{100, 100, 100}
\tikzstyle{dotted}=                  [dash pattern=on \pgflinewidth off 2pt]
\tikzstyle{densely dotted}=          [dash pattern=on \pgflinewidth off 1pt]
\tikzstyle{loosely dotted}=          [dash pattern=on \pgflinewidth off 4pt]
\tikzstyle{dashed}=                  [dash pattern=on 3pt off 3pt]
\tikzstyle{densely dashed}=          [dash pattern=on 3pt off 2pt]
\tikzstyle{loosely dashed}=          [dash pattern=on 3pt off 6pt]
\tikzstyle{dashdotted}=              [dash pattern=on 3pt off 2pt on \pgflinewidth off 2pt]
\tikzstyle{densely dashdotted}=      [dash pattern=on 3pt off 1pt on \pgflinewidth off 1pt]
\tikzstyle{loosely dashdotted}=      [dash pattern=on 3pt off 4pt on \pgflinewidth off 4pt]
\tikzstyle{dash dot dot}=[dash pattern=on 3pt off 2pt on \pgflinewidth off 2pt on \pgflinewidth off 2pt]
\tikzstyle{densely dash dot dot}=[dash pattern=on 3pt off 1pt on \pgflinewidth off 1pt on \pgflinewidth off 1pt]
\tikzstyle{loosely dash dot dot}= [dash pattern=on 3pt off 4pt on \pgflinewidth off 4pt on \pgflinewidth off 4pt]
\tikzstyle{dash dash dot}=[dash pattern=on 3pt off 2pt on 3pt off 2pt on \pgflinewidth off 2pt]
\tikzstyle{densely dash dash dot}=[dash pattern=on 3pt off 1pt on 3pt off 1pt on \pgflinewidth off 1pt]
\tikzstyle{loosely dash dash dot}= [dash pattern=on 3pt off 4pt on 3pt off 4pt on \pgflinewidth off 4pt]
\tikzstyle{dash dash dot dot}=[dash pattern=on 3pt off 2pt on 3pt off 2pt on \pgflinewidth off 2pt on \pgflinewidth off 2pt]
\tikzstyle{densely dash dash dot dot}=[dash pattern=on 3pt off 1pt on 3pt off 1pt on \pgflinewidth off 1pt on \pgflinewidth off 1pt]
\tikzstyle{loosely dash dash dot dot}= [dash pattern=on 3pt off 4pt on 3pt off 4pt on \pgflinewidth off 4pt on \pgflinewidth off 4pt]
\pgfplotsset{
	cycle list/.define={vibrant}{
    	vibrantBlue, solid, every mark/.append style={solid, fill=vibrantBlue},mark=*\\
    	vibrantCyan, densely dashed, every mark/.append style={solid, fill=vibrantCyan}, mark=triangle*\\
    	vibrantTeal, densely dotted, every mark/.append style={solid, fill=vibrantTeal}, mark=square*\\
    	vibrantOrange, densely dashdotted, every mark/.append style={solid, fill=vibrantOrange}, mark=diamond*\\
        vibrantRed, densely dash dot dot, every mark/.append style={solid, fill=vibrantRed, scale=1.5}, mark=x\\
        vibrantMagenta, densely dash dash dot, every mark/.append style={solid, fill=vibrantMagenta, scale=1.5}, mark=star\\
        vibrantGrey, densely dash dash dot dot, every mark/.append style={solid, fill=vibrantGrey, scale=1.5}, mark=|\\
        },
    }
\icmltitlerunning{Control Frequency Adaptation via Action Persistence in Batch Reinforcement Learning}
\newcommand{\mywidth}{\the\columnwidth}
\begin{document}

\setlength{\abovedisplayskip}{4pt}
\setlength{\belowdisplayskip}{4pt}
\setlength{\textfloatsep}{12pt}

\twocolumn[
\icmltitle{Control Frequency Adaptation via Action Persistence \\ in Batch Reinforcement Learning}



\icmlsetsymbol{equal}{*}

\begin{icmlauthorlist}
\icmlauthor{Alberto Maria Metelli}{polimi}
\icmlauthor{Flavio Mazzolini}{polimi}
\icmlauthor{Lorenzo Bisi}{polimi,isi}
\icmlauthor{Luca Sabbioni}{polimi,isi}
\icmlauthor{Marcello Restelli}{polimi,isi}
\end{icmlauthorlist}

\icmlaffiliation{polimi}{Politecnico di Milano, Milan, Italy.}
\icmlaffiliation{isi}{Institute for Scientific Interchange Foundation, Turin, Italy}

\icmlcorrespondingauthor{Alberto Maria Metelli}{\href{mailto:albertomaria.metelli@polimi.it}{\texttt{albertomaria.metelli@polimi.it}}}

\icmlkeywords{Reinforcement Learning, Batch Reinforcement Learning, Action Persistence}

\vskip 0.3in

]



\printAffiliationsAndNotice{} 

\begin{abstract}
	The choice of the control frequency of a system has a relevant impact on the ability of \emph{reinforcement learning algorithms} to learn a highly performing policy. In this paper, we introduce the notion of \emph{action persistence} that consists in the repetition of an action for a fixed number of decision steps, having the effect of modifying the control frequency. We start analyzing how action persistence affects the performance of the optimal policy, and then we present a novel algorithm, \emph{Persistent Fitted Q-Iteration} (PFQI), that extends FQI, with the goal of learning the optimal value function at a given persistence. After having provided a theoretical study of PFQI and a heuristic approach to identify the optimal persistence, we present an experimental campaign on benchmark domains to show the advantages of action persistence and proving the effectiveness of our persistence selection method.
\end{abstract}

\section{Introduction}\label{sec:introduction}
In recent years, Reinforcement Learning~\citep[RL,][]{sutton2018reinforcement} has proven to be a successful approach to address complex control tasks: from robotic locomotion~\citep[\eg][]{peters2008reinforcement, kober2013learning, haarnoja2019learning, kilinc2019reinforcement} to continuous system control~\citep[\eg][]{ schulman2015trust, lillicrap2015continuous, schulman2017proximal}. These classes of problems are usually formalized in the framework of the \emph{discrete--time} Markov Decision Processes~\citep[MDP,][]{puterman2014markov}, assuming that the control signal is issued at discrete time instants. However, many relevant real--world problems are more naturally defined in the continuous--time domain~\citep{luenberger1979introduction}. Even though a branch of literature has studied RL in \emph{continuous--time} MDPs~\cite{bradtke1994reinforcement, munos1997reinforcement, Doya2000continuous}, the majority of the research has focused on the discrete--time formulation, which appears to be a necessary, but effective, approximation. 

Intuitively, increasing the \emph{control frequency} of the system offers the agent more control opportunities, possibly leading to improved performance as the agent has access to a larger \emph{policy space}. This might wrongly suggest that we should control the system with the highest frequency possible, within its physical limits. However, in the RL framework, the environment dynamics is unknown, thus, a too fine discretization could result in the opposite effect, making the problem harder to solve. Indeed, any RL algorithm needs samples to figure out (implicitly or explicitly) how the environment evolves as an effect of the agent's actions. When increasing the control frequency, the \emph{advantage} of individual actions becomes infinitesimal, making them almost indistinguishable for standard \textit{value-based} RL approaches~\citep{tallec2019time}. As a consequence, the \emph{sample complexity} increases. Instead, low frequencies allow the environment to evolve longer, making the effect of individual actions more easily detectable. Furthermore, in the presence of a system characterized by a \quotes{slowly evolving} dynamics, the gain obtained by increasing the control frequency might become negligible.
Finally, in robotics, lower frequencies help to overcome some partial observability issues, like action execution delays~\citep{kober2013learning}.

Therefore, we experience a fundamental \emph{trade--off} in the control frequency choice that involves the policy space (larger at high frequency) and the sample complexity (smaller at low frequency). Thus, it seems natural to wonder: \quotes{\emph{what is the optimal control frequency?}} An answer to this question can disregard neither the task we are facing nor the learning algorithm we intend to employ. Indeed, the performance loss we experience by reducing the control frequency depends strictly on the properties of the system and, thus, of the task. Similarly, the dependence of the sample complexity on the control frequency is related to how the learning algorithm will employ the collected samples.

In this paper, we analyze and exploit this trade--off in the context of batch RL~\cite{lange2012batch}, with the goal of enhancing the learning process and achieving higher performance. We assume to have access to a discrete--time MDP $\mathcal{M}_{\Delta t_0}$, called base MDP, which is obtained from the time discretization of a continuous--time MDP with fixed base control time step $\Delta t_0$, or equivalently, a control frequency equal to $f_0 = \frac{1}{\Delta t_0}$.
In this setting, we want to select a suitable \emph{control time step} $\Delta t$ that is an integer multiple of the base time step $\Delta t_0$, \ie  $\Delta t = k \Delta t_0$ with $k \in \Nat[1]$.\footnote{We are considering the \emph{near--continuous} setting. This is almost w.l.o.g. compared to the continuous time since the discretization time step $\Delta t_0$ can be chosen to be arbitrarily small. Typically, a lower bound on $\Delta t_0$ is imposed by the physical limitations of the system. Thus, we restrict the search of $\Delta t$ from the continuous set ${\Reals}_{>0}$ to the discrete set $ \{k \Delta t_0\, , \, k \in \Nat[1] \}$. Moreover, considering an already discretized MDP simplifies the mathematical treatment.} Any choice of $k$ generates an MDP $\mathcal{M}_{k \Delta t_0}$ obtained from the base one $\mathcal{M}_{\Delta t_0}$ by altering the transition model so that each action is repeated for $k$ times. For this reason, we refer to $k$ as the action \emph{persistence}, \ie the number of decision epochs in which an action is kept fixed. It is possible to appreciate the same effect in the base MDP $\mathcal{M}_{\Delta t_0}$ by executing a (non-Markovian and non-stationary) policy that persists every action for $k$ time steps. The idea of repeating actions has been previously employed, although heuristically, with deep RL architectures~\citep{lakshminarayanan2017dynamic}. 

The contributions of this paper are theoretical, algorithmic, and experimental.
We first prove that action persistence (with a fixed $k$) can be represented by a suitable modification of the Bellman operators, which preserves the contraction property and, consequently, allows deriving the corresponding value functions (Section~\ref{sec:persistency}). Since increasing the duration of the control time step $k \Delta t_0$ has the effect of degrading the performance of the optimal policy, we derive an algorithm--independent bound for the difference between the optimal value functions of MDPs $\mathcal{M}_{\Delta t_0}$ and $\mathcal{M}_{k \Delta t_0}$, which holds under Lipschitz conditions. The result confirms the intuition that the performance loss is strictly related to how fast the environment evolves as an effect of the actions (Section~\ref{sec:loose}). 
Then, we apply the notion of action persistence in the batch RL scenario, proposing and analyzing an extension of Fitted Q-Iteration~\citep[FQI,][]{ernst2005tree}. The resulting algorithm, \emph{Persistent Fitted Q-Iteration} (PFQI) takes as input a target persistence $k$ and estimates the corresponding optimal value function, assuming to have access to a dataset of samples collected in the base MDP $\mathcal{M}_{\Delta t_0}$ (Section~\ref{sec:pfqi}). 
Once we estimate the value function for a set of candidate persistences $\mathcal{K} \subset \Nat[1]$, we aim at selecting the one that yields the best performing greedy policy. Thus, we introduce a persistence selection heuristic able to approximate the optimal persistence, without requiring further interactions with the environment (Section~\ref{sec:PersistenceSelection}). 
After having revised the literature (Section~\ref{sec:relatedWorks}), we present an experimental evaluation on benchmark domains, to confirm our theoretical findings and evaluate our persistence selection method (Section~\ref{sec:experimental}). We conclude by discussing some open questions related to action persistence (Section~\ref{sec:discussion}). The proofs of all the results are available in Appendix~\ref{apx:proofs}.

\section{Preliminaries}\label{sec:preliminaries}
In this section, we introduce the notation and the basic notions that we will employ in the remainder of the paper. 

\textbf{Mathematical Background}~~Let $\mathcal{X}$ be a set with a $\sigma$-algebra $\sigma_{\mathcal{X}}$, we denote with $\mathscr{P}(\mathcal{X})$ the set of all probability measures and with $\mathscr{B}(\mathcal{X})$ the set of all bounded measurable functions over $(\mathcal{X},\sigma_{\mathcal{X}})$. If $x \in \mathcal{X}$, we denote with $\delta_{x}$ the Dirac measure defined on $x$.
Given a probability measure $\rho \in \mathscr{P}(\mathcal{X})$ and a measurable function $f \in \mathscr{B}(\mathcal{X})$,  we abbreviate $\rho f = \int_{\mathcal{X}} f(x) \rho(\de x)$ (\ie we use $\rho$ as an operator). Moreover, we define the $L_p(\rho)$-norm of $f$ as $\norm[p][\rho]{f}^p = \int_{\mathcal{X}} |f(x)|^p \rho(\de x)$ for $p \ge 1$, whereas the $L_{\infty}$-norm is defined as $\norm[\infty][]{f} = \sup_{x \in \mathcal{X}} f(x)$. Let $\mathcal{D} = \{x_i\}_{i=1}^n \subseteq \mathcal{X}$ we define the $L_p(\rho)$ empirical norm as $\norm[p][\mathcal{D}]{f}^p = \frac{1}{n} \sum_{i=1}^n |f(x_i)|^p$. 

\textbf{Markov Decision Processes}~~A discrete-time Markov Decision Process~\citep[MDP,][]{puterman2014markov} is a 5-tuple $\mathcal{M} = (\Ss, \As, P, R, \gamma)$, where $\Ss$ is a measurable set of states, $\As$ is a 
measurable set of actions, $P: \Ss \times \As \rightarrow \mathscr{P}(\Ss)$ is the transition kernel that for each state-action pair $(s,a) \in \SAs$ provides the probability distribution $P(\cdot|s,a)$ of the next state, $R: \Ss \times \As \rightarrow \mathscr{P}(\mathbb{R})$ is the reward distribution $R(\cdot|s,a)$ for performing action $a \in \As$ in state $s \in \Ss$, whose expected value is denoted by $r(s,a) = \int_{\mathbb{R}} x R(\de x | s,a)$ and uniformly bounded by $R_{\max}<+\infty$, and $\gamma \in [0,1)$ is the discount factor. 

A policy $\pi = \left(\pi_{t} \right)_{t \in \mathbb{N}}$ is a sequence of functions $\pi_t: \mathcal{H}_t \rightarrow \mathscr{P}(\mathcal{A})$ mapping a history $H_t = \left(S_0, A_0, ..., S_{t-1}, A_{t-1}, S_t \right)$ of length $t \in \Nat$ to a probability distribution over $\As$, where $\mathcal{H}_t = ( \SAs )^{t} \times \Ss$. If $\pi_t$ depends only on the last visited state $S_t$ then it is called Markovian, \ie $\pi_t: \mathcal{S} \rightarrow \mathscr{P}(\mathcal{A})$. Moreover, if $\pi_{t}$ does not depend on explicitly $t$ it is stationary, in this case we remove the subscript $t$. We denote with $\MSPol$ the set of Markovian stationary policies. 
A policy $\pi \in \Pi$ induces a (state-action) transition kernel $\Ppi: \SAs \rightarrow \mathscr{P}(\SAs)$, defined for any measurable set $\Bs \subseteq  \SAs$ as~\citep{farahmand2011regularization}:
\begin{align}\label{eq:jointkernel}
	(\Ppi) (\Bs|s,a) = \int_{\Ss} P(\de s'| s,a) \int_{\As} \pi(\de a' | s')  \delta_{(s',a')}(\Bs).
\end{align}

The \emph{action-value function}, or Q-function, of a policy $\pi \in \Pi$ is the expected discounted sum of the rewards obtained by performing action $a$ in state $s$ and following policy $\pi$ thereafter $Q^{\pi}(s,a) = \E \big[ \sum_{t=0}^{+\infty} \gamma^t R_t \rvert S_0=s,\, A_0=a\big]$,
where $R_t \sim R(\cdot|S_t,A_t)$, $S_{t+1} \sim P(\cdot|S_{t},A_t)$, and $A_{t+1} \sim \pi(\cdot|S_{t+1})$ for all $t \in \Nat$. The \emph{value function} is the expectation of the Q-function over the actions: $V^{\pi}(s) = \int_{\As} \pi(\de a|s) Q^\pi(s,a)$.
Given a distribution $\rho \in \mathscr{P}(\Ss)$, we define the \emph{expected return} as $J^{\rho,\pi}(s) =  \int_{\Ss} \rho(\de s) V^\pi(s)$.
The optimal Q-function is given by: $Q^*(s,a) = \sup_{\pi \in \Pi} Q^\pi(s,a)$ for all $(s,a) \in \SAs$. A policy $\pi$ is \emph{greedy} \wrt a function $f \in \mathscr{B}(\SAs)$ if it plays only greedy actions, \ie $\pi(\cdot|s) \in \mathscr{P}\left( \argmax_{a \in \As} f(s,a) \right)$. An \emph{optimal policy} $\pi^* \in \Pi$ is any policy greedy \wrt $Q^*$.

Given a policy $\pi \in \Pi$, the \emph{Bellman Expectation Operator} $\EOp: \mathscr{B}(\SAs) \rightarrow \mathscr{B}(\SAs)$ and the \emph{Bellman Optimal Operator} $\OOp: \mathscr{B}(\SAs) \rightarrow \mathscr{B}(\SAs)$ are defined for a bounded measurable function $f \in \mathscr{B}(\SAs)$ and $(s,a) \in \SAs$ as~\citep{bertsekas2004stochastic}:
\begin{align*}
	&(\EOp f)(s,a) = r(s,a) + \left(\Ppi f \right)(s,a), \\
	&(\OOp f)(s,a) = r(s,a) + \gamma \int_{\Ss} P(\de s'|s,a) \max_{a' \in \As} f(s',a').
\end{align*}
Both $\EOp$ and $\OOp$ are $\gamma$-contractions in $L_{\infty}$-norm and, consequently, they have a unique fixed point, that are the Q-function of policy $\pi$ ($\EOp Q^\pi = Q^\pi$) and the optimal Q-function ($\OOp Q^* = Q^*$) respectively.

\textbf{Lipschitz MDPs}~~Let $(\mathcal{X}, d_{\mathcal{X}})$ and $(\mathcal{Y}, d_{\mathcal{Y}})$ be two metric spaces, a function $f : \mathcal{X} \rightarrow \mathcal{Y}$ is called $L_f$-Lipschitz continuous ($L_f$-LC), where $L_f \ge 0$, if for all $x,x' \in \mathcal{X}$ we have:
\begin{equation}
	d_{\mathcal{Y}}(f(x),f(x')) \le L_f d_{\mathcal{X}} (x,x').
\end{equation}
Moreover, we define the Lipschitz semi-norm as $\norm[L][]{f} = \sup_{x,x' \in \mathcal{X} : x\neq x'} \frac{d_{\mathcal{Y}}(f(x),f(x'))}{d_{\mathcal{X}} (x,x')}$. For real functions
we employ Euclidean distance $d_{\mathcal{Y}}(y,y')=\norm[2][]{y-y'}$, while for probability distributions we use the Kantorovich ($L_1$-Wasserstein) metric defined for $\mu,\nu \in \mathscr{P}(\mathcal{Z})$ as~\citep{villani2008optimal}:
\begin{equation}
	d_{\mathcal{Y}}(\mu,\nu)= \Kant(\mu,\nu) = \suplip \left| \int_{\mathcal{Z}} f(z)  (\mu - \nu)(\de z) \right|. 
\end{equation}
We now introduce the notions of Lipschitz MDP and Lipschitz policy that we will employ in the following~\citep{rachelson2010locality, pirotta2015policy}.
\begin{restatable}[Lipschitz MDP]{ass}{}\label{ass:LipMDP}
	Let $\mathcal{M}$ be an MDP. $\mathcal{M}$ is called $(L_{P},L_r)$-LC if for all $(s,a),(\overline{s},\overline{a}) \in \SAs$:
	\begin{align*}
		& \Kant \left( P (\cdot|s,a), P(\cdot|\overline{s},\overline{a}) \right) \le L_{P} \,  d_{\SAs} \left((s,a), (\overline{s},\overline{a}) \right), \\
		& \left| r(s,a) - r(\overline{s},\overline{a}) \right| \le L_r \,  d_{\SAs} \left((s,a), (\overline{s},\overline{a}) \right).
	\end{align*}
\end{restatable}
\begin{restatable}[Lipschitz Policy]{ass}{}\label{ass:LipPolicy}
	Let $\pi \in \Pi$ be a Markovian stationary policy. $\pi$ is called $L_{\pi}$-LC if for all $s,\overline{s} \in \Ss$:
	\begin{align*}
		& \Kant \left( \pi (\cdot|s), \pi(\cdot|\overline{s}) \right) \le L_{\pi} \,  d_{\Ss} \left(s,\overline{s} \right).
	\end{align*}
\end{restatable}

\section{Persisting Actions in MDPs}\label{sec:persistency}
\begin{figure*}
\raggedleft
\includegraphics[width=.97\textwidth]{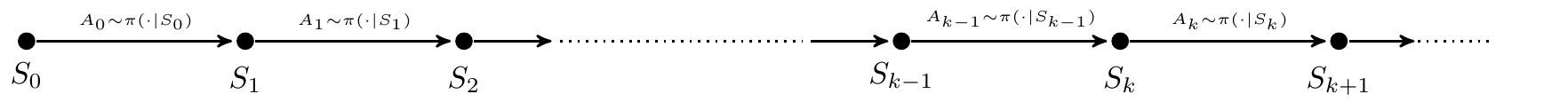}
\includegraphics[width=.97\textwidth]{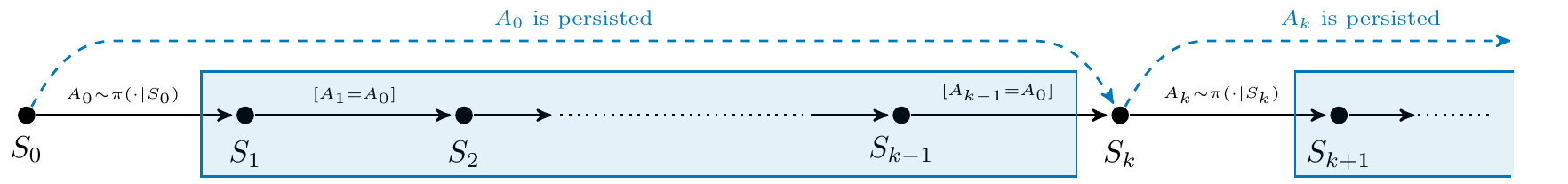}\vspace{-.2cm}
\caption{Agent-environment interaction without (top) and with (bottom) action persistence, highlighting duality. The transition generated by the $k$-persistent MDP $\mathcal{M}_k$ is the cyan dashed arrow, while the actions played by the $k$-persistent policy are inside the cyan rectangle.
\vspace{-.15cm}
}\label{fig:persistence}
\end{figure*}
By the phrase \quotes{executing a policy $\pi$ at persistence $k$}, with $k \in \Nat[1]$, we mean the following 
type of agent-environment interaction. At decision step $t=0$, the agent selects an action according to its policy $A_0 \sim \pi(\cdot|S_0)$. Action $A_0$ is kept fixed, or \emph{persisted}, for the subsequent $k-1$ decision steps, \ie actions $A_1, ..., A_{k-1}$  are all equal to $A_0$.
Then, at decision step $t=k$, the agent queries again the policy $A_k \sim \pi(\cdot|S_k)$ and persists action $A_k$ for the subsequent $k-1$ decision steps and so on. In other words, the agent employs its policy only at decision steps $t$ that are integer multiples of the persistence $k$ ($t \bmod k =0$). Clearly, the usual execution of $\pi$ corresponds to persistence 1.

\subsection{Duality of Action Persistence}\label{sec:duality}
Unsurprisingly, the execution of a Markovian stationary policy $\pi$ at persistence $k > 1$ produces a behavior that, in general, cannot be represented by executing any Markovian stationary policy at persistence 1. Indeed, at any decision step $t$, such a policy needs to remember which action was taken at the previous decision step  $t-1$ (thus it is non-Markovian with memory $1$) and has to understand whether to select a new action based on $t$ (so it is non-stationary).

\begin{restatable}[$k$-persistent policy]{defi}{}\label{def:policyPersisted}
Let $\pi \in \Pi$ be a Markovian stationary policy. For any $k \in \Nat[1]$, the \emph{$k$-persistent policy} induced by $\pi$ is a non--Markovian non--stationary policy, defined for any measurable set $\Bs \subseteq \As$ and $t \in \mathbb{N}$ as:
\begin{equation}\label{eq:policyPersisted}
    \pi_{t,k}(\Bs|H_t) = \begin{cases}
        \pi(\Bs|S_t) & \text{if } t \bmod k = 0\\
         \delta_{A_{t - 1}}(\Bs) & \text{otherwise}
    \end{cases}.
\end{equation}
Moreover, we denote with $\Pi_k = \{(\pi_{t,k})_{t \in \Nat} : \pi \in \MSPol\}$ the set of the $k$-persistent policies.
\end{restatable}
Clearly, for $k=1$ we recover policy $\pi$ as we always satisfy the condition $ t \bmod k = 0$\, \ie $\pi = \pi_{t,1}$ for all $t \in \mathbb{N}$. We refer to this interpretation of action persistence as \emph{policy view}.

A different perspective towards action persistence consists in looking at the effect of the original policy $\pi$ in a suitably modified MDP. To this purpose, we introduce the {(state-action) persistent transition probability kernel} $\Ppi[\delta]: \SAs \rightarrow \mathscr{P}(\SAs)$ defined for any measurable set $\Bs \subseteq \SAs$ as:
\begin{align}
	(\Ppi[\delta]) (\Bs|s,a) = \int_{\Ss} P(\de s'| s,a) \delta_{(s',a)}(\Bs).
\end{align}
The crucial difference between $\Ppi$ and $\Ppi[\delta]$ is that the former samples the action $a'$ to be executed in the next state $s'$ according to $\pi$, whereas the latter replicates in state $s'$ action $a$.
We are now ready to define the $k$-persistent MDP.
 
\begin{restatable}[$k$-persistent MDP]{defi}{}\label{def:MDPPersisted}
Let $\mathcal{M}$ be an MDP. For any $k \in \Nat[1]$, the \emph{$k$-persistent MDP} is the following MDP $\mathcal{M}_k = \left( \Ss, \As, P_k, R_k, \gamma^k \right)$, where $P_k$ and $R_k$ are the $k$-persistent transition model and reward distribution respectively, defined for any measurable sets $\Bs \subseteq \Ss$ , $\mathcal{C} \subseteq \mathbb{R}$ and state-action pair $(s,a) \in \SAs$ as:
\begin{equation}\label{eq:transitionPersisted}
   P_k(\Bs|s,a) = \left((P^{\delta})^{k-1}P\right)(\Bs|s,a),
\end{equation}
\begin{equation}\label{eq:rewardPersisted}
   R_k(\mathcal{C}|s,a) = \sum_{i=0}^{k-1} \gamma^i \left((P^{\delta})^{i} R\right)(\mathcal{C}|s,a),
\end{equation}
and $r_k(s,a) = \int_{\mathbb{R}} x R_k(\de x|s,a) = \sum_{i=0}^{k-1} \gamma^i \left((P^{\delta})^{i} r\right)(s,a)$ is the expected reward, uniformly bounded by $R_{\max} \frac{1-\gamma^k}{1-\gamma}$.
\end{restatable}
The $k$-persistent transition model $P_k$ keeps action $a$ fixed for $k-1$ steps while making the state evolve according to $P$. Similarly, the $k$-persistent reward $R_k$ provides the cumulative discounted reward over $k$ steps in which $a$ is persisted. We define the transition kernel $P^\pi_k$, analogously to $P^\pi$, as in Equation~\eqref{eq:jointkernel}. Clearly, for $k=1$ we recover the base MDP, \ie $\mathcal{M} = \mathcal{M}_1$.\footnote{If $\mathcal{M}$ is the base MDP $\mathcal{M}_{\Delta t_0}$, the $k$--persistent MDP $\mathcal{M}_k$ corresponds to $\mathcal{M}_{k \Delta t_0}$. We remove the subscript $ \Delta t_0$ for brevity.} 
Therefore, executing policy $\pi$ in $\mathcal{M}_k $ at persistence $1$ is equivalent to executing policy $\pi$ at persistence $k$ in the original MDP $\mathcal{M}$. We refer to this interpretation of persistence as \emph{environment view} (Figure~\ref{fig:persistence}). Thus, solving the base MDP $\mathcal{M}$ in the space of $k$-persistent policies $\Pi_{k}$ (Definition~\ref{def:policyPersisted}), thanks to this \emph{duality}, is equivalent to solving the $k$-persistent MDP $\mathcal{M}_k$ (Definition~\ref{def:MDPPersisted}) in the space of Markovian stationary policies $\Pi$. 

It is worth noting that the persistence $k \in \Nat[1]$ can be seen as an \emph{environmental parameter} (affecting $P$, $R$, and $\gamma$), which can be externally configured with the goal to improve the learning process for the agent. In this sense, the MDP $\mathcal{M}_k$ can be seen as a Configurable Markov Decision Process with parameter $k \in \Nat[1]$~\citep{metelli2018configurable, metelli2019reinforcement}.

Furthermore, a persistence of $k$ induces a $k$-persistent MDP $\mathcal{M}_k$ with smaller discount factor $\gamma^k$. Therefore, the effective horizon in $\mathcal{M}_k$ is  $\frac{1}{1-\gamma^k} < \frac{1}{1-\gamma}$. Interestingly, the end effect of persisting actions is similar to reducing the planning horizon, by explicitly reducing the discount factor of the task~\citep{petrik2008biasing, jiang2016the} or setting a maximum trajectory length~\cite{farahmand2016truncated}.

\subsection{Persistent Bellman Operators}
When executing policy $\pi$ at persistence $k$ in the base MDP $\mathcal{M}$, we can evaluate its performance starting from any state-action pair $(s,a) \in \SAs$, inducing a Q-function that we denote with $Q^{\pi}_k$ and call \emph{$k$-persistent action-value function} of $\pi$. Thanks to duality, $Q^{\pi}_k$ is also the action-value function of policy $\pi$ when executed in the $k$-persistent MDP $\mathcal{M}_k$. Therefore, $Q^{\pi}_k$ is the fixed point of the Bellman Expectation Operator of $\mathcal{M}_k$, \ie the operator defined for any $f \in \mathscr{B}(\SAs)$ as $(\EOp_k f)(s,a) = r_k(s,a) + \gamma^k (P_k^{\pi} f)(s,a)$, that we call \emph{$k$-persistent Bellman Expectation Operator}. Similarly, again thanks to duality, the optimal Q-function in the space of $k$-persistent policies $\Pi_k$, denoted by $Q^*_k$ and called \emph{$k$-persistent optimal action-value function}, corresponds to the optimal Q-function of the $k$-persistent MDP, \ie $Q^*_k(s,a) = \sup_{\pi \in \Pi} Q^\pi_k(s,a)$ for all $(s,a) \in \SAs$.
As a consequence, $Q^*_k$ is the fixed point of the Bellman Optimal Operator of $\mathcal{M}_k$, defined for $f \in \mathscr{B}(\SAs)$ as $(\OOp_k f)(s,a) = r_k(s,a) + \gamma^k \int_{\Ss} P_k(\de s' |s,a) \max_{a' \in \As} f(s',a')$, that we call \emph{$k$-persistent Bellman Optimal Operator}. Clearly, both $\EOp_k$ and $\OOp_k$ are $\gamma^k$-contractions in $L_{\infty}$-norm.

We now prove that the $k$-persistent Bellman operators are obtained as composition of the base operators $\EOp$ and $\OOp$.
\begin{restatable}[]{thr}{persistentOperators}\label{thr:persistentOperators}
Let $\mathcal{M}$ be an MDP, $k \in \mathbb{N}_{\ge 1}$ and $\mathcal{M}_k$ be the $k$-persistent MDP. Let $\pi \in \Pi$ be a Markovian stationary policy. Then, $\EOp_k$ and $\OOp_k$ can be expressed as:
\begin{equation}
	\EOp_k = \left(\POp \right)^{k-1} \EOp \quad \text{and} \quad \OOp_{k} = \left(\POp \right)^{k-1} \OOp,
\end{equation}
where $\POp : \mathscr{B}(\SAs) \rightarrow \mathscr{B}(\SAs)$ is the \emph{Bellman Persistent Operator}, defined for $f \in \mathscr{B}(\SAs)$ and $(s,a) \in \SAs$:
\begin{equation}
	\left(\POp f\right)(s,a) = r(s,a) + \gamma \left(\Ppi[\delta] f \right)(s,a).
\end{equation}
\end{restatable}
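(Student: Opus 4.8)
The plan is to exploit that $\POp$ is an \emph{affine} operator on $\mathscr{B}(\SAs)$: since $\left(\POp f\right)(s,a) = r(s,a) + \gamma\left(\Ppi[\delta] f\right)(s,a)$ and $\Ppi[\delta]$ acts linearly, iterating $\POp$ telescopes like a geometric series. First I would prove, by induction on $m \in \Nat$, the closed form
\begin{equation*}
	\left(\POp\right)^m f = \sum_{i=0}^{m-1} \gamma^i \left(\Ppi[\delta]\right)^i r \,+\, \gamma^m \left(\Ppi[\delta]\right)^m f, \qquad f \in \mathscr{B}(\SAs),
\end{equation*}
where $\left(\Ppi[\delta]\right)^i$ denotes the $i$-fold composition of the kernel $\Ppi[\delta]$ with itself (well-defined by the usual kernel-composition argument, i.e. Tonelli). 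The base case $m=0$ is the identity, and the inductive step is immediate: apply $\POp$ to the right-hand side, use $\POp g = r + \gamma \Ppi[\delta] g$, and collect terms.

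Next I would specialise this identity at $m = k-1$, plugging in $f = \EOp g$ for the expectation case and $f = \OOp g$ for the optimal case, for an arbitrary $g \in \mathscr{B}(\SAs)$. For the expectation operator, writing $\EOp g = r + \gamma \Ppi g$ gives
\begin{align*}
	\left(\POp\right)^{k-1}\EOp g
	&= \sum_{i=0}^{k-2}\gamma^i \left(\Ppi[\delta]\right)^i r + \gamma^{k-1}\left(\Ppi[\delta]\right)^{k-1}\!\left(r + \gamma \Ppi g\right) \\
	&= \sum_{i=0}^{k-1}\gamma^i \left(\Ppi[\delta]\right)^i r \;+\; \gamma^{k}\left(\Ppi[\delta]\right)^{k-1}\Ppi g,
\end{align*}
and the first sum is exactly $r_k$ by Equation~\eqref{eq:rewardPersisted}. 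It then remains to identify $\left(\Ppi[\delta]\right)^{k-1}\Ppi$ with $P_k^\pi$. I would do this by factoring $\Ppi = P E_\pi$, where $E_\pi : \Ss \to \mathscr{P}(\SAs)$ is the kernel $E_\pi(\Bs|s') = \int_{\As}\pi(\de a'|s')\,\delta_{(s',a')}(\Bs)$ that appears inside Equation~\eqref{eq:jointkernel}; then by associativity of kernel composition and Definition~\ref{def:MDPPersisted},
\begin{equation*}
	\left(\Ppi[\delta]\right)^{k-1}\Ppi = \left(\Ppi[\delta]\right)^{k-1} P\, E_\pi = P_k\, E_\pi = P_k^\pi,
\end{equation*}
the last equality being the definition of $P_k^\pi$ "analogous to $\Ppi$". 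Hence $\left(\POp\right)^{k-1}\EOp g = r_k + \gamma^k P_k^\pi g = \EOp_k g$ for all $g$, i.e. $\EOp_k = \left(\POp\right)^{k-1}\EOp$.

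The optimal case is even more direct, since the greedy step acts on $g$ \emph{before} the transition: writing $\left(\OOp g\right)(s,a) = r(s,a) + \gamma\int_{\Ss}P(\de s'|s,a)\max_{a'\in\As}g(s',a')$ and applying the same substitution together with Equation~\eqref{eq:transitionPersisted},
\begin{equation*}
	\left(\POp\right)^{k-1}\OOp g = r_k + \gamma^k \left(\Ppi[\delta]\right)^{k-1} P\Big[\textstyle\max_{a'\in\As}g(\cdot,a')\Big] = r_k + \gamma^k \int_{\Ss} P_k(\de s'|\cdot)\max_{a'\in\As}g(s',a') = \OOp_k g,
\end{equation*}
so $\OOp_k = \left(\POp\right)^{k-1}\OOp$, and the case $k=1$ is the empty product $\left(\POp\right)^0 = \Id$, consistent with $\mathcal{M}_1 = \mathcal{M}$. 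I expect the only genuinely delicate point to be the kernel bookkeeping in the penultimate display --- checking that "persist the action through $k-1$ state transitions and then resample it from $\pi$" coincides with the kernel $P_k^\pi$ --- which reduces to associativity of composition and one application of Tonelli to legitimise the swapping of iterated integrals; everything else is the routine induction above.
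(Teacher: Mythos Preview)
Your proposal is correct and follows essentially the same route as the paper: both arguments hinge on the closed-form identity $(\POp)^{m} g = \sum_{i=0}^{m-1}\gamma^i(\Ppi[\delta])^i r + \gamma^m(\Ppi[\delta])^m g$, proved by induction, and then match the result against the definitions of $r_k$, $P_k$, and $P_k^\pi$. The only cosmetic difference is direction --- the paper starts from $\EOp_k$ and unwinds it to expose $(\POp)^{k-1}\EOp$, whereas you start from $(\POp)^{k-1}\EOp$ and build up $\EOp_k$ --- and you make the identification $(\Ppi[\delta])^{k-1}\Ppi = P_k^\pi$ explicit via the factorisation $\Ppi = P\,E_\pi$, which the paper simply reads off from Definition~\ref{def:MDPPersisted}.
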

The fixed point equations for the $k$-persistent Q-functions become: $Q^\pi_k = \left(\POp \right)^{k-1} \EOp Q^\pi_k$ and $Q^*_k = \left(\POp \right)^{k-1} \OOp Q^*_k$.

\section{Bounding the Performance Loss}\label{sec:loose}
Learning in the space of $k$-persistent policies $\Pi_k$ can only lower the performance of the optimal policy, \ie $Q^{*}(s,a) \ge Q^*_k(s,a)$ for $k \in \Nat[1]$. The goal of this section is to bound $\norm[p][\rho]{Q^{*} - Q^*_k}$ as a function of the persistence $k \in \Nat[1]$. To this purpose, we focus on $\norm[p][\rho]{Q^\pi - Q^\pi_k}$ for a fixed policy $\pi \in \Pi$, since denoting with $\pi^*$ an optimal policy of $\mathcal{M}$ and with $\pi^*_k$ an optimal policy of $\mathcal{M}_k$, we have that:
\begin{equation*}
	Q^{*} - Q^*_k = Q^{\pi^*} - Q^{\pi^*_k}_k \le Q^{\pi^*} - Q^{\pi^*}_k,
	\end{equation*}
since $Q^{\pi^*_k}_k(s,a) \ge Q^{\pi^*}_k(s,a)$. We start with the following result which makes no assumption about the structure of the MDP and then we particularize it for the Lipschitz MDPs.
\begin{restatable}[]{thr}{PersistenceBound}\label{thr:PersistenceBound}
	Let $\mathcal{M} $ be an MDP and $\pi \in \Pi$ be a Markovian stationary policy. Let $\mathcal{Q}_k = \{  \left(\POp \right)^{k-2-l} \EOp Q^\pi_k \,:\, l \in \{0,\dots, k-2\} \}$ and for all $(s,a) \in \SAs$ let us define:
	\begin{equation*}
   	 \resizebox{8.4cm}{!}{%
  		$\displaystyle
		d_{\mathcal{Q}_k}^\pi(s,a) = \sup_{f \in \mathcal{Q}_k} \left| \int_{\Ss} \int_{\As} \left( \Ppi(\de s', \de a' | s,a) - P^\delta (\de s', \de a'| s,a) \right) f(s',a') \right|.$
  		}
	\end{equation*}
		 Then, for any $\rho \in \mathscr{P}(\SAs)$, $p \ge 1$, and $k \in \Nat[1]$, it holds that:
	\begin{align*}
		\norm[p][\rho]{Q^\pi - Q^\pi_k} & \le \frac{\gamma (1-\gamma^{k-1})}{(1-\gamma)(1-\gamma^k)} \left\| d^\pi_{\mathcal{Q}_k} \right\|_{p, \eta^{\rho,\pi}_{k}},
	\end{align*}
	where $\eta^{\rho,\pi}_{k} \in \mathscr{P}(\SAs)$ is a probability measure defined for any measurable set $\Bs \subseteq \SAs$ as:
	\begin{equation*}
		\eta^{\rho,\pi}_{k}(\Bs) = \frac{(1-\gamma)(1-\gamma^k)}{\gamma(1-\gamma^{k-1})} \!\!\! \sum_{\substack{i \in \mathbb{N} \\ i \bmod k \neq 0}} \!\!\! \gamma^{i} \left( \rho \left({\Ppi} \right)^{i-1} \right)(\Bs).
	\end{equation*}
\end{restatable}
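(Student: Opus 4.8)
The plan is to introduce the chain of intermediate value functions produced inside one persistence block, derive a cyclic system of fixed-point identities linking them to $Q^\pi$, unroll that system into a Neumann-type series, and convert the resulting pointwise estimate into the claimed $L_p(\rho)$ bound via a double use of Jensen's inequality. Concretely, using Theorem~\ref{thr:persistentOperators} and the fixed-point equation $Q^\pi_k = (\POp)^{k-1}\EOp Q^\pi_k$, I would set $G_j := (\POp)^{k-1-j}\EOp Q^\pi_k$ for $j\in\{0,\dots,k-1\}$, so that $G_0 = Q^\pi_k$, $G_{k-1} = \EOp Q^\pi_k$, $G_j = \POp G_{j+1}$ for $j\le k-2$, and $\mathcal{Q}_k = \{G_1,\dots,G_{k-1}\}$. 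Writing $D_j := Q^\pi - G_j$ and using $\POp f = r+\gamma\Ppi[\delta] f$, $\EOp f = r+\gamma\Ppi f$ together with $Q^\pi = \EOp Q^\pi$, subtracting $Q^\pi$ from the relations $G_j = \POp G_{j+1}$ and $G_{k-1}=\EOp G_0$ yields
\begin{equation*}
D_j = \gamma\Ppi D_{j+1} + \gamma\big(\Ppi - \Ppi[\delta]\big)G_{j+1} \quad (0\le j\le k-2), \qquad D_{k-1} = \gamma\Ppi D_0 .
\end{equation*}
The reward terms cancel exactly because $\POp$ and $\EOp$ carry the same $r$, which is why the genuinely linear operator $\Ppi-\Ppi[\delta]$ appears with no affine remainder.

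Next I would unroll the recursion. Iterating the first identity down to $D_{k-1}$ gives $D_0 = \gamma^{k-1}(\Ppi)^{k-1}D_{k-1} + \sum_{j=1}^{k-1}\gamma^j(\Ppi)^{j-1}(\Ppi-\Ppi[\delta])G_j$, and substituting $D_{k-1}=\gamma\Ppi D_0$ closes the cycle into $\big(\Id - \gamma^k(\Ppi)^k\big)(Q^\pi - Q^\pi_k) = \sum_{j=1}^{k-1}\gamma^j(\Ppi)^{j-1}(\Ppi-\Ppi[\delta])G_j$. Inverting with $\sum_{m\ge 0}\gamma^{km}(\Ppi)^{km}$ and reindexing $i = km+j$ — so that $i$ runs over all positive integers with $i\bmod k\neq 0$, $j = i\bmod k$, and $km+j-1 = i-1$ — produces the closed form
\begin{equation*}
Q^\pi - Q^\pi_k = \sum_{\substack{i\in\mathbb{N}\\ i\bmod k\neq 0}}\gamma^i(\Ppi)^{i-1}\big(\Ppi - \Ppi[\delta]\big)G_{i\bmod k}.
\end{equation*}

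For the final step, since each $G_{i\bmod k}\in\mathcal{Q}_k$ we have $\big|(\Ppi-\Ppi[\delta])G_{i\bmod k}\big|\le d^\pi_{\mathcal{Q}_k}$ pointwise, and since $(\Ppi)^{i-1}$ is a positive expectation operator the triangle inequality gives the pointwise bound $\big|Q^\pi - Q^\pi_k\big|\le\sum_{i:\,i\bmod k\neq 0}\gamma^i(\Ppi)^{i-1}d^\pi_{\mathcal{Q}_k}$. Setting $c := \sum_{i:\,i\bmod k\neq 0}\gamma^i = \frac{\gamma}{1-\gamma}-\frac{\gamma^k}{1-\gamma^k} = \frac{\gamma(1-\gamma^{k-1})}{(1-\gamma)(1-\gamma^k)}$, the quantity $c^{-1}\sum_i\gamma^i(\Ppi)^{i-1}d^\pi_{\mathcal{Q}_k}$ is a convex combination, so one Jensen step over the weights $\gamma^i/c$ and one inside each kernel $(\Ppi)^{i-1}$ yield $\rho\big(\sum_i\gamma^i(\Ppi)^{i-1}d^\pi_{\mathcal{Q}_k}\big)^p \le c^{p-1}\sum_i\gamma^i\big(\rho(\Ppi)^{i-1}\big)\big(d^\pi_{\mathcal{Q}_k}\big)^p = c^p\,\eta^{\rho,\pi}_k\big((d^\pi_{\mathcal{Q}_k})^p\big)$, where $\eta^{\rho,\pi}_k := c^{-1}\sum_{i:\,i\bmod k\neq 0}\gamma^i\rho(\Ppi)^{i-1}$ is exactly the stated probability measure (its masses sum to $1$ by definition of $c$). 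Taking $p$-th roots gives $\norm[p][\rho]{Q^\pi - Q^\pi_k}\le c\left\| d^\pi_{\mathcal{Q}_k}\right\|_{p,\eta^{\rho,\pi}_k}$; the degenerate case $k=1$ (empty $\mathcal{Q}_1$, coefficient $0$, $Q^\pi_1=Q^\pi$) is immediate.

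The step I expect to be most delicate is the bookkeeping of the cyclic unrolling: checking that the operator acting on $Q^\pi - Q^\pi_k$ collapses to precisely $\Id - \gamma^k(\Ppi)^k$, that the power of $\Ppi$ attached to each $G_j$ becomes exactly $i-1$ after reindexing, and that the summation index set is exactly $\{i\in\mathbb{N}: i\bmod k\neq 0\}$, so that the constant $c$ and the measure $\eta^{\rho,\pi}_k$ come out in the stated closed forms. A secondary point to verify is that $d^\pi_{\mathcal{Q}_k}\in\mathscr{B}(\SAs)$ — a finite supremum of bounded functions — so that all the integrals and the final norm are well defined.
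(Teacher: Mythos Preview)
Your proposal is correct and follows essentially the same route as the paper. The paper factors the argument into three auxiliary results (two decomposition lemmas and a ``Persistence Lemma'') which establish the identity $Q^\pi-Q^\pi_k=\sum_{i:\,i\bmod k\neq 0}\gamma^i(\Ppi)^{i-1}(\Ppi-\Ppi[\delta])(\POp)^{k-2-(i-1)\bmod k}\EOp Q^\pi_k$ and then bounds and applies Jensen exactly as you do; your cyclic recursion on the $G_j$'s and $D_j$'s is just a compact repackaging of those lemmas, arriving at the identical series representation and the same normalization constant $c$.
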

The bound shows that the Q-function difference depends on the discrepancy $d_{\mathcal{Q}_k}^\pi$ between the transition-kernel $P^\pi$ and the corresponding persistent version $P^\delta$, which is a form of \emph{integral probability metric}~\citep{muller1997integral}, defined in terms of the set $\mathcal{Q}_k$. This term is averaged with the distribution $\eta^{\rho,\pi}_{k}$, which encodes the (discounted) probability of visiting a state-action pair, ignoring the visitations made at decision steps $i$ that are multiple of the persistence $k$. Indeed, in those steps, we play policy $\pi$ regardless of whether persistence is used.\footnote{$\eta^{\rho,\pi}_{k}$ resambles the $\gamma$-discounted state-action distribution~\cite{sutton2000policy}, but ignoring the decision steps multiple of $k$.} The dependence on $k$ is represented in the term $\frac{1-\gamma^{k-1}}{1-\gamma^k}$. When $k\rightarrow 1$ this term displays a linear growth in $k$, being asymptotic to $(k-1) \log \frac{1}{\gamma}$, and, clearly, vanishes for $k = 1$.
Instead, when $k\rightarrow \infty$ this term tends to 1.

If no structure on the MDP/policy is enforced, the dissimilarity term $d_{\mathcal{Q}_k}^\pi$ may become large enough to make the bound vacuous, \ie larger than $\frac{\gamma R_{\max}}{1-\gamma}$, even for $k=2$ (see Appendix~\ref{sec:discussionOnBoundd}). Intuitively, since the persistence will execute old actions in new states, we need to guarantee that the environment state changes slowly \wrt to time and the policy must play similar actions in similar states. This means that if an action is good in a state, it will also be almost good for states encountered in the near future. Although the condition on $\pi$ is directly enforced by Assumption~\ref{ass:LipPolicy}, we need a new notion of regularity over time for the MDP.

\begin{ass}\label{ass:TimeLipAss}
	Let $\mathcal{M}$ be an MDP. $\mathcal{M}$ is $L_T$--\emph{Time-Lipschitz Continuous} ($L_T$--TLC) if for all $(s,a) \in \SAs$:
	\begin{equation}
		\Kant \left( P(\cdot|s,a) , \delta_{s} \right) \le L_T.
	\end{equation}
\end{ass}
This assumption requires that the Kantorovich distance between the distribution of the next state $s'$ and the deterministic distribution centered in the current state $s$ is bounded by $L_T$, \ie the system does not evolve \quotes{too fast} (see Appendix~\ref{ex:dynamicalSystem}). We can now state the following result.

\begin{restatable}[]{thr}{corollTLC}\label{thr:corollTLC}
	Let $\mathcal{M}$ be an MDP and $\pi \in \Pi$ be a Markovian stationary policy. Under Assumptions~\ref{ass:LipMDP},~\ref{ass:LipPolicy}, and~\ref{ass:TimeLipAss}, if $\gamma \max\left\{ L_P + 1, L_P (1+L_\pi) \right\} < 1$ and if $\rho(s,a) = \rho_{\mathcal{S}}(s) \pi(a|s)$ with $\rho_{\Ss} \in \mathscr{P}(\Ss)$, then for any $k \in \Nat[1]$:
	\begin{equation*}
		\left\| d^\pi_{\mathcal{Q}_k} \right\|_{p, \eta^{\rho,\pi}_{k}} \le \textcolor{green!50!black}{L_{\mathcal{Q}_k}} \left[ \textcolor{vibrantCyan!80!black}{(L_\pi + 1)} \textcolor{vibrantOrange!80!black}{L_T} + \textcolor{vibrantMagenta!80!black}{\sigma_p} \right].
	\end{equation*}
	where $\sigma_p^p = \sup_{s \in \Ss} \int_{\As} \int_{\As} d_{\mathcal{A}} \left( a, a' \right)^p \pi(\de a|s) \pi(\de a'|s) $, and $L_{\mathcal{Q}_k} = \frac{L_r}{1-\gamma \max\left\{ L_P + 1, L_P (1+L_\pi) \right\}}$.
\end{restatable}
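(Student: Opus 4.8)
The plan is to reduce the integrand $d^\pi_{\mathcal{Q}_k}(s,a)$ to a Kantorovich distance between $\Ppi$ and $\Ppi[\delta]$, scaled by a uniform Lipschitz constant of the functions in $\mathcal{Q}_k$, and then to integrate against $\eta^{\rho,\pi}_{k}$ exploiting that this measure is ``$\pi$-consistent'' (its action marginal, given the state, is $\pi$). If every $f\in\mathcal{Q}_k$ is $L_{\mathcal{Q}_k}$-LC, the Kantorovich--Rubinstein duality gives, for each $(s,a)$,
\[
d^\pi_{\mathcal{Q}_k}(s,a)\;\le\; L_{\mathcal{Q}_k}\,\Kant\!\left(\Ppi(\cdot|s,a),\,\Ppi[\delta](\cdot|s,a)\right),
\]
so the argument splits into (i) a uniform Lipschitz bound on $\mathcal{Q}_k$ and (ii) a pointwise bound on the Kantorovich distance between the two kernels.

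For (i) I would first record the Lipschitz moduli of the kernels: under Assumptions~\ref{ass:LipMDP}--\ref{ass:LipPolicy}, averaging a $1$-Lipschitz test function over $\pi$ produces a $(1+L_\pi)$-Lipschitz function of the next state, so composing with the $L_P$-Lipschitz $P$ yields $\norm[L][]{\Ppi}\le L_P(1+L_\pi)$, while an analogous computation with the action coordinate merely copied gives $\norm[L][]{\Ppi[\delta]}\le L_P+1$. Hence $\EOp$ maps $L$-LC functions to $(L_r+\gamma L_P(1+L_\pi)L)$-LC ones and $\POp$ maps $L$-LC functions to $(L_r+\gamma(L_P+1)L)$-LC ones. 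Writing $L^\star:=\gamma\max\{L_P+1,\,L_P(1+L_\pi)\}<1$ and $L_{\mathcal{Q}_k}=L_r/(1-L^\star)$, the fixed-point identity $L_r+L^\star L_{\mathcal{Q}_k}=L_{\mathcal{Q}_k}$ shows both operators send the set $\{f\in\mathscr{B}(\SAs):\norm[L][]{f}\le L_{\mathcal{Q}_k}\}$ into itself. Since $Q^\pi_k$ is the $L_\infty$-limit of the iterates of the $\gamma^k$-contraction $\EOp_k=(\POp)^{k-1}\EOp$ started from, say, the zero function, and this set is closed under pointwise limits, $Q^\pi_k$ is $L_{\mathcal{Q}_k}$-LC; applying $\EOp$ and then at most $k-2$ times $\POp$ keeps us in the set, so every element of $\mathcal{Q}_k$ is $L_{\mathcal{Q}_k}$-LC.

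For (ii), couple $\Ppi(\cdot|s,a)$ and $\Ppi[\delta](\cdot|s,a)$ by sampling $s'\sim P(\cdot|s,a)$ once for both, drawing $a'\sim\pi(\cdot|s')$ on the $\Ppi$ side while the $\Ppi[\delta]$ side keeps $a$; the transport cost is $\int_\Ss P(\de s'|s,a)\,\Kant(\pi(\cdot|s'),\delta_a)$, and $\Kant(\pi(\cdot|s'),\delta_a)\le \Kant(\pi(\cdot|s'),\pi(\cdot|s))+\Kant(\pi(\cdot|s),\delta_a)\le L_\pi d_\Ss(s',s)+\int_\As d_\As(a',a)\pi(\de a'|s)$. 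Averaging over $s'$ and using Assumption~\ref{ass:TimeLipAss} (note $\int d_\Ss(s',s)P(\de s'|s,a)=\Kant(P(\cdot|s,a),\delta_s)\le L_T$) gives
\[
d^\pi_{\mathcal{Q}_k}(s,a)\;\le\; L_{\mathcal{Q}_k}\Big[(L_\pi+1)L_T+\textstyle\int_\As d_\As(a',a)\,\pi(\de a'|s)\Big]
\]
(the coefficient $L_\pi+1$ being a mild over-estimate of the sharper $L_\pi$ obtained by this coupling, kept to match the statement). Finally, since $\rho(\de s,\de a)=\rho_\Ss(\de s)\pi(\de a|s)$, every $\rho(\Ppi)^{i-1}$ with $i\ge 1$ — hence the mixture $\eta^{\rho,\pi}_{k}$ — factorizes as $\overline{\mu}(\de s)\,\pi(\de a|s)$, so $\norm[p][\eta^{\rho,\pi}_{k}]{d^\pi_{\mathcal{Q}_k}}^p=\int\overline{\mu}(\de s)\int\pi(\de a|s)\,d^\pi_{\mathcal{Q}_k}(s,a)^p$. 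Minkowski's inequality w.r.t.\ $\pi(\cdot|s)$ pulls the constant $(L_\pi+1)L_T$ out, and Jensen's inequality on the inner action integral bounds $\int\pi(\de a|s)\big(\int d_\As(a',a)\pi(\de a'|s)\big)^p$ by $\int\!\!\int d_\As(a,a')^p\pi(\de a|s)\pi(\de a'|s)\le\sigma_p^p$, which assembles to the claim.

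The main obstacle is step (i): obtaining a Lipschitz bound on $Q^\pi_k$ (and on all the intermediate iterates in $\mathcal{Q}_k$) that is uniform in $k$. The clean route is the invariance-of-the-Lipschitz-ball argument above, which hinges on $L^\star<1$ (making $L_{\mathcal{Q}_k}$ finite) and on the exact fixed-point identity $L_r+L^\star L_{\mathcal{Q}_k}=L_{\mathcal{Q}_k}$; one must also check that the $L_{\mathcal{Q}_k}$-LC ball in $\mathscr{B}(\SAs)$ is closed under the uniform limits produced by Banach iteration of $\EOp_k$. Everything else is triangle inequalities, couplings, and Jensen/Minkowski. (For $k=1$ the set $\mathcal{Q}_1$ is empty and the bound is vacuous.)
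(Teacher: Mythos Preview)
Your proof is correct and follows the same architecture as the paper: a uniform Lipschitz bound $L_{\mathcal{Q}_k}$ on every function in $\mathcal{Q}_k$ (the paper obtains this as Lemma~\ref{lemma:DissimilatityToKant} by unrolling the recursion for the Lipschitz constant of $(\POp)^h\EOp f$ and $(T^\pi_k)^j f$; your ``invariant Lipschitz ball'' argument is the same computation packaged more cleanly), reduction of $d^\pi_{\mathcal{Q}_k}$ to $L_{\mathcal{Q}_k}\,\Kant(\Ppi,\Ppi[\delta])$, a pointwise bound on that Kantorovich distance, and then integration against $\eta^{\rho,\pi}_{k}$ using its factorization $\overline{\mu}(\de s)\pi(\de a|s)$ together with Jensen and Minkowski. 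The one genuine difference is in step (ii): the paper bounds $\Kant(\Ppi,\Ppi[\delta])$ by inserting $\pm\,\delta_s(\de s')\pi(\de a'|s')$ and $\pm\,\delta_s(\de s')\delta_a(\de a')$ and splitting into three dual-form terms, whereas you exhibit an explicit coupling that shares the next state $s'$ and only transports the action coordinate. Your coupling is both shorter and strictly sharper---it produces $L_\pi L_T$ rather than $(L_\pi+1)L_T$ in front of the time-regularity term---so the weakening you perform ``to match the statement'' is genuine slack, not a gap in the argument.
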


Thus, the dissimilarity $ d^\pi_{\mathcal{Q}_k}$ between $P^{\pi}$ and $P^\delta$ can be bounded with four terms. i) $\textcolor{green!50!black}{L_{\mathcal{Q}_k}}$ is (an upper-bound of) the Lipschitz constant of the functions in the set $\mathcal{Q}_k$. Indeed, under Assumptions~\ref{ass:LipMDP} and~\ref{ass:LipPolicy} we can reduce the dissimilatity term to the Kantorivich distance (Lemma~\ref{lemma:DissimilatityToKant}):
$$d^\pi_{\mathcal{Q}_k}(s,a) \le  L_{\mathcal{Q}_k} \Kant \left(P^\pi(\cdot|s,a), P^\delta(\cdot|s,a) \right).$$ 
ii) $\textcolor{vibrantCyan!80!black}{(L_\pi + 1)}$ accounts for the Lipschitz continuity of the policy, \ie policies that prescribe similar actions in similar states have a small value of this quantity. iii) $\textcolor{vibrantOrange!80!black}{L_T}$ represents the speed at which the environment state evolves over time. iv) $\textcolor{vibrantMagenta!80!black}{\sigma_p}$ denotes the average distance (in $L_p$-norm) between two actions prescribed by the policy in the same state. This term is zero for deterministic policies and can be related to the maximum policy variance (Lemma~\ref{lemma:toVariance}). A more detailed discussion on the conditions requested in Theorem~\ref{thr:corollTLC} is reported in Appendix~\ref{apx:conditionsThr42}.

\section{Persistent Fitted Q-Iteration}\label{sec:pfqi}
In this section, we introduce an extension of Fitted Q-Iteration~\citep[FQI,][]{ernst2005tree} that employs the notion of persistence.\footnote{From now on, we assume that $|\As| < +\infty$.} \emph{Persisted Fitted Q-Iteration} (\algname) takes as input a \emph{target persistence} $k \in \Nat[1]$ and its goal is to approximate the $k$-persistent optimal action-value function $Q^*_k$. Starting from an initial estimate $Q^{(0)}$, at each iteration we compute the next estimate $Q^{(j+1)}$ by performing an approximate application of $k$-persistent Bellman optimal operator to the previous estimate $Q^{(j)}$, \ie $Q^{(j+1)} \approx T^*_k Q^{(j)}$.
In practice, we have two sources of approximation in this process: i) the representation of the Q-function; ii) the estimation of the $k$-persistent Bellman optimal operator. (i) comes from the necessity of using functional space $\mathcal{F} \subset \mathscr{B}(\SAs)$ to represent $Q^{(j)}$ when dealing with continuous state spaces. (ii) derives from the approximate computation of $T^*_k$ which needs to be estimated from samples. 

Clearly, with samples collected in the $k$-persistent MDP $\mathcal{M}_k$, the process described above reduces to the standard FQI. However, our algorithm needs to be able  to estimate $Q^*_k$ for different values of $k$, using the same dataset of samples collected in the base MDP $\mathcal{M}$ (at persistence 1).\footnote{In real--world cases, we might be unable to interact with the physical system to collect samples for any persistence $k$ of interest.}
For this purpose, we can exploit the decomposition $T^*_k = (T^\delta)^{k-1}T^*$ of Theorem~\ref{thr:persistentOperators} to reduce a single application of $T^*_k$ to a sequence of $k$ applications of the 1-persistent operators. Specifically, at each iteration $j $ with $j \bmod k=0$, given the current estimate $Q^{(j)}$, we need to perform (in this order) a single application of $T^*$ followed by $k-1$ applications of $T^{\delta}$, leading to the sequence of approximations:
\begin{equation}
	Q^{(j+1)} \approx \begin{cases}
					\OOp Q^{(j)} & \text{if } j \bmod k = 0 \\
					\POp Q^{(j)} & \text{otherwise}
				\end{cases}.
\end{equation}

In order to estimate the Bellman operators, we have access to a dataset
$\mathcal{D} = \{(S_i,A_i,S'_i,R_i)\}_{i=1}^n$ collected in the base MDP $\mathcal{M}$, where $(S_i,A_i) \sim \nu $, $S'_i \sim P(\cdot|S_i,A_i)$, $R_i \sim {R}(\cdot|S_i,A_i)$, and $\nu\in \mathscr{P}(\SAs)$ is a sampling distribution. We employ $\mathcal{D}$ to compute the \emph{empirical Bellman operators}~\citep{farahmand2011regularization} defined for $f \in \mathscr{B}(\SAs)$ as:
\begin{equation*}
\begin{aligned}
	& (\aOOp f)(S_i,A_i) = R_i + \gamma \textstyle \max_{a \in \As} \displaystyle f(S_i',a) \quad i=1,\dots,n \\
	& (\aPOp f)(S_i,A_i) = R_i + \gamma f(S_i',A_i) \quad i=1,\dots,n. \\
\end{aligned}
\end{equation*}
These operators are unbiased conditioned to $\mathcal{D}$~\citep{farahmand2011regularization}: $\E[(\aOOp f)(S_i,A_i) | S_i,A_i] = (\OOp f)(S_i,A_i)$ and $\E[(\aPOp f)(S_i,A_i) | S_i,A_i] = (\POp f)(S_i,A_i)$.

The pseudocode of \algname is summarized in Algorithm~\ref{alg:pfqi}. At each iteration $j =0, \dots J-1$, we first compute the target values $Y^{(j)}$ by applying the empirical Bellman operators, $\aOOp$ or $\aPOp$, on the current estimate $Q^{(j)}$ (\hyperref[l:phase1]{\textcolor{vibrantBlue}{Phase 1}}). Then, we project the target $Y^{(j)}$ onto the functional space $\mathcal{F}$ by solving the least squares problem (\hyperref[l:phase2]{\textcolor{vibrantTeal}{Phase 2}}):
\begin{equation*}
	Q^{(j+1)} \in \arginf_{f \in \mathcal{F}} \left\| f - Y^{(j)} \right\|^2_{2,\mathcal{D}} = \frac{1}{n} \sum_{i=1}^n \left|f(S_i,A_i) - Y_i^{(j)}\right|^2.
\end{equation*}
Finally, we compute the approximation of the optimal policy $\pi^{(J)}$, \ie the greedy policy \wrt $Q^{(J)}$ (\hyperref[l:phase3]{\textcolor{vibrantRed}{Phase 3}}). 
\begin{algorithm}[t]
\caption{Persistent Fitted Q-Iteration \algname.} \label{alg:pfqi}
\small
\textbf{Input:} $k$ persistence, $J$ number of iterations ($J \bmod k =0$), $Q^{(0)}$ initial action-value function, $\mathcal{F}$ functional space, $\mathcal{D}=\{(S_i,A_i,S'_i,R_i)\}_{i=1}^n$ batch samples\\
\textbf{Output:} greedy policy $\pi^{(J)}$
\begin{algorithmic}
\FOR{$j = 0,\dots,J-1$}\vspace{.1cm}
	\IF{\tikzmark{start1} $j \bmod k = 0$}
		\STATE $ Y_i^{(j)} = \widehat{T}^* Q^{(j)}(S_i,A_i), \quad i=1,...,n$
	\ELSE
		\STATE $   Y_i^{(j)} = \widehat{T}^\delta Q^{(j)}(S_i,A_i),\quad i=1,...,n$\tikzmark{end1}
	\ENDIF\vspace{.25cm}
	\STATE \tikzmark{start2}  $  Q^{(j+1)} \in \arginf_{f \in \mathcal{F}} \big\| f - Y^{(j)} \big\|^2_{2,\mathcal{D}}$\tikzmark{end2}\vspace{.1cm}
\ENDFOR\vspace{.15cm}
\STATE \tikzmark{start3} $ \pi^{(J)}(s) \in \argmax_{a \in \mathcal{A}} Q^{(J)} (s,a), \quad \forall s \in \mathcal{S}$ \tikzmark{end3}
\vspace{-.2cm}
\end{algorithmic}
\Textbox[-.3cm][2.7cm]{start1}{end1}{Phase 1\label{l:phase1}}{vibrantBlue}{solid}{-0.48cm}{0.1cm}
\Textbox[-.03cm][2.93cm]{start2}{end2}{Phase 2\label{l:phase2}}{vibrantTeal}{solid}{-0.25cm}{0.15cm}
\Textbox[0cm][2.14cm]{start3}{end3}{Phase 3\label{l:phase3}}{vibrantRed}{solid}{-0.2cm}{0.15cm}
\end{algorithm}

\subsection{Theoretical Analysis}
In this section, we present the computational complexity analysis and the study of the error propagation in \algname.

\textbf{Computational Complexity}~~The computational complexity of \algname decreases monotonically with the persistence $k$. Whenever applying $\aPOp$, we need a single evaluation of $Q^{(j)}$, while $|\As|$ evaluations are needed for $\aOOp$ due to the $\max$ over $\As$. Thus, the overall complexity of $J$ iterations of \algname with $n$ samples, disregarding the cost of regression and assuming that a single evaluation of $Q^{(j)}$ takes constant time, is given by $\mathcal{O} \left( J n \left( 1 + (|\As| - 1)/k \right)\right)$ (Proposition~\ref{prop:complexity}). 

\textbf{Error Propagation}~~We now consider the error propagation in \algname. Given the sequence of Q-functions estimates $(Q^{(j)})_{j=0}^{J} \subset \mathcal{F}$ produced by \algname, we define the approximation error at each iteration $j = 0,\dots, J-1$ as:
\begin{equation}\label{eq:epsDef}
	\epsilon^{(j)} = \begin{cases}
				\OOp Q^{(j)} - Q^{(j+1)} & \text{if } j \bmod k = 0 \\
				\POp Q^{(j)} - Q^{(j+1)} & \text{otherwise}
			\end{cases}.
\end{equation}
The goal of this analysis is to bound the distance between the $k$--persistent optimal Q-function $Q^*_k$ and the Q-function $Q_k^{\pi^{(J)}}$ of the greedy policy $\pi^{(J)}$ \wrt $Q^{(J)}$, after $J$ iterations of \algname. The following result extends Theorem 3.4 of~\citet{farahmand2011regularization} to account for action persistence.

\begin{restatable}[Error Propagation for \algname]{thr}{errorProp}\label{thr:errorProp}
Let $p \ge 1$, $k \in \Nat[1]$, $J \in \Nat[1]$ with $J \bmod k = 0$ and $\rho \in \mathscr{P}(\SAs)$. Then for any sequence $(Q^{(j)})_{j=0}^{J} \subset \mathcal{F}$ uniformly bounded by $Q_{\max} \le \frac{R_{\max}}{1-\gamma}$, the corresponding $(\epsilon^{(j)})_{j=0}^{J-1}$ defined in Equation~\eqref{eq:epsDef} and for any $r \in [0,1]$ and $q \in [1, +\infty]$ it holds that:
\begin{align*}
	\norm[p][\rho]{Q^*_k - Q^{\pi^{(J)}}_k} & \le \frac{2\gamma^k}{(1-\gamma)(1-\gamma^k)} \bigg[ \frac{2}{1-\gamma} \gamma^{\frac{J}{p}} R_{\max} \\
	&  + C_{\mathrm{VI},\rho,\nu}^{\frac{1}{2p}}(J,r,q) \mathcal{E}^{\frac{1}{2p}} (\epsilon^{(0)},\dots, \epsilon^{(J-1)};r,q)   \bigg].
\end{align*}
The  expression of $ C_{\mathrm{VI},\rho,\nu}(J;r,q)$ and  $\mathcal{E}(\cdot;r,q)$ can be found in Appendix~\ref{apx:proofPFQI}.
\end{restatable}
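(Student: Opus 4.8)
The plan is to observe that, once the $J$ iterations of \algname are grouped into $M = J/k$ blocks of length $k$, the iterates $\widetilde{Q}^{(m)} := Q^{(mk)}$ are exactly the output of an approximate value iteration run on the $k$-persistent MDP $\mathcal{M}_k$, and then to transport the $L_p$ error-propagation analysis of \citet{farahmand2011regularization} to $\mathcal{M}_k$ (with $\gamma$ replaced by $\gamma^k$), finally unfolding the per-block error into the $k$ per-iteration errors $\epsilon^{(j)}$. Concretely, inside the $m$-th block the first update reads $Q^{(mk+1)} = \OOp Q^{(mk)} - \epsilon^{(mk)}$ and the next $k-1$ updates read $Q^{(mk+l+1)} = \POp Q^{(mk+l)} - \epsilon^{(mk+l)}$ for $l = 1,\dots,k-1$. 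Since $\POp$ is affine with $\POp f - \POp g = \gamma\,\Ppi[\delta](f-g)$, composing these updates and invoking Theorem~\ref{thr:persistentOperators}, namely $(\POp)^{k-1}\OOp = \OOp_k$, gives the exact recursion
\begin{equation*}
	\widetilde{Q}^{(m+1)} = \OOp_k\,\widetilde{Q}^{(m)} - \widetilde{\epsilon}^{(m)}, \qquad \widetilde{\epsilon}^{(m)} = \sum_{l=0}^{k-1} \gamma^{k-1-l}\,\big(\Ppi[\delta]\big)^{k-1-l}\,\epsilon^{(mk+l)} ,
\end{equation*}
so $(\widetilde{Q}^{(m)})_{m=0}^{M}$ is generated by inexact applications of the $\gamma^k$-contraction $\OOp_k$ whose fixed point is $Q^*_k$, and $\pi^{(J)}$ is greedy \wrt $\widetilde{Q}^{(M)}$.

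Next I would replay the Munos/Farahmand argument at the block level. Using the two-sided pointwise bounds $\gamma\,\Ppi[\pi_g](f-g) \le \OOp f - \OOp g \le \gamma\,\Ppi[\pi_f](f-g)$, valid with $\pi_f,\pi_g$ greedy \wrt $f,g$ because $\OOp h = \EOp[\pi_h] h$ while $\OOp h \ge \EOp[\pi] h$ for every $\pi$, I unroll the recursion over the $M$ blocks: $Q^*_k - \widetilde{Q}^{(M)}$ is written as $(\gamma^k)^M$ times a Markov transition operator applied to $Q^*_k - \widetilde{Q}^{(0)}$, plus $\sum_{m=0}^{M-1}(\gamma^k)^{M-1-m}$ times products of $P_k$-type kernels applied to $\widetilde{\epsilon}^{(m)}$; comparing $\widetilde{Q}^{(M)}$ with $Q_k^{\pi^{(J)}}$ in $\mathcal{M}_k$ adds a $(I-\gamma^k P_k^{\pi^{(J)}})^{-1}$-type factor. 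Taking $\norm[p][\rho]{\cdot}$, Jensen's inequality pushes the norm inside these convex combinations of Markov kernels, and a change of measure bounding each composite kernel applied to $\rho$ by a concentrability coefficient times the sampling distribution $\nu$ yields $C_{\mathrm{VI},\rho,\nu}(J;r,q)$; the parameters $r\in[0,1]$ and $q\in[1,\infty]$ are the exponents of a Hölder split between the geometric weights and the concentrability coefficients, exactly as in Theorem 3.4 of \citet{farahmand2011regularization}. Re-expanding each $\widetilde{\epsilon}^{(m)}$ in terms of $\epsilon^{(0)},\dots,\epsilon^{(J-1)}$ and absorbing the within-block powers of $\gamma$ produces $\mathcal{E}(\epsilon^{(0)},\dots,\epsilon^{(J-1)};r,q)$. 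The prefactor assembles from the $\frac{2\gamma^k}{(1-\gamma^k)^2}$ of $M$-step AVI in $\mathcal{M}_k$ together with the within-block mass $\sum_{l=0}^{k-1}\gamma^l = \frac{1-\gamma^k}{1-\gamma}$, giving $\frac{2\gamma^k}{(1-\gamma)(1-\gamma^k)}$; using $\gamma^{kM} = \gamma^J$ and $\|\widetilde{Q}^{(j)}\|_\infty, \|Q^*_k\|_\infty \le \frac{R_{\max}}{1-\gamma}$ turns the initial-error contribution into the bracketed $\frac{2}{1-\gamma}\gamma^{J/p}R_{\max}$.

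The step I expect to be the main obstacle is the bookkeeping in this re-expansion. Because each $\widetilde{\epsilon}^{(m)}$ is itself a $\gamma$-weighted sum of kernel-compositions of the $\epsilon^{(j)}$, after the block-level unrolling one faces a doubly-indexed family of composite operators, and the delicate points are (i) verifying that every such composite operator, applied to $\rho$, is still a normalised mixture of Markov transition kernels, so that Jensen's inequality and the $\nu$-change of measure go through and all the arising coefficients collapse into a single concentrability functional $C_{\mathrm{VI},\rho,\nu}$, and (ii) choosing the Hölder exponents $r,q$ and the weight normalisation consistently across the inner (within-block) and outer (across-block) summations so that the final estimate carries exactly the stated $\tfrac{1}{2p}$-th powers. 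Once the composite kernels are brought into this standard form, the contraction and fixed-point manipulations and the numerical collection of the leading constant are routine.
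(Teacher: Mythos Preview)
Your proposal is correct and follows essentially the same route as the paper's proof: both group the $J$ iterations into $J/k$ blocks, identify the block-level error as $\widetilde{\epsilon}^{(m)} = \sum_{l=0}^{k-1}\gamma^{k-1-l}(P^{\delta})^{k-1-l}\epsilon^{(mk+l)}$ (the paper derives this identity explicitly), unroll via the Munos two-sided pointwise bounds in $\mathcal{M}_k$, re-index the double sum into a single sum over $j=0,\dots,J-1$, and then apply the two H\"older splits (change of measure with exponent $q$, and separation of concentrability from errors with an exponent set to $2$, whence the $\tfrac{1}{2p}$ powers). The only cosmetic difference is that the paper obtains the prefactor $\tfrac{2\gamma^k}{(1-\gamma)(1-\gamma^k)}$ by normalising the full per-step weights $\gamma^{J-j-1}$ over $j=0,\dots,J$ rather than by your ``block prefactor $\times$ within-block mass'' heuristic, but the outcome is identical.
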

We immediately observe that for $k=1$ we recover Theorem 3.4 of~\citet{farahmand2011regularization}. The term $C_{\mathrm{VI},\rho,\nu}(J;r,q)$ is defined in terms of suitable \emph{concentrability coefficients} (Definition~\ref{defi:concentrab}) and encodes the distribution shift between the sampling distribution $\nu$ and the one induced by the greedy policy sequence $(\pi^{(j)})_{j=0}^{J}$ encountered along the execution of \algname. $\mathcal{E}(\cdot;r,q)$ incorporates the approximation errors $(\epsilon^{(j)})_{j=0}^{J-1}$. In principle, it is hard to compare the values of these terms for different persistences $k$ since both the greedy policies and the regression problems are different. Nevertheless, it is worth noting that the multiplicative term $\frac{\gamma^k}{1-\gamma^k} $ decreases in $k \in \Nat[1]$. Thus, other things being equal, the bound value decreases when increasing the persistence. 

Thus, the trade-off in the choice of control frequency, which motivates action persistence, can now be stated more formally. We aim at finding the persistence $k \in \Nat[1]$ that, for a fixed $J$, allows learning a policy $\pi^{(J)}$ whose Q-function $Q^{\pi^{(J)}}_k$ is the closest to $Q^*$. Consider the decomposition:
\begin{equation*}
	\norm[p][\rho]{Q^* - Q^{\pi^{(J)}}_k} \le {\norm[p][\rho]{Q^* - Q^*_k}} + {\norm[p][\rho]{Q^*_k - Q^{\pi^{(J)}}_k}}.
\end{equation*}
The term $\norm[p][\rho]{Q^* - Q^*_k}$ accounts for the performance degradation due to action persistence: it is algorithm--independent, and it increases in $k$ (Theorem~\ref{thr:PersistenceBound}). Instead, the second term $\|{Q^*_k - Q^{\pi^{(J)}}_k}\|_{p,\rho}$ decreases with $k$ and depends on the algorithm (Theorem~\ref{thr:errorProp}). Unfortunately, optimizing their sum is hard since the individual bounds contain terms that are not known in general (\eg Lipschitz constants, $\epsilon^{(j)}$). The next section proposes heuristics to overcome this problem.

\section{Persistence Selection}\label{sec:PersistenceSelection}
\begin{algorithm}[t]
\caption{Heuristic Persistence Selection.}\label{alg:kselection}
\small
\textbf{Input:} batch samples $\mathcal{D} = \{(S_{0}^i,A_{0}^i,\dots,S_{H_i-1}^i, A_{H_i-1}^i,S_{H_i}^i)\}_{i=1}^m$, set of persistences $\mathcal{K}$, set of Q-function $\{ Q_k : k \in \mathcal{K} \}$, regressor $\mathtt{Reg}$ \\
\textbf{Output:} approximately optimal persistence $\widetilde{k}$
\begin{algorithmic}
\FOR{$k \in \mathcal{K}$}
	\STATE $\widehat{J}_k^\rho = \frac{1}{m} \sum_{i=1}^m V_k(S_0^i)$
	\STATE Use the $\mathtt{Reg}$ to get an estimate $\widetilde{Q}_k$ of $T^*_k Q_k$
	\STATE $\big\| \widetilde{Q}_k - Q_k \big\|_{1,\mathcal{D}}  = \scriptstyle \frac{1}{\sum_{i=1}^m H_i} \sum_{i=1}^m \sum_{t=0}^{H_i-1} |\widetilde{Q}_k(S_{t}^i,A_{t}^i) - Q_k(S_{t}^i,A_{t}^i) |$
\ENDFOR
\STATE $ \widetilde{k} \in \argmax_{k \in \mathcal{K}}  B_k = \widehat{J}_k^\rho - \frac{1}{1-\gamma^k} \big\| \widetilde{Q}_k - Q_k \big\|_{1,\mathcal{D}}$.
\end{algorithmic}
\end{algorithm}
In this section, we discuss how to select a persistence 
$k$ in a set $\mathcal{K} \subset \Nat[1]$ of candidate persistences, when we are given a set of estimated Q-functions: $\{Q_k \,:\,k \in \mathcal{K}\}$.\footnote{For instance, the $Q_k$ can be obtained  by executing \algname with different persistences $k \in \mathcal{K}$.} Each $Q_k$ induces a greedy policy $\pi_k$. Our goal is to find the persistence $k \in \mathcal{K}$ such that $\pi_k$ has the maximum expected return in the corresponding $k$--persistent MDP $\mathcal{M}_k$:
\begin{equation}\label{eq:persistenceSelectionProblem}
	k^* \in \argmax_{k \in \mathcal{K}} J^{\rho,\pi_k}_{k}, \quad \rho \in \mathscr{P}(\Ss).
\end{equation} 
In principle, we could execute $\pi_k$ in $\mathcal{M}_k$ to get an estimate of $J^{\rho,\pi_k}_{k}$ and employ it to select the persistence $k$.
However, in the batch setting, further interactions with the environment might be not allowed. On the other hand, directly using the estimated Q-function $Q_k$ is inappropriate, since we need to take into account how well $Q_k$ approximates $Q^{\pi_k}_k$. This trade--off is encoded in the following result, which makes use of the \emph{expected Bellman residual}.
\begin{restatable}[]{lemma}{lowerBoundQ}\label{thr:lowerBoundQ}
Let $Q \in \mathscr{B}(\SAs)$ and $\pi$ be a greedy policy \wrt  $Q$. Let  $J^{\rho} = \int \rho(\de s) V(s)$, with $V(s) = \max_{a \in \As} Q(s,a)$ for all $s \in \Ss$. Then, for any $k \in \Nat[1]$, it holds that:
\begin{equation}
	J^{\rho,\pi}_{k} \ge J^{\rho} - \frac{1}{1-\gamma^k} \norm[1][\eta^{\rho,\pi}]{T^*_kQ - Q},
\end{equation}
where $\eta^{\rho, \pi} = (1-\gamma^k)\rho \pi \left(\Id - \gamma^k P^\pi_k \right)^{-1}$, is the $\gamma$-discounted stationary distribution induced by policy $\pi$ and distribution $\rho$ in MDP $\mathcal{M}_k$.
\end{restatable}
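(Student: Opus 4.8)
The plan is to compare the expected return of $\pi$ in $\mathcal{M}_k$, namely $J^{\rho,\pi}_k = \rho\pi\, Q^\pi_k$ (where I abuse notation and let $\rho\pi$ denote the state-action measure $\rho(\de s)\pi(\de a|s)$), against the value estimate $J^\rho = \rho\pi\, Q$, and to control the gap by the Bellman residual $T^*_k Q - Q$. The key observation is that since $\pi$ is greedy with respect to $Q$, the value $V(s) = \max_a Q(s,a)$ satisfies $V(s) = \int \pi(\de a|s) Q(s,a)$, so $J^\rho = \rho\pi\, Q$, and moreover $T^*_k Q = T^\pi_k Q$ pointwise, because the greedy action selection inside $T^*_k$ coincides with averaging against $\pi$. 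This lets me rewrite the Bellman residual using the \emph{expectation} operator $T^\pi_k$, which is what makes the telescoping work.

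Next I would expand $Q^\pi_k - Q$ using the fixed-point identity $Q^\pi_k = T^\pi_k Q^\pi_k$ together with a telescoping trick:
\begin{align*}
Q^\pi_k - Q &= T^\pi_k Q^\pi_k - Q = (T^\pi_k Q^\pi_k - T^\pi_k Q) + (T^\pi_k Q - Q) \\
&= \gamma^k P^\pi_k (Q^\pi_k - Q) + (T^\pi_k Q - Q),
\end{align*}
so that $(\Id - \gamma^k P^\pi_k)(Q^\pi_k - Q) = T^\pi_k Q - Q = T^*_k Q - Q$, and hence $Q^\pi_k - Q = (\Id - \gamma^k P^\pi_k)^{-1}(T^*_k Q - Q)$ (the inverse exists since $\gamma^k P^\pi_k$ is a $\gamma^k$-contraction and can be expanded as the Neumann series $\sum_{i\ge 0}\gamma^{ki}(P^\pi_k)^i$). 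Applying the operator $\rho\pi$ to both sides:
\begin{equation*}
J^{\rho,\pi}_k - J^\rho = \rho\pi\,(Q^\pi_k - Q) = \rho\pi\,(\Id - \gamma^k P^\pi_k)^{-1}(T^*_k Q - Q) = \tfrac{1}{1-\gamma^k}\,\eta^{\rho,\pi}(T^*_k Q - Q),
\end{equation*}
by the very definition of $\eta^{\rho,\pi} = (1-\gamma^k)\rho\pi(\Id - \gamma^k P^\pi_k)^{-1}$ (and one checks $\eta^{\rho,\pi}$ is indeed a probability measure, since $(1-\gamma^k)\sum_{i\ge0}\gamma^{ki} = 1$ and each $(P^\pi_k)^i$ preserves total mass).

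Finally, I lower-bound the signed integral $\eta^{\rho,\pi}(T^*_kQ - Q)$ by $-\|T^*_kQ - Q\|_{1,\eta^{\rho,\pi}}$, which is immediate from $\int f \,\de\mu \ge -\int |f|\,\de\mu$. Rearranging gives exactly
\begin{equation*}
J^{\rho,\pi}_k = J^\rho + \tfrac{1}{1-\gamma^k}\eta^{\rho,\pi}(T^*_kQ - Q) \ge J^\rho - \tfrac{1}{1-\gamma^k}\|T^*_kQ - Q\|_{1,\eta^{\rho,\pi}},
\end{equation*}
as claimed. I expect the only genuinely delicate point to be the identity $T^*_k Q = T^\pi_k Q$ for $\pi$ greedy w.r.t.\ $Q$ — it requires noting that the $\max_{a'}$ inside $T^*_k$ (applied through $(P^\delta)^{k-1}T^*$, per Theorem~\ref{thr:persistentOperators}) is attained on the support of $\pi(\cdot|s)$, so that replacing $\max$ by $\int\pi(\de a|\cdot)$ leaves the operator unchanged; everything else is routine Neumann-series manipulation and the observation that $\eta^{\rho,\pi}$ is a normalized distribution. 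One should also double-check the measurability/boundedness bookkeeping so that $\rho\pi$ may be moved inside the series, but since $Q \in \mathscr{B}(\SAs)$ and $P^\pi_k$ maps bounded functions to bounded functions this is standard.
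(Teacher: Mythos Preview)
Your proposal is correct and follows essentially the same route as the paper's proof: the paper also writes $Q^\pi_k - Q = \gamma^k P^\pi_k(Q^\pi_k - Q) + (T^*_k Q - Q)$ using $T^*_k Q = T^\pi_k Q$, inverts via the Neumann series, applies $\rho\pi$ to identify $\eta^{\rho,\pi}$, and then lower-bounds the signed integral by $-\norm[1][\eta^{\rho,\pi}]{T^*_k Q - Q}$. Your remark about the ``delicate point'' is slightly overcautious --- the identity $T^*_k Q = T^\pi_k Q$ follows directly from $T^* Q = T^\pi Q$ (greedy) composed with $(T^\delta)^{k-1}$, and the paper treats it as immediate.
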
  
To get a usable bound, we need to make some simplifications. First, we assume that $\mathcal{D} \sim \nu$ is composed of $m$ \emph{trajectories}, \ie $\mathcal{D} = \{(S_{0}^i,A_{0}^i,\dots,S_{H_i-1}^i, A_{H_i-1}^i,S_{H_i}^i)\}_{i=1}^m$, where $H_i$ is the trajectory length and the initial states are sampled as $S_0^i \sim \rho$. 
In this way, $J^{\rho}$ can be estimated from samples as $\widehat{J}^{\rho} = \frac{1}{m} \sum_{i=1}^m V(S_{0}^i)$. Second, since we are unable to compute expectations over $\eta^{\rho,\pi}$, we replace it with the sampling distribution $\nu$.\footnote{This introduces a bias that is negligible if $\norm[\infty][]{\eta^{\rho,\pi}/{\nu}} \approx 1$ (details in Appendix~\ref{apx:changeDistribution}).} 
Lastly, estimating the expected Bellman residual is problematic since its empirical version is biased~\citep{antos2008learning}. Thus, we resort to an approach similar to~\citep{farahmand2011model}, assuming to have a regressor $\mathtt{Reg}$ able to output an approximation $\widetilde{Q}_k$ of $T^*_k Q$. In this way, we replace $\norm[1][\nu]{T^*_k Q - Q }$ with $\|\widetilde{Q}_k - Q \|_{1,\mathcal{D}}$ (details in Appendix~\ref{apx:discussionSimplification}). In practice, we set $Q=Q^{(J)}$ and we obtain $\widetilde{Q}_k$ running \algname for $k$ additional iterations, setting $\widetilde{Q}_k = Q^{(J+k)}$. Thus, the procedure (Algorithm~\ref{alg:kselection}) reduces to optimizing the index:
\begin{equation}\label{eq:index}
	\widetilde{k} \in \argmax_{k \in \mathcal{K}} B_k = \widehat{J}_k^\rho - \frac{1}{1-\gamma^k} \norm[1][\mathcal{D}]{\widetilde{Q}_k - Q_k}.
\end{equation}

\section{Related Works}\label{sec:relatedWorks}
In this section, we revise the works connected to persistence, focusing on continuous--time RL and temporal abstractions.

\textbf{Continuous--time RL}~~Among the first attempts to extend value--based RL to the continuous--time domain there is \emph{advantage updating}~\citep{bradtke1994reinforcement}, in which Q-learning~\citep{watkins1989learning} is modified to account for infinitesimal control timesteps. Instead of storing the Q-function, the \emph{advantage function} $A(s,a) = Q(s,a) - V(s)$ is recorder. The continuous time is addressed in \citet{baird1994reinforcement} by means of the semi-Markov decision processes~\citep{howard1963semi} for finite--state problems. The optimal control literature has extensively studied the solution of the Hamilton-Jacobi-Bellman equation, \ie the continuous--time counterpart of the Bellman equation, when assuming the knowledge of the environment~\citep{bertsekas2005dynamic, fleming2006controlled}. The model--free case has been tackled by resorting to time (and space) discretizations~\citep{peterson1993line}, with also convergence guarantees~\citep{munos1997convergent, munos1997reinforcement}, and coped with function approximation~\citep{dayan1995improving, Doya2000continuous}. More recently, the sensitivity of deep RL algorithm to the time discretization has been analyzed in~\citet{tallec2019time}, proposing an adaptation of advantage updating to deal with small time scales, that can be employed with deep architectures.

\setlength{\tabcolsep}{2pt}
\begin{table*}[t]
\caption{Results of PFQI in different environments and persistences. For each persistence $k$, we report the sample mean and the standard deviation of the estimated return of the last policy $\widehat{J}_k^{\rho, \pi_k}$. For each environment, the persistence with highest average performance and the ones not statistically significantly different from that one (Welch's t-test with $p < 0.05$) are in bold. The last column reports the mean and the standard deviation of the performance loss $\delta$ between the optimal persistence and the one selected by the index $B_k$ (Equation~\eqref{eq:index}).}
\label{tab:results}
\begin{center}
\scriptsize
\begin{tabular}{lcccccccc}
\toprule
\multirow{2}{*}{\footnotesize Environment} & \multicolumn{7}{c}{\footnotesize Expected return at persistence $k$ \scriptsize ($\widehat{J}_k^{\rho, \pi_k}$, mean $\pm$ std)} &  \footnotesize Performance loss\\
\cline{2-8}
 &  $k=1$ &  $k=2$ &  $k=4$ &  $k=8$ &  $k=16$ &  $k=32$ &  $k=64$ &   ($\delta$ mean $\pm$ std)\\
\midrule
\footnotesize Cartpole & $ 169.9 \pm 5.8 $ & $ 176.5 \pm 5.0 $ & $ \mathbf{ 239.5 \pm 4.4 } $ & $ 10.0 \pm 0.0 $ & $ 9.8 \pm 0.0 $& $ 9.8 \pm 0.0 $ & $ 9.8 \pm 0.0 $  & $0.0 \pm 0.0$ \\
\footnotesize MountainCar & $ -111.1 \pm 1.5 $ & $ -103.6 \pm 1.6 $ & $ -97.2 \pm 2.0 $ & $ \mathbf{ -93.6 \pm 2.1 } $ & $ \mathbf{ -94.4 \pm 1.8 } $ & $ \mathbf{ -92.4 \pm 1.5 } $ & $ -136.7 \pm 0.9 $&  $1.88 \pm 0.85$ \\
\footnotesize LunarLander & $ -165.8 \pm 50.4 $ & $ -12.8 \pm 4.7 $ & $ \mathbf{ 1.2 \pm 3.6 } $ & $ \mathbf{ 2.0 \pm 3.4 } $ & $ -44.1 \pm 6.9 $ & $ -122.8 \pm 10.5 $ & $ -121.2 \pm 8.6 $ & $2.12 \pm 4.21$ \\
\footnotesize Pendulum & $ \mathbf{ -116.7 \pm 16.7 } $ & $ \mathbf{ -113.1 \pm 16.3 } $ & $ \mathbf{ -153.8 \pm 23.0 } $ & $ -283.1 \pm 18.0 $ & $ -338.9 \pm 16.3 $ & $ -364.3 \pm 22.1 $ & $ -377.2 \pm 21.7 $&  $3.52 \pm 0.0$\\
\footnotesize Acrobot & $ -89.2 \pm 1.1 $ & $ \mathbf{ -82.5 \pm 1.7 } $ & $ \mathbf{ -83.4 \pm 1.3 } $ & $ -122.8 \pm 1.3 $ & $ -266.2 \pm 1.9$ & $-287.3 \pm 0.3$ & $ -286.7 \pm 0.6$ & $0.80 \pm 0.27$ \\
\footnotesize Swimmer & $ 21.3 \pm 1.1 $ & $ \mathbf{ 25.2 \pm 0.8 } $ & $ \mathbf{ 25.0 \pm 0.5 } $ & $ \mathbf{ 24.0 \pm 0.3 } $ & $ 22.4 \pm 0.3 $ & $ 12.8 \pm 1.2 $ & $ 14.0 \pm 0.2 $&  $2.69 \pm 1.71$\\
\footnotesize Hopper & $ 58.6 \pm 4.8 $ & $ 61.9 \pm 4.2 $ & $ 62.2 \pm 1.7 $ & $ 59.7 \pm 3.1 $ & $ 60.8 \pm 1.0 $ & $ 66.7 \pm 2.7 $ & $ \mathbf{ 73.4 \pm 1.2 } $& $5.33 \pm 2.32$ \\
\footnotesize  Walker 2D & $ 61.6 \pm 5.5 $ & $ 37.6 \pm 4.0 $ & $ 62.7 \pm 18.2 $ & $ \mathbf{ 80.8 \pm 6.6 } $ & $ \mathbf{ 102.1 \pm 19.3 } $ & $ \mathbf{ 91.5 \pm 13.0 } $ & $ \mathbf{ 97.2 \pm 17.6 } $ & $5.10 \pm 3.74$\\
\bottomrule
\end{tabular}
\end{center}
\vspace{-.4cm}
\end{table*}

\textbf{Temporal Abstractions}~~The notion of action persistence can be seen as a form of \emph{temporal abstraction}~\citep{sutton1999between, precup2001temporal}. Temporally extended actions have been extensively used in the hierarchical RL literature to model different time resolutions~\cite{singh1992reinforcement, singh1992scaling}, subgoals~\citep{dietterich1998the}, and combined with the actor--critic architectures~\citep{bacon2017the}. Persisting an action is a particular instance of a semi-Markov \emph{option}, always lasting $k$ steps. According to the {flat option representation}~\citep{precup2001temporal}, we have as initiation set $\mathcal{I} = \Ss$ the set of all states, as internal policy the policy that plays deterministically the action taken when the option was initiated, \ie the $k$--persistent policy, and as termination condition whether $k$ timesteps have passed after the option started, \ie $\beta(H_t) = \mathds{1}_{\{t \bmod k = 0\}}$. Interestingly, in~\citet{mann2015approximate} an approximate value iteration procedure for options lasting at least a given number of steps is proposed and analyzed. This approach shares some similarities with action persistence. Nevertheless, we believe that the option framework is more general and usually the time abstractions are related to the semantic of the tasks, rather than based on the modification of the control frequency, like action persistence.

\section{Experimental Evaluation}\label{sec:experimental}
In this section, we provide the empirical evaluation of PFQI, with the threefold goal: i) proving that a persistence $k > 1$ can boost learning, leading to more profitable policies, ii) assessing the quality of our persistence selection method, and iii) studying how the batch size influences the performance of PFQI policies for different persistences. Refer to Appendix~\ref{apx:Experiments} for detailed experimental settings. 

We train PFQI, using extra-trees~\citep{geurts2006extra} as a regression model, for $J$ iterations and different values of $k$, starting with the same dataset $\mathcal{D}$ collected at persistence 1. To compare the performance of the learned policies $\pi_k$ at the different persistences, we estimate their expected return ${J}^{\rho,\pi_k}_k$ in the corresponding MDP $\mathcal{M}_k$. Table~\ref{tab:results} shows the results for different continuous environments and different persistences averaged over 20 runs and highlighting in bold the persistence with the highest average performance and the ones that are not statistically significantly different from that one. Across the different environments we observe some common trends in line with our theory: i) persistence 1 rarely leads to the best performance; ii) excessively increasing persistence prevents the control at all. 
In Cartpole~\citep{barto1983neuronlike}, we easily identify a persistence ($k=4$) that outperforms all the others. In the Lunar Lander~\citep{brockman2016open} persistences $k \in\{4,8\}$ are the only ones that lead to positive return (\ie the lander does not crash) and in the Acrobot domain~\citep{geramifard2015rlpy} we identify $k \in\{2,4\}$ as optimal persistences. A qualitatively different behavior is displayed in Mountain Car~\citep{moore1990efficient}, Pendulum~\citep{brockman2016open}, and Swimmer~\citep{coulom2002reinforcement}, where we observe a plateau of three persistences with similar performance. An explanation for this phenomenon is that, in those domains, the optimal policy tends to persist actions on its own, making the difference less evident. Intriguingly, the more complex Mujoco domains, like Hopper and Walker 2D~\cite{erickson2019assistive}, seem to benefit from the higher persistences.

\begin{figure*}
\centering
	\includegraphics[width=.95\textwidth]{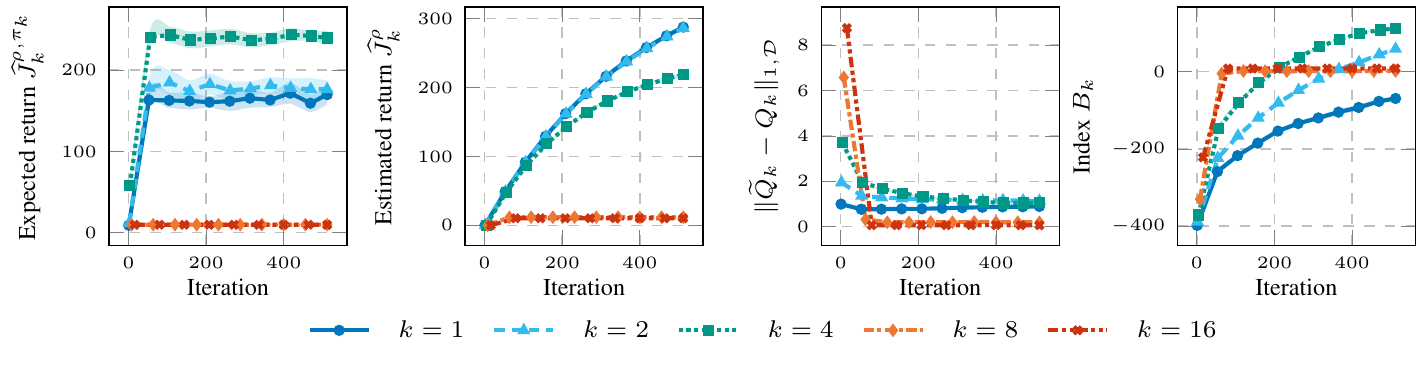}
	\vspace{-.4cm}
	\caption{Expected return $\widehat{J}_k^{\rho,\pi_k}$, estimated return $\widehat{J}_k^\rho$, estimated expected Bellman residual $\| \widetilde{Q}_k - Q_k \|_{1,\mathcal{D}}$, and persistence selection index $B_k$ in the Cartpole experiment as a function of the number of iterations for different persistences. 20 runs, 95 \% c.i.}\vspace{-.2cm}
	\label{fig:cartpole}
\end{figure*}

\begin{figure}
\centering
	\includegraphics[width=.9\linewidth]{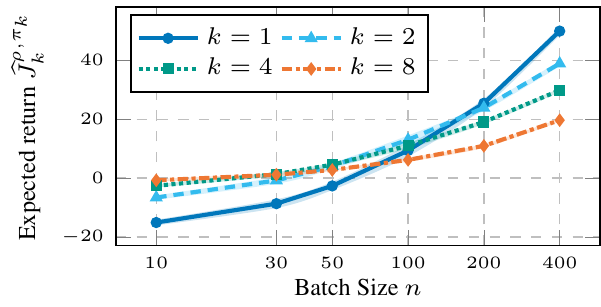}
	\vspace{-.2cm}
	\caption{Expected return $\widehat{J}_k^{\rho,\pi_k}$ in the Trading experiment as a function of the batch size. 10 runs, 95 \% c.i.}
	\label{fig:tradingMain}
\end{figure}

To test the quality of our persistence selection method, we compare the performance of the estimated optimal persistence, \ie the one with the highest estimated expected return $\widehat{k} \in \argmax \widehat{J}_k^{\rho,\pi_k}$, and the performance of the persistence $\widetilde{k}$ selected by maximizing the index $B_k$ (Equation~\eqref{eq:index}). For each run $i=1,\dots,20$, we compute the \emph{performance loss} $\delta_i = \widehat{J}_{\widehat{k}}^{\rho,\pi_{\widehat{k}}} - \widehat{J}_{\widetilde{k}_i}^{\rho,\pi_{\widetilde{k}_i}}$ and we report it in the last column of Table~\ref{tab:results}. In the Cartpole experiment, we observe a zero loss, which means that our heuristic always selects the optimal persistence ($k=4$). Differently, non--zero loss occurs in the other domains, which means that sometimes the index $B_k$ mispredicts the optimal persistence. 
Nevertheless, in almost all cases the average performance loss is significantly smaller than the magnitude of the return, proving the effectiveness of our heuristics.

In Figure~\ref{fig:cartpole}, we show the learning curves for the Cartpole experiment, highlighting the components that contribute to the index $B_k$. The first plot reports the estimated \emph{expected return} $\widehat{J}_k^{\rho,\pi_k}$, obtained by averaging 10 trajectories executing $\pi_k$ in the environment $\mathcal{M}_k$, which
confirms that $k=4$ is the optimal persistence. The second plot shows the \emph{estimated return} $\widehat{J}_k^\rho$ obtained by averaging the Q-function $Q_k$ learned with \algname, over the initial states sampled from $\rho$. We can see that for $k \in \{1,2\}$, \algname tends to overestimate the return, while for $k=4$ we notice a slight underestimation. The overestimation phenomenon can be explained by the fact that with small persistences we perform a large number of applications of the operator $\widehat{T}^*$, which involves a maximization over the action space, injecting an overestimation bias.
By combining this curve with the expected Bellman residual (third plot), we get the value of our persistence selection index $B_k$ (fourth plot). Finally, we observe that $B_k$ correctly ranks persistences 4 and 8, but overestimates persistences $8$ and $16$, compared to persistence $1$.

To analyze the effect of the batch size, we run PFQI on the Trading environment (see Appendix~\ref{apx:batch_dep}) varying the number of sampled trajectories. In Figure~\ref{fig:tradingMain}, we notice that the performance improves as the batch size increases, for all persistences. Moreover, we observe that if the batch size is small ($n \in \{10, 30, 50\}$), higher persistences ($k \in\{2,4,8\}$) result in better performances, while for larger batch sizes, $k=1$ becomes the best choice. Since data is taken from real market prices, this environment is very noisy, thus, when the amount of samples is limited, PFQI can exploit higher persistences to mitigate the poor estimation.

\section{Open Questions}\label{sec:discussion}

\textbf{Improving Exploration with Persistence}~~We analyzed the effect of action persistence on FQI with a fixed dataset, collected in the base MDP $\mathcal{M}$. In principle, samples can be collected at arbitrary persistence. We may wonder how well the same sampling policy (\eg the uniform policy over $\mathcal{A}$), executed at different persistences, explores the environment. For instance, in Mountain Car, high persistences increase the probability of reaching the goal, generating more informative datasets (preliminary results in Appendix~\ref{apx:persistenceExplor}).

\textbf{Learn in $\mathcal{M}_k$ and execute in $\mathcal{M}_{k'}$}~~Deploying each policy $\pi_k$ in the corresponding MDP $\mathcal{M}_k$ allows for some guarantees (Lemma~\ref{thr:lowerBoundQ}). However, we empirically discovered that using $\pi_k$ in an MDP $\mathcal{M}_{k'}$ with smaller persistence $k'$ sometimes improves its performance. (preliminary results in Appendix~\ref{apx:khpersistence}). We wonder what regularity conditions on the environment are needed to explain this phenomenon.

\textbf{Persistence in On--line RL}~Our approach focuses on batch off--line RL. However, the on--line framework could open up new opportunities for action persistence. Specifically, we could \emph{dynamically} adapt the persistence (and so the control frequency) to speed up learning. Intuition suggests that we should start with a low frequency, reaching a fairly good policy with few samples, and then increase it to refine the learned policy.

\section{Discussion and Conclusions}
In this paper, we formalized the notion of action persistence, \ie the repetition of a single action for a fixed number $k$ of decision epochs, having the effect of altering the control frequency of the system. We have shown that persistence leads to the definition of new Bellman operators and that we are able to bound the induced performance loss, under some regularity conditions on the MDP. Based on these considerations, we presented and analyzed a novel batch RL algorithm, PFQI, able to approximate the value function at a given persistence. The experimental evaluation justifies the introduction of persistence, since reducing the control frequency can lead to an improvement when dealing with a limited number of samples.
Furthermore, we introduced a persistence selection heuristic, which is able to identify good persistence in most cases.  
We believe that our work makes a step towards understanding why repeating actions may be useful for solving complex control tasks. Numerous questions remain unanswered, leading to several appealing future research directions.

\section*{Acknowledgements}
The research was conducted under a cooperative agreement between ISI Foundation, Banca IMI and Intesa Sanpaolo Innovation Center. 
\bibliography{paper}
\bibliographystyle{icml2020}

\clearpage
\onecolumn
\appendix
\setlength{\abovedisplayskip}{8pt}
\setlength{\belowdisplayskip}{8pt}

\section*{Index of the Appendix}
In the following, we briefly recap the contents of the Appendix.
\begin{itemize}[leftmargin=*, label={--}]
	\item Appendix~\ref{apx:proofs} reports all proofs and derivations.
	\item Appendix~\ref{apx:details} provides additional considerations and discussion concerning the regularity conditions for bounding the performance loss due to action persistence.
	\item Appendix~\ref{apx:discussionSimplification} illustrates the motivations behind the choice we made for defining our persistence selection index.
	\item Appendix~\ref{apx:Experiments} presents the experimental setting, together with additional experimental results (including some experiments with neural networks as regressor).
	\item Appendix~\ref{apx:openQuestions} reports some preliminary experiments to motivate the open questions stated in the main paper.
\end{itemize}

\section{Proofs and Derivations}\label{apx:proofs}
In this appendix, we report the proofs of all the results presented in the main paper.

\subsection{Proofs of Section~\ref{sec:persistency}}
\persistentOperators*
\begin{proof}
	We derive the result by explicitly writing the definitions of the $k$-persistent transition model $P_k$ and $k$-persistent reward distribution $R_k$ in terms of $P$, $R$ and $\gamma$ in the definition of the $k$-persistent Bellman expectation operator $\EOp_k$.
	Let $f \in \mathscr{B}(\SAs)$ and $(s,a) \in \SAs$:
	\begin{align}
		(\EOp_k f)(s,a) &= r_k(s,a) + \gamma^k (P_k^{\pi} f)(s,a) \notag\\
				& = \sum_{i=0}^{k-1} \gamma^i \left((P^\delta)^i r \right) (s,a) + \gamma^k ((P^\delta)^{k-1} P^{\pi} f)(s,a) \label{p:101}\\
				& = \left( \sum_{i=0}^{k-1} \gamma^i (P^\delta)^i r   + \gamma^k (P^\delta)^{k-1} P^{\pi} f \right)(s,a) \notag \\
				& = \left( \sum_{i=0}^{k-2} \gamma^i (P^\delta)^i r  + \gamma^{k-1} (P^\delta)^{k-1}  \left(r +   \gamma P^{\pi} f \right) \right)(s,a) \label{p:102} \\
				& = \left( \sum_{i=0}^{k-2} \gamma^i (P^\delta)^i r  + \gamma^{k-1} (P^\delta)^{k-1}  \EOp f \right)(s,a) \label{p:103},
	\end{align}
	where line~\eqref{p:101} follows from Definition~\ref{def:MDPPersisted}, line~\eqref{p:102} is obtained by isolating the last term in the summation $\gamma^{k-1} (P^\delta)^{k-1} r$ and collecting $\gamma^{k-1} (P^\delta)^{k-1}$ thanks to the linearity of $(P^\delta)^{k-1}$, and line~\eqref{p:103} derives from the definition of the Bellman expectation operator $\EOp $. It remains to prove that for $g \in \mathscr{B}(\SAs)$ and $(s,a) \in \SAs$, we have the following identity:
	\begin{equation}\label{eq:pIdPersistence}
		(\POp)^{k-1} g = \sum_{i=0}^{k-2} \gamma^i (P^\delta)^i r  + \gamma^{k-1} (P^\delta)^{k-1}  g.
	\end{equation}
	We prove it by induction on $k \in \Nat[1]$. For $k=1$ we have only $g = (\POp)^{0} g$. Let us assume that the identity hold for all integers $h < k$, we prove the statement for $k$:
	\begin{align}
		\left((\POp)^{k-1} g \right)(s,a) & = \left(\sum_{i=0}^{k-2} \gamma^i (P^\delta)^i r  + \gamma^{k-1} (P^\delta)^{k-1}  g\right)(s,a) \notag \\
			& = \left(\sum_{i=0}^{k-3} \gamma^i (P^\delta)^i r  + \gamma^{k-2} (P^\delta)^{k-2} (r + \gamma  P^\delta g) \right)(s,a) \label{p:104} \\
			& = \left(\sum_{i=0}^{k-3} \gamma^i (P^\delta)^i r  + \gamma^{k-2} (P^\delta)^{k-2} \POp g \right) (s,a) \label{p:105} \\ 
			& = (\POp)^{k-2} \POp g = (\POp)^{k-1} g. \label{p:106}
	\end{align}
	where line~\eqref{p:104} derives from isolating the last term in the summation and collecting $\gamma^{k-2} (P^\delta)^{k-2}$ thanks to the linearity of $(P^\delta)^{k-2}$, line~\eqref{p:105} comes from the definition of the Bellman persisted operator $\POp$, and finally line~\eqref{p:106} follows from the inductive hypothesis. We get the result by taking $g = T^\pi f$.
	
	Concerning the $k$-persistent Bellman optimal operator the derivation is analogous. For simplicity, we define the $\max$-operator $M : \mathscr{B}(\SAs) \rightarrow \mathscr{B}(\Ss)$ defined for a bounded measurable function $f \in \mathscr{B}(\SAs)$ and a state $s \in \Ss$ as $(M f)(s) = \max_{a \in \As} f(s,a)$. As a consequence the Bellman optimal operator becomes: $\OOp f = r + \gamma P M f$. Therefore, we have:
	\begin{align}
	(\OOp_k f)(s,a) &= r_k(s,a) + \gamma^k \int_{\Ss} P_k(\de s' |s,a) \max_{a' \in \As} f(s',a') \notag\\
					& = r_k(s,a) + \gamma^k \int_{\Ss} P_k(\de s' |s,a) M f(s') \label{p:107}\\
					& = \left( r_k + \gamma^k  P_k M  f \right) (s,a) \label{p:108}\\
				& = \left( \sum_{i=0}^{k-1} \gamma^i (P^\delta)^i r   + \gamma^k (P^\delta)^{k-1} P M f \right)(s,a) \notag \\
				& = \left( \sum_{i=0}^{k-2} \gamma^i (P^\delta)^i r  + \gamma^{k-1} (P^\delta)^{k-1}  \left(r +   \gamma P M f \right) \right)(s,a) \\
				& = \left( \sum_{i=0}^{k-2} \gamma^i (P^\delta)^i r  + \gamma^{k-1} (P^\delta)^{k-1}  \OOp f \right)(s,a),
	\end{align}
	where line~\eqref{p:107} derives from the definition of the $\max$-operator $M$ and line~\eqref{p:107} from the definition of the operator $P_k$. By applying Equation~\eqref{eq:pIdPersistence} we get the result.
\end{proof}

\subsection{Proofs of Section~\ref{sec:loose}}

\begin{restatable}[]{lemma}{}\label{lemma:QFunctionDecomposition1}
	Let $\mathcal{M}$ be an MDP and $\pi \in \Pi$ be a Markovian stationary policy, then for any $k \in \Nat[1]$ the following two identities hold:
	\begin{align*}
		Q^{\pi} - Q^\pi_{k} & = \left(\Id - \gamma^k \left( P^{\pi} \right)^k \right)^{-1} \left( \left(\EOp\right)^k Q^{\pi}_k -  \left(\POp \right)^{k-1} \EOp Q^{\pi}_k \right) \\
		& = \left(\Id - \gamma^k \left( P^{\delta} \right)^{k-1} \Ppi \right)^{-1} \left( \left(\EOp \right)^k Q^\pi -  \left(\POp \right)^{k-1} \EOp Q^\pi\right),
	\end{align*}
	where $\Id : \mathscr{B}(\SAs) \rightarrow \mathscr{B}(\SAs)$ is the identity operator over $\SAs$.
\end{restatable}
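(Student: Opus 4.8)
The statement is a pair of resolvent identities for the gap $Q^\pi - Q^\pi_k$. Both follow the same template: express the gap as $(\Id - A)^{-1}(\text{something})$ where $A$ is the appropriate transition-type operator with spectral radius $< 1$, so that $(\Id - A)^{-1}$ is well-defined (it equals $\sum_{m \ge 0} A^m$ by the Neumann series, which converges since $\|A\| \le \gamma^k < 1$ in the relevant operator norm). So the first thing I would do is record that $\gamma^k (P^\pi)^k$ and $\gamma^k (P^\delta)^{k-1}P^\pi$ are both bounded by $\gamma^k$ in the operator norm induced by $\|\cdot\|_\infty$ (each $P^\pi$, $P^\delta$ is a stochastic kernel, hence a non-expansion on $\mathscr{B}(\SAs)$), so both inverses exist.

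Now for the actual algebra. The key fixed-point facts are: $Q^\pi = T^\pi Q^\pi$, hence $Q^\pi = (T^\pi)^k Q^\pi$ by iterating; and $Q^\pi_k = T^\pi_k Q^\pi_k = (T^\delta)^{k-1} T^\pi Q^\pi_k$ by Theorem~\ref{thr:persistentOperators}. Writing $T^\pi f = r + \gamma P^\pi f$, iterating gives $(T^\pi)^k f = \sum_{i=0}^{k-1} \gamma^i (P^\pi)^i r + \gamma^k (P^\pi)^k f$, and similarly $(T^\delta)^{k-1}T^\pi f = \sum_{i=0}^{k-1}\gamma^i(P^\delta)^i r + \gamma^k (P^\delta)^{k-1} P^\pi f$ (this is essentially Equation~\eqref{eq:pIdPersistence} from the previous proof with $g = T^\pi f$). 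For the first identity I would compute
\[
Q^\pi - Q^\pi_k = (T^\pi)^k Q^\pi - (T^\delta)^{k-1}T^\pi Q^\pi_k
\]
and then add and subtract $(T^\delta)^{k-1}T^\pi Q^\pi$ — no wait, cleaner: use $(T^\pi)^k Q^\pi_k$ as the pivot. So
\[
Q^\pi - Q^\pi_k = \underbrace{(T^\pi)^k Q^\pi - (T^\pi)^k Q^\pi_k}_{=\,\gamma^k (P^\pi)^k (Q^\pi - Q^\pi_k)} + \underbrace{(T^\pi)^k Q^\pi_k - (T^\delta)^{k-1}T^\pi Q^\pi_k}_{\text{the bracket}},
\]
where the first underbrace uses that $(T^\pi)^k$ is affine with linear part $\gamma^k(P^\pi)^k$ and that $Q^\pi = (T^\pi)^k Q^\pi$. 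Rearranging, $(\Id - \gamma^k(P^\pi)^k)(Q^\pi - Q^\pi_k) = (T^\pi)^k Q^\pi_k - (T^\delta)^{k-1}T^\pi Q^\pi_k$, which on applying the inverse is exactly the first identity (note $(T^\pi)^k = (T^\pi)^{k-1}T^\pi$, matching $(\EOp)^k$ in the statement, and $(T^\delta)^{k-1}T^\pi = (\POp)^{k-1}\EOp$).

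For the second identity I would do the mirror computation, pivoting the other way: use $Q^\pi_k = (T^\delta)^{k-1}T^\pi Q^\pi_k$ and evaluate both operators at $Q^\pi$ instead:
\[
Q^\pi - Q^\pi_k = \big((T^\pi)^k Q^\pi - (T^\delta)^{k-1}T^\pi Q^\pi\big) + \big((T^\delta)^{k-1}T^\pi Q^\pi - (T^\delta)^{k-1}T^\pi Q^\pi_k\big),
\]
where the second group equals $\gamma^k (P^\delta)^{k-1}P^\pi (Q^\pi - Q^\pi_k)$ (linear part of the affine map $(T^\delta)^{k-1}T^\pi$), so $(\Id - \gamma^k(P^\delta)^{k-1}P^\pi)(Q^\pi - Q^\pi_k) = (T^\pi)^k Q^\pi - (T^\delta)^{k-1}T^\pi Q^\pi$, giving the second identity after inverting. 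The main thing to get right — and the only place that needs a little care rather than being purely mechanical — is bookkeeping the affine-vs-linear distinction: $(T^\pi)^k$ and $(T^\delta)^{k-1}T^\pi$ are affine, but their \emph{difference} applied to a fixed function, and the difference of either applied to two functions, is linear with the $r$-terms cancelling. I expect no genuine obstacle; the invertibility of the two operators (via $\gamma^k < 1$ and non-expansiveness of stochastic kernels) is the only analytic input, and everything else is the telescoping identity plus the fixed-point equations. One should also double-check that the index ranges on the summations $\sum_{i=0}^{k-1}$ in $(T^\pi)^k$ and in $(T^\delta)^{k-1}T^\pi$ genuinely agree term-by-term when one wants the $r$-contributions to cancel in the second identity — they do, because both equal $r + \gamma(P^\delta\text{ or }P^\pi)[\ldots]$ only at the outermost step and the cancellation I use is the cleaner "difference of the same affine map at two points," so this subtlety does not actually arise.
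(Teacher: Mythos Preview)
Your proposal is correct and follows essentially the same approach as the paper: start from the fixed-point identities $Q^\pi = (T^\pi)^k Q^\pi$ and $Q^\pi_k = (T^\delta)^{k-1}T^\pi Q^\pi_k$, add and subtract the appropriate pivot term ($(T^\pi)^k Q^\pi_k$ for the first identity, $(T^\delta)^{k-1}T^\pi Q^\pi$ for the second), use that the affine operators differ by their linear parts on a difference of arguments, and invert using $\gamma^k < 1$. Your justification of invertibility via the stochastic-kernel non-expansion is slightly more explicit than the paper's, but otherwise the arguments coincide.
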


\begin{proof}
	We prove the equalities by exploiting the facts that $Q^{\pi}$ and $Q^\pi_{k}$ are the fixed points of $\EOp$ and $\EOp_k$:
	\begin{align}
		Q^{\pi} - Q^\pi_{k} &= \EOp Q^{\pi} - \EOp_k Q^\pi_{k} \notag \\
				& = \left(\EOp\right)^{k} Q^{\pi} - \left( \POp \right)^{k-1} \EOp Q^\pi_{k} \label{p:201} \\
				& =  \left(\EOp\right)^{k} Q^{\pi} - \left( \POp \right)^{k-1} \EOp Q^\pi_{k} \pm  \left(\EOp\right)^{k} Q^{\pi}_k \label{p:202} \\
				& = \gamma^k \left( P^{\pi} \right)^k \left( Q^{\pi} - Q^{\pi}_k \right) + \left( \left(\EOp\right)^{k} Q^{\pi}_k - \left( \POp \right)^{k-1} \EOp Q^\pi_{k}  \right), \label{p:203}
	\end{align}
	where line~\eqref{p:201} derives from recalling that $Q^{\pi} = \EOp Q^{\pi} $ and exploiting Theorem~\ref{thr:persistentOperators}, line~\eqref{p:203} is obtained by exploiting the identity that holds for two generic bounded measurable functions $f,g \in \mathscr{B}(\SAs)$:
	\begin{equation}\label{eq:IdentityForOperatorsPpi}
	\left(\EOp\right)^{k} f - \left(\EOp\right)^{k} g = \gamma^k \left( P^{\pi} \right)^k (f - g).
	\end{equation}
	We prove this identity by induction. For $k=1$ the identity clearly holds. Suppose Equation~\eqref{eq:IdentityForOperatorsPpi} holds for all integers $h < k$, we prove that it holds for $k$ too:
	\begin{align}
	\left(\EOp\right)^{k} f - \left(\EOp\right)^{k} g & = \EOp \left(\EOp\right)^{k-1} f - \EOp \left(\EOp\right)^{k-1} g \notag\\
	& = r + \gamma P^\pi \left(\EOp\right)^{k-1} f - r - P^\pi \gamma \left(\EOp\right)^{k-1} g \notag\\
	& =  \gamma P^\pi \left( \left(\EOp\right)^{k-1} f - \left(\EOp\right)^{k-1} g \right) \label{p:211} \\
	& = \gamma P^\pi \gamma^{k-1} \left( P^{\pi} \right)^{k-1} (f - g) \label{p:212}\\
	& = \gamma^k \left( P^{\pi} \right)^k (f - g), \notag
	\end{align}
	where line~\eqref{p:211} derives from the linearity of operator $P^\pi$ and line~\eqref{p:212} follows from the inductive hypothesis. From line~\eqref{p:203} the result follows immediately, recalling that since $\gamma < 1$ the inversion of the operator is well-defined:
	\begin{align*}
		& Q^{\pi} - Q^\pi_{k}  = \gamma^k \left( P^{\pi} \right)^k \left( Q^{\pi} - Q^{\pi}_k \right) + \left( \left(\EOp\right)^{k} Q^{\pi}_k - \left( \POp \right)^{k-1} \EOp Q^\pi_{k}  \right) \implies \\
		& \quad \left( \Id - \gamma^k \left( P^{\pi} \right)^k  \right) \left(Q^{\pi} - Q^\pi_{k} \right) = \left( \left(\EOp\right)^{k} Q^{\pi}_k - \left( \POp \right)^{k-1} \EOp Q^\pi_{k}  \right) \implies \\
		& \quad  Q^{\pi} - Q^\pi_{k}  = \left( \Id - \gamma^k \left( P^{\pi} \right)^k  \right)^{-1} \left( \left(\EOp\right)^{k} Q^{\pi}_k - \left( \POp \right)^{k-1} \EOp Q^\pi_{k}  \right).
	\end{align*}
	The second identity of the statement is obtained with an analogous derivation, in which at line~\eqref{p:202} we sum and subtract $\left(\POp\right)^{k-1} \EOp Q^{\pi}$ and we exploit the identity for two bounded measurable functions $f,g \in \mathscr{B}(\SAs)$:
		\begin{equation}\label{eq:IdentityForOperatorsPdelta}
	\left( \POp \right)^{k-1} \EOp Q f - \left( \POp \right)^{k-1} \EOp Q g = \gamma^k \left( P^{\delta} \right)^{k-1} \Ppi (f - g).
	\end{equation}
\end{proof}

\begin{restatable}[]{lemma}{}\label{lemma:QFunctionDecomposition2}
	Let $\mathcal{M}$ be an MDP and $\pi \in \Pi$ be a Markovian stationary policy, then for any $k \in \Nat[1]$ and any bounded measurable function $f \in \mathscr{B}(\SAs)$ the following two identities hold:
	\begin{align*}
		\left(\EOp\right)^{k-1} f -  \left(\POp \right)^{k-1}  f  & = \sum_{i=0}^{k-2} \gamma^{i+1} \left( \Ppi \right)^{i} \left( \Ppi - \Ppi[\delta] \right) \left(\POp \right)^{k-2-i} f \\
		& = \sum_{i=0}^{k-2} \gamma^{i+1} \left( \Ppi[\delta] \right)^{i} \left( \Ppi - \Ppi[\delta] \right) \left(\EOp \right)^{k-2-i}  f.
	\end{align*}
\end{restatable}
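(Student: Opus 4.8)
The plan is to prove both identities by induction on $k$, using the operator decomposition $(\POp)^{k-1}$ in terms of shorter compositions and telescoping. Consider the first identity:
\[
\left(\EOp\right)^{k-1} f -  \left(\POp \right)^{k-1}  f  = \sum_{i=0}^{k-2} \gamma^{i+1} \left( \Ppi \right)^{i} \left( \Ppi - \Ppi[\delta] \right) \left(\POp \right)^{k-2-i} f.
\]
For $k=1$ both sides vanish (the sum is empty, and $(\EOp)^0 f - (\POp)^0 f = f - f = 0$), so the base case holds trivially. For the inductive step, assume the identity for $k-1$, \ie $\left(\EOp\right)^{k-2} f -  \left(\POp \right)^{k-2}  f = \sum_{i=0}^{k-3} \gamma^{i+1} ( \Ppi )^{i} ( \Ppi - \Ppi[\delta] ) (\POp )^{k-3-i} f$.

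First I would write $\left(\EOp\right)^{k-1} f -  \left(\POp \right)^{k-1}  f = \EOp \big( (\EOp)^{k-2} f \big) - \POp \big( (\POp)^{k-2} f \big)$ and insert the hybrid term $\EOp \big( (\POp)^{k-2} f \big)$:
\[
\EOp \left(\EOp\right)^{k-2} f - \EOp \left(\POp \right)^{k-2} f + \EOp \left(\POp \right)^{k-2} f - \POp \left(\POp \right)^{k-2} f.
\]
For the first difference, both $\EOp$ applied to two functions differ only through the linear part: $\EOp g - \EOp h = \gamma \Ppi (g-h)$, so it equals $\gamma \Ppi \big( (\EOp)^{k-2} f - (\POp)^{k-2} f \big)$, to which I apply the inductive hypothesis, obtaining $\sum_{i=0}^{k-3} \gamma^{i+2} (\Ppi)^{i+1} (\Ppi - \Ppi[\delta]) (\POp)^{k-3-i} f$; reindexing $j = i+1$ turns this into $\sum_{j=1}^{k-2} \gamma^{j+1} (\Ppi)^{j} (\Ppi - \Ppi[\delta]) (\POp)^{k-2-j} f$, which is the target sum missing its $j=0$ term. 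For the second difference, $\EOp g - \POp g = \gamma (\Ppi - \Ppi[\delta]) g$ (the reward terms $r$ cancel), applied to $g = (\POp)^{k-2} f$ this gives exactly $\gamma (\Ppi - \Ppi[\delta]) (\POp)^{k-2} f$, which is the missing $j=0$ term. Summing the two contributions recovers the full right-hand side, completing the induction. The second identity is obtained by the symmetric choice: insert $\POp \big( (\EOp)^{k-2} f \big)$ instead, so that the telescoping factors out $\Ppi[\delta]$ rather than $\Ppi$; the inductive hypothesis for the second form then closes the argument in the same way.

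I do not anticipate a serious obstacle here; the only things to be careful about are the bookkeeping of the exponents after reindexing and confirming that the reward terms genuinely cancel in $\EOp - \POp$ (they do, since both are $r(s,a)$ plus a linear term) and in $\EOp g - \EOp h$ (likewise). The main place where attention is needed is ensuring the two hybrid insertions correspond to the two stated identities — inserting $\EOp$ on the outside of a $\POp$-tower yields the $\Ppi$-prefixed form, while inserting $\POp$ on the outside of an $\EOp$-tower yields the $\Ppi[\delta]$-prefixed form — and that linearity of $\Ppi$ and $\Ppi[\delta]$ is invoked to pull the operators through the induction. Alternatively, one could prove the first identity directly by telescoping $\sum_{i=0}^{k-2}\big[(\EOp)^{i+1}(\POp)^{k-2-i} f - (\EOp)^{i}(\POp)^{k-1-i} f\big]$, where each summand equals $\gamma(\Ppi)^i(\Ppi - \Ppi[\delta])(\POp)^{k-2-i}f$ after the same reward cancellation; this avoids induction altogether, and I would likely present this telescoping version as the cleanest route.
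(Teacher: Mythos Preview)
Your proof is correct and follows essentially the same inductive strategy as the paper's: insert a hybrid term, use $\EOp g - \EOp h = \gamma \Ppi(g-h)$ and $\EOp g - \POp g = \gamma(\Ppi - \Ppi[\delta])g$, and close with the inductive hypothesis. The only cosmetic difference is that you peel one operator from the \emph{left} (inserting $\EOp(\POp)^{k-2}f$, which produces the $j=0$ term directly), whereas the paper peels from the \emph{right} (inserting $(\EOp)^{k-2}\POp f$, which produces the $i=k-2$ term directly); your telescoping alternative is also fine, though note the summand should carry $\gamma^{i+1}$, not $\gamma$.
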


\begin{proof}
	We start with the first identity and we prove it by induction on $k$. For $k = 1$, we have that the left hand side is zero and the summation on the right hand side has no terms. Suppose that the statement holds for every $h < k$, we prove the statement for $k$:
	\begin{align}
		\left(\EOp\right)^{k-1} f -  \left(\POp \right)^{k-1}  f & = \left(\EOp\right)^{k-1} f -  \left(\POp \right)^{k-1} f \pm  \left(\EOp \right)^{k-2}\POp f \label{p:301}\\
		& = \left( \left(\EOp\right)^{k-2 } \EOp f - \left(\EOp \right)^{k-2}\POp f  \right) + \left( \left(\EOp \right)^{k-2}\POp f  -  \left(\POp \right)^{k-2} \POp f \right) \notag \\
		& = \gamma^{k-2} \left(P^{\pi}\right)^{k-2} \left( \EOp f -\POp f \right) + \left( \left(\EOp \right)^{k-2}\POp f  -  \left(\POp \right)^{k-2} \POp f \right) \label{p:302} \\
		& = \gamma^{k-1} \left(P^{\pi} \right)^{k-2} \left( \Ppi  -P^{\delta}  \right) f + \sum_{i=0}^{k-3} \gamma^{i+1} \left( \Ppi \right)^{i} \left( \Ppi - \Ppi[\delta] \right) \left(\POp \right)^{k-3-i} \POp f \label{p:304} \\ 
		& = \sum_{i=0}^{k-2} \gamma^{i+1} \left( \Ppi \right)^{i} \left( \Ppi - \Ppi[\delta] \right) \left(\POp \right)^{k-2-i} f \label{p:305},
	\end{align}
	where in line~\eqref{p:302} we exploited the identity at Equation~\eqref{eq:IdentityForOperatorsPpi}, line~\eqref{p:304} derives from observing that $\EOp f -\POp f  = \gamma \left( \Ppi  -P^{\delta}  \right) f$ and by inductive hypothesis applied on $\POp f$ which is a bounded measurable function as well. Finally, line~\eqref{p:305} follows from observing that the first term completes the summation up to $k-2$. The second identity in the statement can be obtained by an analogous derivation in which at line~\eqref{p:301} we sum and subtract $\left(\POp \right)^{k-2} \EOp f$ and, later, exploit the identity at Equation~\eqref{eq:IdentityForOperatorsPdelta}.
\end{proof}

\begin{restatable}[Persistence Lemma]{lemma}{}\label{lemma:PersistenceLemma}
	Let $\mathcal{M}$ be an MDP and $\pi \in \Pi$ be a Markovian stationary policy, then for any $k \in \Nat[1]$ the following two identities hold:
	\begin{align*}
		Q^\pi - Q^\pi_k & = \sum_{\substack{i \in \mathbb{N} \\ i \bmod k \neq 0}} \gamma^{i} \left( \Ppi \right)^{i-1} \left( \Ppi - \Ppi[\delta] \right) \left(\POp \right)^{k-2-(i-1) \bmod k} \EOp Q^\pi_k \\
		& =\sum_{\substack{i \in \mathbb{N} \\ i \bmod k \neq 0}} \gamma^{i} \left( \left( P^{\delta} \right)^{k-1} \Ppi \right)^{i \bdiv k} \left( P^{\delta} \right)^{ i \bmod k - 1} \left( \Ppi - \Ppi[\delta] \right) \left(\EOp \right)^{k-i \bmod k} Q^\pi,
	\end{align*}
	where for two non-negative integers $a,b \in \Nat$, we denote with $a \bmod b$ and $a \bdiv b$ the remainder and the quotient of the integer division between $a$ and $b$ respectively.
\end{restatable}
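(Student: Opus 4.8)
The plan is to bootstrap from the two decomposition lemmas already established (Lemma~\ref{lemma:QFunctionDecomposition1} and Lemma~\ref{lemma:QFunctionDecomposition2}) and then unfold the operator inverse $\left(\Id - \gamma^k (P^\pi)^k\right)^{-1}$ as a Neumann series. First I would take the first identity from Lemma~\ref{lemma:QFunctionDecomposition1},
\[
Q^\pi - Q^\pi_k = \left(\Id - \gamma^k (P^\pi)^k\right)^{-1}\left( (\EOp)^k Q^\pi_k - (\POp)^{k-1}\EOp Q^\pi_k \right),
\]
and observe that $(\EOp)^k Q^\pi_k - (\POp)^{k-1}\EOp Q^\pi_k = (\EOp)^{k-1}(\EOp Q^\pi_k) - (\POp)^{k-1}(\EOp Q^\pi_k)$, so that Lemma~\ref{lemma:QFunctionDecomposition2} applies verbatim with $f = \EOp Q^\pi_k$. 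This rewrites the inner bracket as $\sum_{i=0}^{k-2}\gamma^{i+1}(\Ppi)^i(\Ppi - \Ppi[\delta])(\POp)^{k-2-i}\EOp Q^\pi_k$.

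Next I would expand the inverse via $\left(\Id - \gamma^k (P^\pi)^k\right)^{-1} = \sum_{j\in\mathbb{N}} \gamma^{jk}(P^\pi)^{jk}$, valid since $\gamma<1$ makes the operator norm strictly less than $1$. Multiplying the series against the finite sum over $i\in\{0,\dots,k-2\}$ gives a double sum of terms $\gamma^{jk}(P^\pi)^{jk}\cdot \gamma^{i+1}(\Ppi)^i(\Ppi-\Ppi[\delta])(\POp)^{k-2-i}\EOp Q^\pi_k = \gamma^{jk+i+1}(\Ppi)^{jk+i}(\Ppi-\Ppi[\delta])(\POp)^{k-2-i}\EOp Q^\pi_k$. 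The key combinatorial step is the change of variables $n = jk+i+1$: as $j$ ranges over $\mathbb{N}$ and $i$ over $\{0,\dots,k-2\}$, the quantity $n$ ranges exactly over the positive integers \emph{not} divisible by $k$, with $j = n\bdiv k$ and $i = (n-1)\bmod k$. Substituting, the exponent on $\gamma$ becomes $n$, the power of $\Ppi$ becomes $n-1$, and $k-2-i = k-2-(n-1)\bmod k$, which matches the claimed first identity after renaming $n$ to $i$. The second identity is obtained by the symmetric route: use the \emph{second} identity of Lemma~\ref{lemma:QFunctionDecomposition1} (which expresses $Q^\pi - Q^\pi_k$ via $\left(\Id - \gamma^k(P^\delta)^{k-1}\Ppi\right)^{-1}$ acting on $(\EOp)^k Q^\pi - (\POp)^{k-1}\EOp Q^\pi$), apply the \emph{second} identity of Lemma~\ref{lemma:QFunctionDecomposition2} with $f = Q^\pi$, expand the inverse as $\sum_j \gamma^{j k}\big((P^\delta)^{k-1}\Ppi\big)^{j}$, and perform the analogous index bookkeeping with $i\bdiv k$ and $i\bmod k$.

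The main obstacle is purely the index arithmetic in the change of summation variables: verifying that $(j,i)\mapsto jk+i+1$ is a bijection onto $\{n\in\mathbb{N}_{\ge1} : n \bmod k \neq 0\}$ and that under this bijection the operator powers $jk+i$, $k-2-i$ (resp. $j$, $i\bmod k - 1$, $k - i\bmod k$) transform into the forms written in the statement. One should be slightly careful about the degenerate case $k=1$, where the summation over $i\in\{0,\dots,k-2\}$ is empty, the set $\{n : n\bmod 1 \neq 0\}$ is empty, and both sides are zero (consistent with $Q^\pi_1 = Q^\pi$). Beyond that, interchanging the (absolutely convergent) operator series with the finite sum is routine given the $\gamma$-contraction bounds, so no analytic subtlety remains.
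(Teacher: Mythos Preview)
Your proposal is correct and follows essentially the same route as the paper's proof: start from the identities of Lemma~\ref{lemma:QFunctionDecomposition1}, apply the corresponding identity of Lemma~\ref{lemma:QFunctionDecomposition2} with $f=\EOp Q^\pi_k$ (resp.\ $f=\EOp Q^\pi=Q^\pi$), expand the operator inverse as a Neumann series, and reindex the resulting double sum via $i=jk+l+1$. The only cosmetic remark is that for the second identity it is cleanest to take $f=\EOp Q^\pi$ in Lemma~\ref{lemma:QFunctionDecomposition2} so that the exponent $k-2-l$ becomes $k-1-l=k-i\bmod k$ after the substitution $l=i\bmod k-1$, matching the statement exactly (your choice $f=Q^\pi$ is of course equal as a function but gives exponent $k-1-i\bmod k$, which only coincides by the fixed-point property).
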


\begin{proof}
	We start proving the first identity. Let us consider the first identity of Lemma~\ref{lemma:QFunctionDecomposition1}:
	\begin{align}
		Q^{\pi} - Q^\pi_{k} & = \left(\Id - \gamma^k \left( P^{\pi} \right)^k \right)^{-1} \left( \left(\EOp\right)^k Q^{\pi}_k -  \left(\POp \right)^{k-1} \EOp Q^{\pi}_k \right) \notag \\
		& =  \left( \sum_{j=0}^{+\infty} \gamma^{kj}  \left(P^{\pi} \right)^{kj} \right) \left( \left(\EOp\right)^k Q^{\pi}_k -  \left(\POp \right)^{k-1} \EOp Q^{\pi}_k \right) \label{p:401} \\
		& = \left( \sum_{j=0}^{+\infty} \gamma^{kj}  \left(P^{\pi} \right)^{kj} \right) \sum_{l=0}^{k-2} \gamma^{l+1} \left( \Ppi \right)^{l} \left( \Ppi - \Ppi[\delta] \right) \left(\POp \right)^{k-2-l} \EOp Q^\pi_k  \label{p:402}\\
		& = \sum_{j=0}^{+\infty} \gamma^{kj}  \left(P^{\pi} \right)^{kj}  \sum_{l=0}^{k-2} \gamma^{l+1} \left( \Ppi \right)^{l} \left( \Ppi - \Ppi[\delta] \right) \left(\POp \right)^{k-2-l} \EOp Q^\pi_k \notag \\
		& = \sum_{j=0}^{+\infty} \sum_{l=0}^{k-2} \gamma^{kj + l + 1} \left( \Ppi \right)^{kj + l} \left( \Ppi - \Ppi[\delta] \right) \left(\POp \right)^{k-2-l} \EOp Q^\pi_k, \notag
	\end{align}
	where line~\eqref{p:401} follows from applying the Neumann series at the first factor, line~\eqref{p:402} is obtained by applying the first identity of Lemma~\ref{lemma:QFunctionDecomposition2} to the bounded measurable function $\EOp Q^\pi_k$. The subsequent lines are obtained by straightforward algebraic manipulations. Now we rename the indexes by setting $i = kj + l + 1$. Since $l \in \{0,\dots, k-2\}$ we have that $j = (i-1) \bdiv k$ and $l = (i-1) \bmod k$. Moreover, we observe that $i$ ranges over all non-negative integers values except for the multiples of the persistence $k$, \ie $i \in \{n \in \Nat\,:\, n \bmod k \neq 0\}$. Now, recalling that $i \bmod k \neq 0$, we observe that for the distributive property of the modulo operator we have $(i-1) \bmod k = \left(i \bmod k - 1 \bmod k \right) \bmod k = \left(i \bmod k - 1 \right) \bmod k = i \bmod k - 1$.
	The second identity is obtained by an analogous derivation in which we exploit the second identities at Lemmas~\ref{lemma:QFunctionDecomposition1} and~\ref{lemma:QFunctionDecomposition2}.
\end{proof}

\PersistenceBound*
\begin{proof}
	We start from the first equality derived in Lemma~\ref{lemma:PersistenceLemma}, and we apply the $L_p(\rho)$-norm both sides, with $p \ge 1$:
	\begin{align}
	\norm[p][\rho]{Q^\pi - Q^\pi_k}^p & = \norm[p][\rho]{\sum_{\substack{i \in \mathbb{N} \\ i \bmod k \neq 0}} \gamma^{i} \left( \Ppi \right)^{i-1} \left( \Ppi - \Ppi[\delta] \right) \left(\POp \right)^{k-2-(i-1) \bmod k} \EOp Q^\pi_k  }^p \notag\\
	& =  \rho \left| \sum_{\substack{i \in \mathbb{N} \\ i \bmod k \neq 0}} \gamma^{i} \left( \Ppi \right)^{i-1} \left( \Ppi - \Ppi[\delta] \right) \left(\POp \right)^{k-2-(i-1) \bmod k} \EOp Q^\pi_k \right|^p  \label{p:501}\\
	& \le \rho \left| \sum_{\substack{i \in \mathbb{N} \\ i \bmod k \neq 0}} \gamma^{i} \left( \Ppi \right)^{i-1} \sup_{f \in \mathcal{Q}_k}  \left| \left( \Ppi - \Ppi[\delta] \right) f  \right| \right|^p \label{p:502} \\
	& = \left(\frac{\gamma(1-\gamma^{k-1})}{(1-\gamma)(1-\gamma^k)}\right)^p \rho  \left|\frac{(1-\gamma)(1-\gamma^k)}{\gamma(1-\gamma^{k-1})}  \sum_{\substack{i \in \mathbb{N} \\ i \bmod k \neq 0}} \gamma^{i} \left( \Ppi \right)^{i-1} d_{\mathcal{Q}_k}^\pi \right|^p \label{p:503} \\
	& \le \left(\frac{\gamma(1-\gamma^{k-1})}{(1-\gamma)(1-\gamma^k)}\right)^p  \frac{(1-\gamma)(1-\gamma^k)}{\gamma(1-\gamma^{k-1})} \rho  \sum_{\substack{i \in \mathbb{N} \\ i \bmod k \neq 0}} \gamma^{i} \left( \Ppi \right)^{i-1} \left| d_{\mathcal{Q}_k}^\pi \right|^p \label{p:504}\\
	& = \left(\frac{\gamma(1-\gamma^{k-1})}{(1-\gamma)(1-\gamma^k)}\right)^p \eta^{\rho,\pi}_{k} \left| d_{\mathcal{Q}_k}^\pi \right|^p \label{p:505}\\
	& = \left(\frac{\gamma(1-\gamma^{k-1})}{(1-\gamma)(1-\gamma^k)}\right)^p \left\| d^\pi_{\mathcal{Q}_k} \right\|_{p, \eta^{\rho,\pi}}^p. \label{p:506}
%
	\end{align}
	where line~\eqref{p:501} is obtained by the definition of norm, written in the operator form, line~\eqref{p:502} is obtained by bounding $ \left( \Ppi - \Ppi[\delta] \right) \left(\POp \right)^{k-2-(i-1) \bmod k} \le \sup_{f \in \mathcal{Q}_k}  \left| \left( \Ppi - \Ppi[\delta] \right) f  \right|$, recalling the definition of $\mathcal{Q}_k$ and that $(i-1) \bmod k \le k-2$ for all $i \in \mathbb{N}$ and $i \bmod k \neq 0$. Then, line~\eqref{p:503} follows from deriving the normalization constant in order to make the summation $\sum_{\substack{i \in \mathbb{N} \\ i \bmod k \neq 0}} \gamma^{i} \left( \Ppi \right)^{i-1}$ a proper probability distribution. Such a constant can be obtained as follows:
	\begin{align*}
		\sum_{\substack{i \in \mathbb{N} \\ i \bmod k \neq 0}} \gamma^{i} = \sum_{i \in \mathbb{N}} \gamma^{i} - \sum_{i \in \mathbb{N}} \gamma^{ki} = \frac{\gamma(1-\gamma^{k-1})}{(1-\gamma)(1-\gamma^k)}.
	\end{align*}
	Line~\eqref{p:504} is obtained by applying Jensen inequality recalling that $p \ge 1$. Finally, line~\eqref{p:505} derives from the definition of the distribution $\eta^{\rho,\pi}_{k} $ and line~\eqref{p:506} from the definition of $L_p(\eta^{\rho,\pi}_{k})$-norm.
\end{proof}



\begin{lemma}\label{lemma:Lipoperators}
	Let $\mathcal{M}$ be an MDP and $\pi \in \Pi$ be a Markovian stationary policy. Let $f \in \mathscr{B}(\SAs)$ that is $L_f$--LC. Then, under Assumptions~\ref{ass:LipMDP} and~\ref{ass:LipPolicy}, the following statements hold:
	\begin{enumerate}[label=\roman*)]
		\item $ T^{\pi} f$ is $\left( L_r + \gamma L_P (L_{\pi} + 1) L_f \right)$--LC;
		\item $ T^{\delta} f$ is $\left( L_r + \gamma (L_{P} + 1) L_f \right)$--LC;
		\item $ T^{*} f$ is $\left( L_r + \gamma L_P  L_f \right)$--LC.
	\end{enumerate}
\end{lemma}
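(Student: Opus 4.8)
The plan is to prove the three Lipschitz-continuity statements by a direct computation, bounding the difference $(\cdot f)(s,a) - (\cdot f)(\overline s,\overline a)$ for an arbitrary pair $(s,a),(\overline s,\overline a) \in \SAs$ and each of the three operators in turn. In every case the operator has the additive form (reward term) $+\ \gamma\times$(expectation-of-$f$ term), so by the triangle inequality the Lipschitz constant splits as $L_r$ (from Assumption~\ref{ass:LipMDP} on $r$) plus $\gamma$ times a Lipschitz constant for the transition/push-forward part. The core of the argument is therefore to control, for a given $L_f$-LC function $g$ built out of $f$, a quantity of the shape $\left| \int_{\Ss} g(s') \left(P(\de s'|s,a) - P(\de s'|\overline s,\overline a)\right) \right|$, and the standard tool is the Kantorovich duality: if $g$ is $L_g$-LC then this is at most $L_g\,\Kant\!\left(P(\cdot|s,a),P(\cdot|\overline s,\overline a)\right) \le L_g L_P\, d_{\SAs}((s,a),(\overline s,\overline a))$ using Assumption~\ref{ass:LipMDP}.

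For item~(iii), $T^* f = r + \gamma P M f$ with $(Mf)(s') = \max_{a'} f(s',a')$; I would first note that $Mf$ is $L_f$-LC as a function on $\Ss$ (the max of $L_f$-LC functions over a common index set is $L_f$-LC), then apply the Kantorovich bound with $g = Mf$, $L_g = L_f$, to get $L_r + \gamma L_P L_f$. For item~(ii), $T^\delta f = r + \gamma \Ppi[\delta] f$, where $(\Ppi[\delta] f)(s,a) = \int_{\Ss} P(\de s'|s,a) f(s',a)$; here the integrand $s' \mapsto f(s',a)$ is $L_f$-LC in $s'$ for each fixed $a$, but there is an extra subtlety because the action argument $a$ also varies between the two points. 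I would split the difference into a "transition part" (same action $a$, different states, handled by Kantorovich with $L_g = L_f$) and an "action part" ($f(\cdot,a)$ vs $f(\cdot,\overline a)$ under the same measure $P(\cdot|\overline s,\overline a)$, bounded by $L_f\, d_\As(a,\overline a)$ since $f$ is $L_f$-LC jointly); combining with $d_\As(a,\overline a) \le d_{\SAs}((s,a),(\overline s,\overline a))$ and $\Kant(P(\cdot|s,a),P(\cdot|\overline s,\overline a)) \le L_P d_{\SAs}(\cdots)$ gives $L_r + \gamma(L_P+1)L_f$. For item~(i), $T^\pi f = r + \gamma \Ppi f$ with $(\Ppi f)(s,a) = \int_{\Ss} P(\de s'|s,a)\int_\As \pi(\de a'|s') f(s',a')$; the inner integral $g(s') = \int_\As \pi(\de a'|s') f(s',a')$ needs to be shown $(L_P(L_\pi+1)L_f)$... more precisely $((L_\pi+1)L_f)$-LC in $s'$ — this follows by writing $g(s') - g(\overline{s}') = \int (\pi(\de a'|s') - \pi(\de a'|\overline s'))f(s',a') + \int \pi(\de a'|\overline s')(f(s',a') - f(\overline s', a'))$, bounding the first term by $L_f \Kant(\pi(\cdot|s'),\pi(\cdot|\overline s')) \le L_f L_\pi d_\Ss(s',\overline s')$ (Assumption~\ref{ass:LipPolicy}, using that $a' \mapsto f(s',a')$ is $L_f$-LC) and the second by $L_f d_\Ss(s',\overline s')$ — then applying the Kantorovich bound to $g$ yields $L_r + \gamma L_P(L_\pi+1)L_f$.

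The main obstacle, and the only place requiring care, is the bookkeeping around the two-argument structure: in each operator $f$ is a function of $(s',a')$, and depending on the operator the "next action" $a'$ is either re-sampled from $\pi(\cdot|s')$, copied from the (varying) input action $a$, or maximized out. Each mode demands a slightly different decomposition before Kantorovich duality can be applied cleanly, and one must be consistent about which variable is held fixed when invoking the joint Lipschitz property of $f$ versus the Lipschitz property of $P$ in $d_{\SAs}$. Once the right add-and-subtract splitting is in place for each of the three cases, the remaining estimates are routine applications of Assumptions~\ref{ass:LipMDP} and~\ref{ass:LipPolicy} and the triangle inequality, and dividing through by $d_{\SAs}((s,a),(\overline s,\overline a))$ and taking the supremum gives the claimed semi-norm bounds.
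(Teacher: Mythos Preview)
Your proposal is correct and follows essentially the same route as the paper's proof: for each operator you split off the reward term via Assumption~\ref{ass:LipMDP}, then reduce the transition part to a Kantorovich distance against $P$ by first establishing the appropriate Lipschitz constant for the inner function ($Mf$ is $L_f$-LC for $T^*$; $s'\mapsto\int\pi(\de a'|s')f(s',a')$ is $(L_\pi+1)L_f$-LC for $T^\pi$; and for $T^\delta$ an add-and-subtract in the action coordinate yields the extra $+1$). The decompositions, the use of Kantorovich duality, and the final constants all match the paper.
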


\begin{proof}
	Let $f\in \mathscr{B}(\SAs)$ be $L_f$-LC. Consider an application of $T^\pi$ and $(s,a),(\overline{s},\overline{a}) \in \SAs$:
	\begin{align}
		\left| (T^\pi f)(s,a) - (T^\pi f)(\overline{s},\overline{a}) \right| & = \left| r(s,a) + \gamma \int_{\Ss} \int_{\As} P(\de s'|s,a) \pi(\de a'|s') f(s',a') - r(\overline{s},\overline{a}) - \gamma \int_{\Ss} \int_{\As} P(\de s'|\overline{s},\overline{a}) \pi(\de a'|s') f(s',a')  \right| \notag \\
		& \le \left| r(s,a) - r(\overline{s},\overline{a}) \right| + \gamma \left| \int_{\Ss}  \left(P(\de s'|s,a) -  P(\de s'|\overline{s},\overline{a}) \right) \int_{\As} \pi(\de a'|s') f(s',a') \right| \label{p:801}\\
		& \le \left| r(s,a) - r(\overline{s},\overline{a}) \right| +  \gamma  (L_\pi+1) L_{f} \suplip \left| \int_{\Ss} \left( P(\de s'|s,a) -  P(\de s'|\overline{s},\overline{a}) \right) f(s') \right| \label{p:802}\\
		&  \le \left( L_r + \gamma L_P (L_{\pi} + 1) L_f \right) d_{\SAs}\left((s,a),(\overline{s},\overline{a}) \right), \label{p:803}
	\end{align}
	where line~\eqref{p:801} follows from triangular inequality, line~\eqref{p:802} is obtained from observing that the function $g_f(s') = \int_{\As} \pi(\de a'|s') f(s',a')$ is $(L_\pi+1)L_f$--LC, since for any $s,\overline{s} \in \Ss$:
	\begin{align}
		\left|g_f(s) - g_f(\overline{s})\right| & = \left| \int_{\As}\pi(\de a|s) f(s,a) - \int_{\As} \pi(\de a|\overline{s}) f(\overline{s},a) \right| \notag\\
		& = \left| \int_{\As}\pi(\de a|s) f(s,a) - \int_{\As} \pi(\de a|\overline{s}) f(\overline{s},a) \pm \int_{\As} \pi(\de a|\overline{s}) f(s,a) \right| \notag\\
		& \le \left| \int_{\As} \left( \pi(\de a|s) - \pi(\de a|\overline{s}) \right) f(s,a) \right| + \left| \int_{\As} \pi(\de a|\overline{s})  \left( f(\overline{s},a) - f(s,a) \right) \right| \notag \\
		& \le L_f \suplip \left| \int_{\As} \left( \pi(\de a|s) - \pi(\de a|\overline{s}) \right) f(a) \right| + \left| \int_{\As} \pi(\de a|\overline{s})  \left( f(\overline{s},a) - f(s,a) \right) \right|  \notag\\
		& \le L_f L_\pi d_{\Ss} (s,\overline{s}) + L_f d_{\Ss} (s,\overline{s}),\notag
	\end{align}
	where we exploited the fact that $L_{\pi}$--LC. 
	Finally, line~\eqref{p:803} is obtained by recalling that the reward function is $L_r$--LC and the transition model is $L_P$--LC. The derivations are analogous for $T^\delta$ and $T^*$. Concerning $T^\delta$ we have:
	\begin{align*}
	\left| (T^\delta f)(s,a) - (T^\delta f)(\overline{s},\overline{a}) \right| & \le \left| r(s,a) - r(\overline{s},\overline{a}) \right| + \gamma \left| \int_{\Ss} \int_{\As} \left(\delta_{a}(\de a') P(\de s'|s,a) -  \delta_{\overline{a}}(\de a')  P(\de s'|\overline{s},\overline{a}) \right)f(s',a')  \right|\\ 
	& \le L_r d_{\SAs}\left((s,a),(\overline{s},\overline{a}) \right) + \gamma  \left| \int_{\Ss} \left(P(\de s'|s,a) -  P(\de s'|\overline{s},\overline{a}) \right) \int_{\As} \delta_{a}(\de a') f(s',a')  \right|\\
	& \quad + \gamma \int_{\Ss} P(\de s'|\overline{s},\overline{a}) \left| \int_{\As} \left( \delta_{a}(\de a') - \delta_{\overline{a}}(\de a') \right) f(s',a') \right|	 \\
	& \le \left(L_r  + \gamma L_f L_P + \gamma L_f \right) d_{\SAs}\left((s,a),(\overline{s},\overline{a}) \right),
	\end{align*}
	where we observed that $\int_{\As} \delta_{a}(\de a') f(s',a') = f(s',a)$ is $L_f$--LC and that $\int_{\As} \left| \delta_{a}(\de a') - \delta_{\overline{a}}(\de a') \right| f(s',a') \le L_f d_{\As}(a,\overline{a}) \le L_f d_{\SAs}((s,a),(\overline{a},\overline{a}))$. Finally, considering $T^*$, we have:
	\begin{align*}
		\left| (T^* f)(s,a) - (T^* f)(\overline{s},\overline{a}) \right| & \le \left| r(s,a) - r(\overline{s},\overline{a}) \right| + \gamma \left| \int_{\Ss}  \left(P(\de s'|s,a) -  P(\de s'|\overline{s},\overline{a}) \right)\max_{a' \in As} f(s',a')  \right| \\
		& \le  \left(L_r  + \gamma L_f L_P \right) d_{\SAs}\left((s,a),(\overline{s},\overline{a}) \right),
	\end{align*}
	where we observed that the function $h_f(s') = \max_{a' \in As} f(s',a')$ is $L_f$--LC, since:
	\begin{align*}
		\left| h_f(s) - h_f(\overline{s}) \right| & = \left| \max_{a' \in As} f(s,a') - \max_{a' \in As} f(\overline{s},a') \right| \\
		& \le \max_{a' \in \As} \left| f(s,a') - f(\overline{s},a') \right| \\
		& \le L_f d_{\Ss} (s,\overline{s}).
	\end{align*}
\end{proof}

\begin{lemma}\label{lemma:DissimilatityToKant}
	Let $\mathcal{M}$ be an MDP and $\pi \in \Pi$ be a Markovian stationary policy. Then, under Assumptions~\ref{ass:LipMDP} and~\ref{ass:LipPolicy}, if $\gamma \max \{ L_P+1,  L_P(L_\pi+1) \} < 1$, the functions $f \in \mathcal{Q}_k$ are $L_{\mathcal{Q}_k}$--LC, where:
	\begin{equation}
		L_{\mathcal{Q}_k} \le \frac{L_r}{1-\gamma \max \{ L_P+1,  L_P(L_\pi+1) \}}.
	\end{equation}
	Furthermore, for all $(s,a) \in \SAs$ it holds that:
	\begin{equation}
		d_{\mathcal{Q}_k}(s,a) \le L_{\mathcal{Q}_k} \Kant \left(P^{\pi}(\cdot|s,a), P^\delta (\cdot|s,a) \right).
	\end{equation}
\end{lemma}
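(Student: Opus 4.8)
The plan is to control the Lipschitz semi-norm of every member of $\mathcal{Q}_k$ by $L_{\mathcal{Q}_k}$ using the operator-wise estimates of Lemma~\ref{lemma:Lipoperators}, and then to turn the integral-probability-metric $d_{\mathcal{Q}_k}(s,a)$ into a Kantorovich distance through the variational definition of $\Kant$. As a bookkeeping device, set $\alpha := \gamma \max\{L_P+1,\, L_P(L_\pi+1)\}$, so that $\alpha < 1$ by hypothesis and $L_{\mathcal{Q}_k} = L_r/(1-\alpha)$. The effect of Lemma~\ref{lemma:Lipoperators} is that each of $\EOp$, $\POp$, $\OOp$ sends an $L_f$-LC function to a function whose Lipschitz semi-norm is at most $L_r + \alpha L_f$, because the coefficients $\gamma L_P(L_\pi+1)$, $\gamma(L_P+1)$, $\gamma L_P$ appearing there are all $\le \alpha$. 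I will work with the affine map $\phi(L) = L_r + \alpha L$ on $\mathbb{R}_{\ge 0}$: it is monotone increasing with unique fixed point $L^* = L_r/(1-\alpha) = L_{\mathcal{Q}_k}$, so $\phi^{(n)}(L) \le L^*$ for every $L \le L^*$ and every $n \in \Nat$.

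The key intermediate claim is that $Q^\pi_k$ is $L_{\mathcal{Q}_k}$-LC. Recall $Q^\pi_k$ is the unique fixed point of $\EOp_k = (\POp)^{k-1}\EOp$, which is a $\gamma^k$-contraction in $L_\infty$. I would run value iteration $Q^{(0)} = 0$, $Q^{(m+1)} = \EOp_k Q^{(m)}$, which converges to $Q^\pi_k$ uniformly, hence pointwise. Since one step is a composition of $k$ operators among $\{\EOp, \POp\}$, the Lipschitz semi-norm of $Q^{(m)}$ is at most $\phi^{(mk)}(0)$, and this is $\le L^*$ for all $m$ by the remark above (as $0 \le L^*$). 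A pointwise limit of functions that are uniformly $C$-LC is itself $C$-LC, so $Q^\pi_k$ is $L^*$-LC $= L_{\mathcal{Q}_k}$-LC. (Equivalently, one may note that the closed set of bounded $L^*$-LC functions is invariant under $\EOp_k$ and apply Banach's fixed-point theorem on that set.)

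Given this, any $f = \left(\POp\right)^{k-2-l} \EOp Q^\pi_k \in \mathcal{Q}_k$ is obtained from $Q^\pi_k$ by $k-1-l$ further operator applications, so its Lipschitz semi-norm is at most $\phi^{(k-1-l)}(L^*) = L^*$, which proves the first statement. For the second, fix $(s,a) \in \SAs$ and $f \in \mathcal{Q}_k$. If $L_{\mathcal{Q}_k} = 0$, then every member of $\mathcal{Q}_k$ is constant and both sides of the inequality vanish; otherwise $f / L_{\mathcal{Q}_k}$ has Lipschitz semi-norm $\le 1$, so by the definition of $\Kant$ as a supremum over $1$-Lipschitz test functions,
\begin{equation*}
\begin{aligned}
\left| \int_{\Ss} \int_{\As} \left( P^\pi - P^\delta \right)(\de s', \de a' | s,a)\, f(s',a') \right| &= L_{\mathcal{Q}_k} \left| \int \left( P^\pi - P^\delta \right) \big( f / L_{\mathcal{Q}_k} \big) \right| \\
&\le L_{\mathcal{Q}_k}\, \Kant\!\left( P^\pi(\cdot|s,a), P^\delta(\cdot|s,a) \right).
\end{aligned}
\end{equation*}
Taking the supremum over $f \in \mathcal{Q}_k$ gives $d_{\mathcal{Q}_k}(s,a) \le L_{\mathcal{Q}_k}\, \Kant\!\left( P^\pi(\cdot|s,a), P^\delta(\cdot|s,a) \right)$.

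The only delicate point is the second paragraph: transferring the uniform Lipschitz bound from the value-iteration iterates to the fixed point $Q^\pi_k$, for which I rely on the contraction property (to get convergence), the monotonicity and fixed-point structure of $\phi$ (to keep the bound uniform), and the stability of Lipschitz bounds under pointwise limits. Everything else is a mechanical application of Lemma~\ref{lemma:Lipoperators} and the definition of the Wasserstein metric.
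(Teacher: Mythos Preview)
Your proof is correct and follows essentially the same approach as the paper: both use Lemma~\ref{lemma:Lipoperators} to propagate Lipschitz bounds through the operators $\EOp$ and $\POp$, pass to the fixed point $Q^\pi_k$ via value iteration, show the bound is preserved by further applications of these operators (hence holds on all of $\mathcal{Q}_k$), and then reduce $d_{\mathcal{Q}_k}$ to a Kantorovich distance by rescaling. Your packaging via the dominating affine map $\phi(L)=L_r+\alpha L$ and its fixed point $L^*$ is a bit cleaner than the paper's explicit two-level recursion $(a_h,b_j)$ followed by a separate bounding step, but the substance is identical.
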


\begin{proof}
	First of all consider the action-value function of the $k$--persistent MDP $Q^{\pi}_k$, which is the fixed point of the operator $T^{\pi}_k$ that decomposes into $(T^\delta)^{k-1} T^\pi$ according to Theorem~\ref{thr:persistentOperators}. It follows that for any $f \in \mathscr{B}(\SAs)$ we have:
	\begin{equation*}
		Q^{\pi}_k = \lim_{j \rightarrow + \infty} \left( T^{\pi}_k \right)^j f = \lim_{j \rightarrow + \infty} \left( (T^\delta)^{k-1} T^\pi \right)^j f.
	\end{equation*}
	We now want to bound the Lipschitz constant of $Q^{\pi}_k$. To this purpose, let us first compute the Lipschitz constant of $T^\pi_k f = ((T^\delta)^{k-1} T^\pi) f$ for $f \in \mathscr{B}(\SAs)$ being an $L_f$--LC function. From Lemma~\ref{lemma:Lipoperators} we can bound the Lipschitz constant $a_h$ of $(T^\delta)^h T^\pi f$ for $h \in \{0,...k-1\}$, leading to the sequence:
	\begin{align*}
		a_h = \begin{cases}
				L_r + \gamma L_P(L_\pi+1) L_f & \text{if } h = 0 \\
				L_r + \gamma (L_P+1) a_{h-1} & \text{if } h \in \{1,...k-1\}
			\end{cases}.
	\end{align*}
	Thus, the Lipschitz constant of $((T^\delta)^{k-1} T^\pi) f$ is $a_{k-1}$. By unrolling the recursion we have:
	\begin{align*}
		a_{k-1} = L_r \sum_{i=0}^{k-1} \gamma^i (L_P+1)^i + \gamma^k L_P(L_\pi+1) (L_P+1)^{k-1}   L_f = L_r \frac{1- \gamma^k (L_P+1)^k}{1- \gamma(L_P+1)} + \gamma^k L_P(L_\pi+1) (L_P+1)^{k-1}   L_f.
	\end{align*}
	Let us now consider the sequence $b_j$ of the Lipschitz constants of $(T^\pi_k)^j  f$ for $j \in \Nat$:
	\begin{align*}
		b_j = \begin{cases}
				L_f & \text{if } j = 0 \\
				L_r \frac{1- \gamma^k (L_P+1)^k}{1- \gamma(L_P+1)} + \gamma^k L_P(L_\pi+1) (L_P+1)^{k-1}   b_{j-1} & \text{if } j \in \Nat[1]
			\end{cases}.
	\end{align*}
	The sequence $b_j$ converges to a finite limit as long as $\gamma^k L_P(L_\pi+1) (L_P+1)^{k-1} < 1$. In such case, the limit $b_{\infty}$ can be computed solving the fixed point equation:
	\begin{align*}
	b_{\infty} = L_r \frac{1- \gamma^k (L_P+1)^k}{1- \gamma(L_P+1)} + \gamma^k L_P(L_\pi+1) (L_P+1)^{k-1}   b_{\infty} \quad \implies \quad b_{\infty} = \frac{L_r \left(1- \gamma^k (L_P+1)^k \right)}{\left( 1- \gamma(L_P+1) \right) \left(1 -  \gamma^k L_P(L_\pi+1) (L_P+1)^{k-1} \right)}.
\end{align*}	 
Thus, $b_{\infty}$ represents the Lipschitz constant of $Q^{\pi}_k$.
%
%
%
%
%
It is worth noting that when setting $k=1$ we recover the Lipschitz constant of the $Q^{\pi}$ as in~\citep{rachelson2010locality}. To get a bound that is independent on $k$ we define $L = \max\{L_P(L_\pi+1), L_P+1\}$, assuming that $\gamma L < 1$ so that:
	\begin{align*}
	b_{\infty} = \frac{L_r \left(1- \gamma^k (L_P+1)^k \right)}{\left( 1- \gamma(L_P+1) \right) \left(1 -  \gamma^k L_P(L_\pi+1) (L_P+1)^{k-1} \right)} \le \frac{L_r }{1-\gamma L},
\end{align*}	 
having observed that $\frac{1- \gamma^k (L_P+1)^k }{1- \gamma(L_P+1)} \le \frac{1- \gamma^k L^k }{1- \gamma L} $. Thus, we conclude that $Q^\pi_k$ is also $ \frac{L_r }{1-\gamma L}$--LC for any $k \in \Nat[1]$. Consider now the application of the operator $T^{\pi}$ to $Q_k^\pi$, we have that the corresponding Lipschitz constant can be bounded by:
	\begin{equation}
		L_{T^{\pi} Q_k^\pi} \le L_r + \gamma L_{P}(L_\pi+1)  \frac{L_r }{1-\gamma L} \le L_r + \gamma L  \frac{L_r }{1-\gamma L} = \frac{L_r }{1-\gamma L}.
\end{equation}	 
A similar derivation holds for the application of $T^\delta$. As a consequence, any arbitrary sequence of applications of $T^\pi$ and $T^\delta$ to $Q^\pi_k$ generates a sequence of $\frac{L_r }{1-\gamma L}$--LC functions. Even more so for the functions in the set $\mathcal{Q}_k = \{  \left(\POp \right)^{k-2-l} \EOp Q^\pi_k \,:\, l \in \{0,\dots, k-2\} \}$. As a consequence, we can rephrase the dissimilarity term $d_{\mathcal{Q}_k}^\pi(s,a)$ as a Kantorovich distance:
\begin{align*}
	 d_{\mathcal{Q}_k}^\pi(s,a) &= \sup_{f \in \mathcal{Q}_k} \left| \int_{\Ss} \int_{\As} \left( \Ppi(\de s', \de a' | s,a) - P^\delta (\de s', \de a'| s,a) \right) f(s',a') \right| \\
	 & \le L_{\mathcal{Q}_k} \suplip\left| \int_{\Ss} \int_{\As} \left( \Ppi(\de s', \de a' | s,a) - P^\delta (\de s', \de a'| s,a) \right) f(s',a') \right| \\
	 & =  L_{\mathcal{Q}_k}\Kant \left(P^{\pi}(\cdot|s,a), P^\delta (\cdot|s,a) \right).
\end{align*}
\end{proof}

\corollTLC*
\begin{proof}
	Let us now consider the dissimilarity term in norm:
	\begin{align*}
		\left\| d_{\mathcal{Q}_k}^\pi \right\|_{p, \eta^{\rho,\pi}_{k}}^p & =  \int_{\Ss} \int_{\As}  \eta^{\rho,\pi}_{k}(\de s, \de a) \left| \sup_{f \in \mathcal{Q}_k} \left| \int_{\Ss} \int_{\As} \left( P^{\pi}(\de s', \de a' | s,a) - P^\delta(\de s', \de a' | s,a) \right) f(s',a') \right| \right|^p \\
		& \le L_{\mathcal{Q}_k}^p \int_{\Ss} \int_{\As}  \eta^{\rho,\pi}_{k}(\de s, \de a) \left| \suplip \left| \int_{\Ss} \int_{\As} \left( \Ppi(\de s', \de a' | s,a) - P^\delta (\de s', \de a'| s,a) \right) f(s',a') \right| \right|^p,
	\end{align*}
	where the inequality follows from Lemma~\ref{lemma:DissimilatityToKant}.
	We now consider the inner term and perform the following algebraic manipulations:
	\begin{align}
	\suplip & \left| \int_{\Ss} \int_{\As} \left( P^{\pi}(\de s', \de a' | s,a) - P^\delta(\de s', \de a' | s,a) \right) f(s',a') \right| \notag\\
	& = \suplip \bigg| \int_{\Ss} \int_{\As} P(\de s'|s,a)\pi(\de a' | s') f(s',a') - \int_{\Ss} \int_{\As} P(\de s'|s,a) \delta_a(\de a') f(s',a') \notag\\
	& \quad \pm  \int_{\Ss} \int_{\As} \delta_{s}(\de s')\pi(\de a' | s') \pm \int_{\Ss} \int_{\As} \delta_{s}(\de s') \delta_a(\de a') f(s',a') \bigg|  \notag\\
	& \le \suplip \left| \int_{\Ss} \left( P(\de s'|s,a) - \delta_s(\de s') \right) \int_{\As} \pi(\de a' | s') f(s', a') \right| \notag \\
	& \quad + \suplip \left| \int_{\Ss} \left( P(\de s'|s,a) - \delta_s(\de s') \right) \int_{\As} \delta_a(\de a') f(s', a') \right| \notag \\
	& \quad + \suplip \left| \int_{\Ss} \delta_s(\de s')  \int_{\As} \left( \pi(\de a'|s') - \delta_a(\de a')\right) f(s', a') \right|. \notag
\end{align}
We now consider the first two terms:
\begin{align}
	\suplip & \left| \int_{\Ss} \left( P(\de s'|s,a) - \delta_s(\de s') \right) \int_{\As} \pi(\de a' | s') f(s', a') \right| + \suplip \left| \int_{\Ss} \left( P(\de s'|s,a) - \delta_s(\de s') \right) \int_{\As} \delta_a(\de a') f(s', a') \right| \notag \\
	& \le  (L_{\pi} + 1 ) \Kant \left( P(\cdot|s,a) , \delta_s \right) \label{p:601} \\
	& \le (L_{\pi} + 1 ) L_T,\notag
\end{align}
where line~\eqref{p:601} follows from observing that the function $g_f(s') =  \int_{\As} \pi(\de a' | s') f(s', a')$ is $L_{\pi}$-LC, and function $h_f(s') = \int_{\As} \delta_a(\de a') f(s', a') = f(s',a)$ is 1-LC. Moreover, under Assumption~\ref{ass:TimeLipAss}, we have that $\Kant \left( P(\cdot|s,a) , \delta_s \right)  \le L_T$. Let us now focus on the third term:
\begin{align}
	\suplip & \left| \int_{\Ss} \delta_s(\de s')  \int_{\As} \left( \pi(\de a'|s') - \delta_a(\de a')\right) f(s', a') \right|  = \suplip \left| \int_{\As} \left( \pi(\de a'|s) - \delta_a(\de a')\right) f(s, a') \right| \notag \\
	& = \suplip \left| \int_{\As} \left( \pi(\de a'|s) - \delta_a(\de a')\right) f(a') \right|\label{p:610} \\
	& = \suplip \left| \int_{\As} \left( \int_{\As} \pi(\de a''|s) \delta_{a'}(\de a'') - \delta_a(\de a')\right) f(a') \right|\label{p:611} \\ 
	& = \suplip \left| \int_{\As}   \pi(\de a''|s) \int_{\As} \left(  \delta_{a''}(\de a') - \delta_a(\de a') \right) f(a') \right| \label{p:612}\\ 
	& \le  \int_{\As}   \pi(\de a''|s)  \suplip \left| \int_{\As} \left(  \delta_{a''}(\de a') - \delta_a(\de a') \right) f(a') \right| \label{p:613}\\ 
	& = \int_{\As}   \pi(\de a''|s) d_{\As}(a,a''),\label{p:614}
\end{align}
where line~\eqref{p:610} follows from observing that the dependence on $s$ for function $f$ can be neglected because of the supremum, line~\eqref{p:611} is obtained from the equality $ \pi(\de a'|s) =  \int_{\As} \pi(\de a''|s) \delta_{a'}(\de a'')$, line~\eqref{p:612} derives from moving the integral over $a''$ outside and recalling that $\delta_{a''}(\de a') = \delta_{a'}(\de a'') $, line~\eqref{p:613} comes from Jensen inequality. Finally, line~\eqref{p:614} is obtained from the definition of Kantorovich distance between Dirac deltas. Now, we take the expectation \wrt $\eta_k^{\rho,\pi}$. Recalling that $\rho(s,a) = \rho_{\Ss}(s) \pi(a|s)$ it follows that the same decomposition holds for $\eta_{k}^{\rho,\pi} (s,a) = \eta_{k, \Ss}^{\rho,\pi}(s) \pi(a|s)$. Consequently, exploiting the above equation, we have:
\begin{align*}
	\int_{\Ss} \eta_{k,\Ss}^{\rho, \pi}(\de s) \int_{\As} \pi(\de a |s) \left| \int_{\As}   \pi(\de a''|s) d_{\As}(a,a'') \right|^p & \le \int_{\Ss} {(\eta_k^{\rho, \pi})}_{\Ss}(\de s)  \int_{\As} \pi(\de a |s) \int_{\As}   \pi(\de a''|s)  d_{\As}(a,a'')^p \\
	& \le \sup_{s \in \Ss} \int_{\As}\int_{\As}   \pi(\de a |s)   \pi(\de a''|s)  d_{\As}(a,a'')^p = \sigma_p^p,
\end{align*}
where the first inequality follows from an application of Jensen inequality. 
An application of Minkowski inequality on the norm $\left\| d_{\mathcal{Q}_k}^\pi \right\|_{p, \eta^{\rho,\pi}_{k}}$ concludes the proof.
\end{proof}

\begin{lemma}\label{lemma:toVariance}
	If $\mathcal{A}= \mathbb{R}^{d_{\mathcal{A}}}$, $d_{\As}(\mathrm{\mathbf{a}},\mathrm{\mathbf{a}}') = \norm[2][]{\mathrm{\mathbf{a}} - \mathrm{\mathbf{a}}'}$, then it holds that $\sigma_2^2 \le2 \sup_{s \in \Ss} \Var[A]$, with $A \sim \pi(\cdot|s)$.
\end{lemma}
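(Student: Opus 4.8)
The plan is to recognize $\sigma_2^2$ as an expectation over an i.i.d.\ pair of actions and then invoke the elementary identity $\E\bigl[\norm[2][]{A-A'}^2\bigr] = 2\Var[A]$ for i.i.d.\ $A,A'$. First I would fix a state $s \in \Ss$ and introduce two independent random vectors $A, A' \sim \pi(\cdot|s)$, together with the mean vector $\mu_s = \E_{A \sim \pi(\cdot|s)}[A]$, which is well defined whenever $\pi(\cdot|s)$ has finite second moment (as is implicitly assumed for $\sigma_2$ to be meaningful; if $\sup_{s}\Var[A] = +\infty$ the claimed inequality is trivial). By the definition of $\sigma_p$ with $p=2$,
\[
	\int_{\As}\int_{\As} \norm[2][]{a - a'}^2 \pi(\de a|s)\,\pi(\de a'|s) = \E\Bigl[ \norm[2][]{A - A'}^2 \Bigr].
\]

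Next I would expand $A - A' = (A - \mu_s) - (A' - \mu_s)$ and use bilinearity of the Euclidean inner product to write
\[
	\norm[2][]{A - A'}^2 = \norm[2][]{A - \mu_s}^2 - 2\,\langle A - \mu_s,\, A' - \mu_s \rangle + \norm[2][]{A' - \mu_s}^2.
\]
Taking expectations, the cross term vanishes: $A$ and $A'$ are independent and $\E[A - \mu_s] = \E[A' - \mu_s] = 0$, hence $\E[\langle A - \mu_s, A' - \mu_s\rangle] = \langle \E[A - \mu_s], \E[A' - \mu_s]\rangle = 0$. Since $A$ and $A'$ are identically distributed, the two remaining terms coincide, giving
\[
	\E\Bigl[ \norm[2][]{A - A'}^2 \Bigr] = 2\,\E\Bigl[ \norm[2][]{A - \mu_s}^2 \Bigr] = 2\,\Var[A], \qquad A \sim \pi(\cdot|s),
\]
where $\Var[A]$ is read as the trace of the covariance, i.e.\ $\E\norm[2][]{A - \E[A]}^2$, consistent with the statement of the lemma.

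Finally I would take the supremum over $s \in \Ss$ on both sides, obtaining $\sigma_2^2 = \sup_{s \in \Ss} 2\,\Var[A] = 2\sup_{s \in \Ss}\Var[A]$, which in particular yields the asserted bound $\sigma_2^2 \le 2\sup_{s\in\Ss}\Var[A]$ (in fact with equality). There is no genuine obstacle in this argument; the only point requiring a word of care is the integrability needed to define $\mu_s$ and to split the expectation of the squared norm, and, as noted, this is automatic whenever the right-hand side is finite.
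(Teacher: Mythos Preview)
Your proof is correct and follows essentially the same route as the paper's: center both actions at the mean $\mu_s=\E_{A\sim\pi(\cdot|s)}[A]$, expand $\norm[2][]{A-A'}^2$, and observe that the cross term vanishes by independence and zero mean. You in fact sharpen the paper's presentation slightly by noting that the result holds with equality, whereas the paper writes the key step as an inequality.
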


\begin{proof}
	Let $s \in \mathcal{S}$ and define the mean--action in state $s$ as:
	\begin{equation*}
		\overline{\mathrm{\mathbf{a}}}(s) = \int_{\As} \mathrm{\mathbf{a}} \pi(\de \mathrm{\mathbf{a}}|s).
	\end{equation*}
	Thus, we have:
	\begin{align*}
		\sigma_2^2 & = \sup_{s \in \Ss} \int_{\As} \int_{\As}  \left\| \mathrm{\mathbf{a}}-\mathrm{\mathbf{a}}' \right\|_2^2 \pi(\de \mathrm{\mathbf{a}}|s) \pi(\de \mathrm{\mathbf{a}}'|s) \\
		& = \sup_{s \in \Ss} \int_{\As} \int_{\As}  \left\| \mathrm{\mathbf{a}}-\mathrm{\mathbf{a}}' \pm  \overline{\mathrm{\mathbf{a}}}(s) \right\|_2^2 \pi(\de \mathrm{\mathbf{a}}|s) \pi(\de \mathrm{\mathbf{a}}'|s)\\
		& \le  \sup_{s \in \Ss} \int_{\As} \int_{\As}  \left\| \mathrm{\mathbf{a}}- \overline{\mathrm{\mathbf{a}}}(s) \right\|_2^2 \pi(\de \mathrm{\mathbf{a}}|s) \pi(\de \mathrm{\mathbf{a}}'|s) + \sup_{s \in \Ss} \int_{\As} \int_{\As}  \left\| \mathrm{\mathbf{a}}' - \overline{\mathrm{\mathbf{a}}}(s) \right\|_2^2 \pi(\de \mathrm{\mathbf{a}}|s) \pi(\de \mathrm{\mathbf{a}}'|s) \\
		& = \sup_{s \in \Ss}\int_{\As}  \left\| \mathrm{\mathbf{a}}- \overline{\mathrm{\mathbf{a}}}(s) \right\|_2^2 \pi(\de \mathrm{\mathbf{a}}|s) + \sup_{s \in \Ss} \int_{\As}  \left\| \mathrm{\mathbf{a}}' - \overline{\mathrm{\mathbf{a}}}(s) \right\|_2^2 \pi(\de \mathrm{\mathbf{a}}'|s) \\
		& = 2 \sup_{s \in \Ss}\int_{\As}  \left\| \mathrm{\mathbf{a}}- \overline{\mathrm{\mathbf{a}}}(s) \right\|_2^2 \pi(\de \mathrm{\mathbf{a}}|s) = 2 \sup_{s \in \Ss} \Var[A].
	\end{align*}
\end{proof}

\begin{remark}[On the choice of $d_{\mathcal{A}}$ when $|\As|<+\infty$] When the action space $\As$ is finite and it is a subset of a metric space (\eg $\mathbb{R}^{d_{\As}}$) we can employ the same metric as $d_{\mathcal{A}}$. Otherwise, we use the \emph{discrete metric} $d_{\As}(a,a') = \mathds{1} \{a \neq a'\}$ . 
\end{remark}

\subsection{Proofs of Section~\ref{sec:pfqi}}\label{apx:proofPFQI}

\begin{prop}\label{prop:complexity}
	Assuming that the evaluation of the estimated Q-function in a state action pair has computational complexity $\mathcal{O}(1)$, the computational complexity of $J$ iterations of \algname run with a dataset $\mathcal{D}$ of $n$ samples, neglecting the cost of the regression, is given by:
	\begin{equation*}
	\mathcal{O} \left( J n \left( 1 + \frac{|\As| - 1}{k} \right)\right).
	\end{equation*}
\end{prop}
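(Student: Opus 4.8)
The plan is a direct counting argument over the $J$ iterations of \algname, tracking only the number of evaluations of the current estimate $Q^{(j)}$ (the only per-iteration cost not covered by \quotes{neglecting the cost of the regression}). Since $J \bmod k = 0$, the loop index $j$ over $\{0,\dots,J-1\}$ splits into two groups: the $J/k$ iterations with $j \bmod k = 0$ (namely $j \in \{0,k,2k,\dots,J-k\}$), in which Phase 1 applies the empirical optimal operator $\aOOp$, and the remaining $J - J/k$ iterations with $j \bmod k \neq 0$, in which it applies the empirical persistent operator $\aPOp$.

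First I would bound the per-iteration cost. Computing $(\aOOp Q^{(j)})(S_i,A_i) = R_i + \gamma \max_{a \in \As} Q^{(j)}(S_i',a)$ for a single sample requires $|\As|$ evaluations of $Q^{(j)}$ (one per action, to resolve the $\max$), hence $\mathcal{O}(n|\As|)$ evaluations over the whole dataset, which by the $\mathcal{O}(1)$-evaluation assumption is $\mathcal{O}(n|\As|)$ time; whereas computing $(\aPOp Q^{(j)})(S_i,A_i) = R_i + \gamma Q^{(j)}(S_i',A_i)$ requires a single evaluation per sample, hence $\mathcal{O}(n)$ time. Phase 2 is a regression, whose cost we are instructed to neglect, and Phase 3 is executed only once after the loop, so it contributes a term constant in $J$ that is absorbed into the bound.

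Then I would sum over iterations: the total cost is
\begin{equation*}
	\frac{J}{k}\cdot \mathcal{O}(n|\As|) + \left(J - \frac{J}{k}\right)\cdot \mathcal{O}(n) = \mathcal{O}\!\left( \frac{Jn|\As|}{k} + Jn\left(1 - \frac{1}{k}\right)\right) = \mathcal{O}\!\left( Jn\left(1 + \frac{|\As|-1}{k}\right)\right),
\end{equation*}
which is exactly the claimed bound. There is no genuine obstacle here; the only point requiring a moment's care is that the hypothesis $J \bmod k = 0$ guarantees precisely $J/k$ applications of $\aOOp$ (and $J(k-1)/k$ applications of $\aPOp$), so that the expensive $\max$-operator is invoked only a $1/k$ fraction of the time, which is what produces the $(|\As|-1)/k$ term and makes the complexity decrease monotonically in the persistence $k$.
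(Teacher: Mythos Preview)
Your proposal is correct and follows essentially the same counting argument as the paper: split the $J$ iterations into the $J/k$ applications of $\aOOp$ (costing $\mathcal{O}(n|\As|)$ each) and the $J(k-1)/k$ applications of $\aPOp$ (costing $\mathcal{O}(n)$ each), then sum to obtain $\mathcal{O}\!\left(Jn\left(1 + \frac{|\As|-1}{k}\right)\right)$. The paper's proof is slightly terser but otherwise identical in structure and content.
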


\begin{proof}
	Let us consider an iteration $j =0,\dots,J-1$. If $j\bmod k = 0$, we perform an application of $\aOOp$ which requires to perform $n |\As|$ evaluations of the next-state value function in order to compute the maximum over the actions. On the contrary, when $j\bmod k \neq 0$, we perform an application of $\aPOp$ which requires just $n$ evaluations, since the next-state value function is evaluated in the persistent action only. By the definition of \algname, $J$ must be an integer multiple of the persistence $k$. Recalling that a single evaluation of the approximate Q-function is $\mathcal{O}(1)$, we have that the overall complexity is:
	\begin{equation*}
		\mathcal{O} \left( \sum_{j \in \{0,...,J-1\} \, \wedge \, j\bmod k = 0} n |\As| + \sum_{j \in \{0,...,J-1\} \, \wedge \, j\bmod k \neq 0} n \right) = \mathcal{O} \left( \frac{J}{k} n |\As| + \frac{J(k-1)}{k} n \right) = \mathcal{O} \left( J n \left( 1 + \frac{|\As| - 1}{k} \right)\right).
	\end{equation*}
\end{proof}

\errorProp*

Before proving the main result, we need to introduce a variation of the \emph{concentrability} coefficients~\cite{antos2008learning, farahmand2011regularization} to account for action persistence.

\begin{restatable}[Persistent Expected Concentrability]{defi}{}\label{defi:concentrab}
	Let $\rho, \nu \in \mathscr{P}(\SAs)$, $L \in \Nat[1]$, and an arbitrary sequence of stationary policies $(\pi^{(l)})_{l=1}^L$. Let $k \in \Nat[1]$ be the persistence. For any $m_1,m_2, m_3 \in \Nat[1]$ and $q \in [1,+\infty]$, we define:
	\begin{align*}
	& \cvA  (m_1,m_2,m_3;\pi) =  \E \Bigg[ \bigg| \frac{\de \big( \rho (P^{\pi}_k)^{m_1} (P^{\pi^*_k}_k)^{m_2} (P^\delta)^{m_3} \big) }{\de \nu} (S,A) \bigg|^\frac{q}{q-1} \Bigg] ^{\frac{q-1}{q}},\\
	& \cvB (m_1,m_2;(\pi^{(l)})_{l=1}^L) =   \E \Bigg[ \bigg| \frac{\de \big( \rho (P^{\pi^{(L)}}_k)^{m_1} P_k^{\pi^{(L-1)}} \dots P_k^{\pi^{(1)}} (P^\delta)^{m_2} \big) }{\de \nu} (S,A)\bigg|^\frac{q}{q-1} \Bigg]^{\frac{q-1}{q}}, 
	\end{align*}
with $(S,A) \sim \nu$. If $\rho (P^{\pi}_k)^{m_1} (P^{\pi^*_k}_k)^{m_2} (P^\delta)^{m_3} $ (resp. $ \rho (P^{\pi^{(L)}}_k)^{m_1} P_k^{\pi^{(L-1)}} \dots P_k^{\pi^{(1)}} (P^\delta)^{m_2}$) is not absolutely continuous \wrt to $\nu$, then we take $c_{\mathrm{VI}_{1},\rho,\nu}(m_1,m_2,m_3;\pi,k) = +\infty$ (resp. $c_{\mathrm{VI}_{2},\rho,\nu}(m_1,m_2;(\pi^{(l)})_{l=1}^L,k)= +\infty$).
\end{restatable}

This definition is a generalization of that provided in~\citet{farahmand2011regularization}, that can be recovered by setting $k=1$, $q=2$, $m_3 = 0$ for the first coefficient and $m_2=0$ for the second coefficient..

\begin{proof}
	The proof follows most of the steps of Theorem 3.4 of~\citet{farahmand2011regularization}. We start by deriving a bound relating $Q^* - Q^{(J)}$ to $(\epsilon^{(j)})_{j=0}^{J-1}$. To this purpose, let us first define the cumulative error over $k$ iterations for every $j \bmod k = 0$:
	\begin{equation}
		\epsilon_k^{(j)} = T^*_k Q^{(j)} - Q^{(j+k)}.
	\end{equation}
	Let us denote with $\pi^*_k$ one of the optimal policies of the $k$-persistent MDP $\mathcal{M}_k$. We have:
	\begin{equation*}
	\begin{aligned}
	& Q^*_k - Q^{(j+k)} = T^{\pi_k^*}_kQ^*_k - T^{\pi_k^*}_kQ^{(j)} + T^{\pi_k^*}_kQ^{(j)} - T^*_k Q^{(j)} + \epsilon_k^{(j)} \le \gamma^k P^{\pi^*_k}_k (Q^*_k - Q^{(j)}) + \epsilon_k^{(j)}, \\
	& Q^*_k - Q^{(j+k)} = T^{*}_k Q^*_k - T^{\pi^{(j)}}_k Q^* +  T^{\pi^{(j)}}_k Q^* - T^*_k Q^{(j)} + \epsilon_k^{(j)} \ge  \gamma^k P^{\pi^{(j)}}_k (Q^*_k - Q^{(j)}) + \epsilon_k^{(j)},
	\end{aligned}
	\end{equation*}
	where we exploited the fact that $T^*_k Q^{(j)} \ge T^{\pi_k^*}_k Q^{(j)}$, the definition of greedy policy $\pi^{(j)}$ that implies that $T_k^{\pi^{(j)}} Q^{(j)} = T_k^{*} Q^{(j)}$ and the definition of $\epsilon_k^{(j)}$. By unrolling the expression derived above, we have that for every $J \bmod k = 0$:
	\begin{equation}\label{p:701}
	\begin{aligned}
	& Q^*_k - Q^{(J)} \le \sum_{h=0}^{\frac{J}{k}-1} \gamma^{J - k(h+1)} \left( P_k^{\pi^*_k} \right)^{\frac{J}{k} - h - 1} \epsilon_k^{(j)} +  \gamma ^J \left( P_k^{\pi^*_k} \right)^{\frac{J}{k}} (Q^*_k - Q^{(0)}) \\
	& Q^*_k - Q^{(J)} \ge \sum_{h=0}^{\frac{J}{k}-1} \gamma^{J - k(h+1)} \left( P_k^{\pi^{(J-k)}} P_k^{\pi^{(J-2k)}} \dots P_k^{\pi^{(k(h+1))}} \right) \epsilon_k^{(j)} +  \gamma ^J \left( P_k^{\pi^{(J)}}  P_k^{\pi^{(J-k)}} \dots P_k^{\pi^{(k)}} \right) (Q^*_k - Q^{(0)}).
	\end{aligned}
	\end{equation}
	We now provide the following bound relating the difference $Q^*_k - Q_k^{\pi^{(J)}}$ to the difference $Q^*_k - Q^{(J)}$:
	\begin{align*}
	Q^*_k - Q_k^{\pi^{(J)}} & = T_k^{\pi_k^*} Q^*_k - T_k^{\pi_k^*} Q^{(J)} + T_k^{\pi_k^*} Q^{(J)} - T^*_k Q^{(J)} + T^*_k Q^{(J)} - T^{\pi^{(J)}}_k Q_k^{\pi^{(J)}} \\
	& \le T_k^{\pi_k^*} Q^*_k - T_k^{\pi_k^*} Q^{(J)}  + T^*_k Q^{(J)} - T^{\pi^{(J)}}_k Q_k^{\pi^{(J)}} \\
	& = \gamma^k P^{\pi^*_k}_k (Q^* - Q^{(J)}) + \gamma^k P_k^{\pi^{(J)}} (Q^{(J)} - Q^{\pi^{(J)}}_k) \\
	& = \gamma^k P^{\pi^*_k}_k (Q^* - Q^{(J)}) + \gamma^k P_k^{\pi^{(J)}} (Q^{(J)} - Q^*_k + Q^*_k - Q^{\pi^{(J)}}_k),
	\end{align*}
	where we exploited the fact that $T^*_k Q^{(J)} \ge T_k^{\pi_k^*} Q^{(J)} $ and observed that $T^*_k Q^{(J)} = T^{\pi^{(J)}}_k Q^{(J)} $. By using Lemma 4.2 of ~\citet{Munos2007performance}
	 we can derive:
	\begin{equation}\label{p:702}
	Q^*_k - Q^{\pi^{(J)}}_k \le \gamma^k \left( \Id - \gamma^k P_k^{\pi^{(J)}} \right)^{-1} \left(P^{\pi^*_k}_k - P^{\pi^{(J)}}_k \right)(Q^* - Q^{(J)}).
\end{equation}	 
By plugging Equation~\eqref{p:701} into Equation~\eqref{p:702}:
\begin{equation}\label{p:703}
\begin{aligned}
	Q^*_k - Q^{\pi^{(J)}}_k & \le \gamma^k \left( \Id - \gamma^k P_k^{\pi^{(J)}} \right)^{-1} \bigg[ \sum_{h=0}^{\frac{J}{k}-1} \gamma^{J - k(h+1)} \left( \left( P_k^{\pi^*_k} \right)^{\frac{J}{k} - h} -  \left(  P_k^{\pi^{(J)}} P_k^{\pi^{(J-k)}} P_k^{\pi^{(J-2k)}} \dots P_k^{\pi^{(k(h+1))}} \right) \right) \epsilon_k^{(j)} \\
	& \quad +  \gamma ^J \left( \left( P_k^{\pi^*_k} \right)^{\frac{J}{k} + 1} - \left( P_k^{\pi^{(J)}} P_k^{\pi^{(J)}} P_k^{\pi^{(J-k)}} \dots P_k^{\pi^{(k)}}  \right) \right) (Q^*_k - Q^{(0)}) \bigg].
	\end{aligned}
\end{equation}
Before proceeding, we need to relate the cumulative errors $\epsilon_k^{(j)}$ to the single-step errors $\epsilon^{(j)}$:
\begin{align*}
	\epsilon_k^{(j)} &= T^*_k Q^{(j)} - Q^{(j+k)} \\
	& = (\POp)^{k-1} \OOp Q^{(j)} - (\POp)^{k-1} Q^{(j+1)} +  (\POp)^{k-1} Q^{(j+1)} - Q^{(j+k)} \\
	& = \gamma^{k-1} (P^{\delta})^{k-1} \left( \OOp Q^{(j)} - Q^{(j+1)} \right) + (\POp)^{k-1} Q^{(j+1)} - Q^{(j+k)} \\
	& =  \gamma^{k-1} (P^{\delta})^{k-1} \epsilon^{(j)} + (\POp)^{k-1} Q^{(j+1)} - Q^{(j+k)}.
\end{align*}
Let us now consider the remaining term $(\POp)^{k-1} Q^{(j+1)} - Q^{(j+k)}$:
\begin{align*}
	(\POp)^{k-1} Q^{(j+1)} - Q^{(j+k)} &= (\POp)^{k-1} Q^{(j+1)} - (\POp)^{k-2} Q^{(j+2)} + (\POp)^{k-2} Q^{(j+2)} - Q^{(j+k)} \\
	& = \gamma^{k-2} (P^\delta)^{k-2} \left( \POp Q^{(j+1)} - Q^{(j+2)} \right) + (\POp)^{k-2} Q^{(j+2)} - Q^{(j+k)} \\
	& = \gamma^{k-2} (P^\delta)^{k-2} \epsilon^{(j+1)} + (\POp)^{k-2} Q^{(j+2)} - Q^{(j+k)} \\
	& = \sum_{l=2}^k \gamma^{k-l} (P^\delta)^{k-l} \epsilon^{(j+l-1)} ,
\end{align*}
where the last step is obtained by unrolling the recursion. Putting all together, we get:
\begin{equation}
\epsilon_k^{(j)} = \sum_{l=1}^k \gamma^{k-l} (P^\delta)^{k-l} \epsilon^{(j+l-1)}.
\end{equation}
Consequently, we can rewrite Equation~\eqref{p:703} as follows:
\begin{align}
	Q^*_k - Q^{\pi^{(J)}}_k & \le \gamma^k \left( \Id - \gamma^k P_k^{\pi^{(J)}} \right)^{-1} \Bigg[ \sum_{h=0}^{\frac{J}{k}-1} \gamma^{J - k(h+1)} \left( \left( P_k^{\pi^*_k} \right)^{\frac{J}{k} - h} -  \left(  P_k^{\pi^{(J)}} P_k^{\pi^{(J-k)}} P_k^{\pi^{(J-2k)}} \dots P_k^{\pi^{(k(h+1))}} \right) \right)  \notag  \\
	& \quad \times \sum_{l=1}^k \gamma^{k-l} (P^\delta)^{k-l} \epsilon^{(j+l-1)}+  \gamma ^J \left( \left( P_k^{\pi^*_k} \right)^{\frac{J}{k} + 1} - \left( P_k^{\pi^{(J)}} P_k^{\pi^{(J)}} P_k^{\pi^{(J-k)}} \dots P_k^{\pi^{(k)}}  \right) \right) (Q^*_k - Q^{(0)}) \Bigg] \notag \\
	& = \gamma^k \left( \Id - \gamma^k P_k^{\pi^{(J)}} \right)^{-1} \bigg[ \sum_{h=0}^{\frac{J}{k}-1} \sum_{l=1}^k  \gamma^{J - kh-l} \left( \left( P_k^{\pi^*_k} \right)^{\frac{J}{k} - h} -  \left(  P_k^{\pi^{(J)}} P_k^{\pi^{(J-k)}} P_k^{\pi^{(J-2k)}} \dots P_k^{\pi^{(k(h+1))}} \right) \right) \label{p:710}\\
	& \quad \times  (P^\delta)^{k-l} \epsilon^{(j+l-1)} +  \gamma ^J \left( \left( P_k^{\pi^*_k} \right)^{\frac{J}{k} + 1} - \left( P_k^{\pi^{(J)}} P_k^{\pi^{(J)}} P_k^{\pi^{(J-k)}} \dots P_k^{\pi^{(k)}}  \right) \right) (Q^*_k - Q^{(0)}) \bigg] \notag \\
	& = \gamma^k \left( \Id - \gamma^k P_k^{\pi^{(J)}} \right)^{-1} \bigg[ \sum_{j=0}^{J - 1} \gamma^{J - j - 1} \left( \left( P_k^{\pi^*_k} \right)^{\frac{J}{k} - j \bdiv k } -  \left(  P_k^{\pi^{(J)}} P_k^{\pi^{(J-k)}} P_k^{\pi^{(J-2k)}} \dots P_k^{\pi^{(J - k (j \bdiv k + 1))}} \right) \right) \notag \\
	& \quad \times (P^\delta)^{ k - j \bmod k - 1} \epsilon^{(j)}  +  \gamma ^J \left( \left( P_k^{\pi^*_k} \right)^{\frac{J}{k}+1} - \left( P_k^{\pi^{(J)}} P_k^{\pi^{(J)}} P_k^{\pi^{(J-k)}}  \dots P_k^{\pi^{(k)}}  \right) \right) (Q^*_k - Q^{(0)}) \bigg] \label{p:711}\\
	& \le \gamma^k \left( \Id - \gamma^k P_k^{\pi^{(J)}} \right)^{-1} \bigg[ \sum_{j=0}^{J - 1} \gamma^{J - j - 1} \left( \left( P_k^{\pi^*_k} \right)^{\frac{J}{k} - j \bdiv k } +  \left(  P_k^{\pi^{(J)}} P_k^{\pi^{(J-k)}} P_k^{\pi^{(J-2k)}} \dots P_k^{\pi^{(J - k (j \bdiv k + 1))}} \right) \right) \notag \\
	& \quad \times (P^\delta)^{ k - j \bmod k - 1} \left| \epsilon^{(j)} \right| +  \gamma ^J \left( \left( P_k^{\pi^*_k} \right)^{\frac{J}{k}+1} + \left( P_k^{\pi^{(J)}} P_k^{\pi^{(J)}} P_k^{\pi^{(J-k)}}  \dots P_k^{\pi^{(k)}}  \right) \right) \left|Q^*_k - Q^{(0)}\right| \bigg],\label{p:712}
\end{align}
where line~\eqref{p:710} derives from rearranging the two summations, line~\eqref{p:711} is obtained from a redefinition of the indexes. Specifically, we observed that $h = j \bdiv k$, $j+1=kh+l$, and $l = j \bmod k +1$. Finally, line~\eqref{p:712} is obtained by applying the absolute value to the right hand side and using Jensen inequality. We now introduce the following terms:
\begin{equation}
	A_j = \begin{cases}
		\frac{1-\gamma^k}{2} \left( \Id - \gamma^k P_k^{\pi^{(J)}} \right)^{-1} \left( \left( P_k^{\pi^*_k} \right)^{\frac{J}{k}-j\bdiv k} +  \left(  P_k^{\pi^{(J)}} P_k^{\pi^{(J-k)}} P_k^{\pi^{(J-2k)}} \dots P_k^{\pi^{(J - k (j \bdiv k+1))}} \right) \right) (P^\delta)^{ k - j \bmod k - 1} & \text{if } 0 \le j < J \\
		\frac{1-\gamma^k}{2} \left( \Id - \gamma^k P_k^{\pi^{(J)}} \right)^{-1} \left( \left( P_k^{\pi^*_k} \right)^{\frac{J}{k} + 1} + \left( P_k^{\pi^{(J)}} P_k^{\pi^{(J)}} P_k^{\pi^{(J-k)}}  \dots P_k^{\pi^{(k)}}  \right) \right) & \text{if } j = J
	\end{cases}.
\end{equation}
Let us recall the definition of $\alpha_j$ as in~\citet{farahmand2011regularization}:
\begin{equation}\label{eq:alphaj}
	\alpha_j = \begin{cases}
		\frac{(1-\gamma)\gamma^{J-j-1}}{1-\gamma^{J+1}} & \text{if } 0 \le j < J \\
		\frac{(1-\gamma) \gamma^J}{1-\gamma^{J+1}} & \text{if } j = J
	\end{cases}.
\end{equation}
Recalling that $\left|Q^*_k - Q^{(0)}\right| \le  Q_{\max} + \frac{R_{\max}}{1-\gamma} \le \frac{2R_{\max}}{1-\gamma} $ and applying Jensen inequality we get to the inequality:
\begin{align*}
	Q^*_k - Q^{\pi^{(J)}}_k   \le \frac{2 \gamma^k(1-\gamma^{J+1})}{(1-\gamma^k)(1-\gamma)} \left[ \sum_{j=0}^{J-1} \alpha_j A_j \left|\epsilon^{(j)} \right| + \alpha_J \frac{2R_{\max}}{1-\gamma} \mathrm{\mathbf{1}} \right],
\end{align*}
where $\mathrm{\mathbf{1}}$ denotes the constant function on $\SAs$ with value 1. Taking the $L_p(\rho)$--norm both sides, recalling that $\sum_{j=1}^J \alpha_j = 1$ and that the terms $A_j$ are positive linear operators $A_j : \mathscr{B}(\SAs) \rightarrow \mathscr{B}(\SAs)$ such that $A_j \mathrm{\mathbf{1}} = \mathrm{\mathbf{1}}$. Thus, by Lemma 12 of~\citet{antos2008learning}, we can apply Jensen inequality twice (once \wrt $\alpha_j$ and once \wrt $A_j$), getting:
\begin{align*}
\norm[p][\rho]{Q^*_k - Q^{\pi^{(J)}}_k }^p \le \left( \frac{2 \gamma^k(1-\gamma^{J+1})}{(1-\gamma^k)(1-\gamma)} \right)^p \rho \left[\sum_{j=0}^{J-1} \alpha_j A_j \left|\epsilon^{(j)} \right|^p + \alpha_J \left( \frac{2R_{\max}}{1-\gamma} \right)^p \mathrm{\mathbf{1}} \right].
\end{align*}
Consider now the individual terms $\rho A_j \left|\epsilon^{(j)} \right|^p$ for $0 \le j < J$. By the properties of the Neumann series we have:
\begin{align*}
	\rho A_j \left|\epsilon^{(j)} \right|^p & = \frac{1-\gamma^k}{2} \rho \left( \Id - \gamma^k P_k^{\pi^{(J)}} \right)^{-1} \left( \left( P_k^{\pi^*_k} \right)^{\frac{J}{k}-j \bdiv k} +  \left(  P_k^{\pi^{(J)}} P_k^{\pi^{(J-k)}} P_k^{\pi^{(J-2k)}} \dots P_k^{\pi^{(J - k (j \bdiv k + 1))}} \right) \right) \\
	& \quad \times (P^\delta)^{ k - j \bmod k - 1} \left|\epsilon^{(j)} \right|^p \\
	& = \frac{1-\gamma^k}{2} \rho \left[ \sum_{m=0}^{+\infty} \gamma^{km}  \left( \left(P_k^{\pi^{(J)}} \right)^{m} \left( P_k^{\pi^*_k} \right)^{\frac{J}{k}-j \bdiv k} +  \left(  \left(P_k^{\pi^{(J)}} \right)^{m+1} P_k^{\pi^{(J-k)}} P_k^{\pi^{(J-2k)}} \dots P_k^{\pi^{(J - k (j \bdiv k))}} \right) \right) \right] \\
	& \quad \times (P^\delta)^{ k - j \bmod k - 1} \left|\epsilon^{(j)} \right|^p. 
\end{align*}
We now aim at introducing the concentrability coefficients and for this purpose, we employ the following inequality. For any measurable function $f \in \mathscr{b}(\mathcal{X}) \rightarrow \mathbb{R}$, and the probability measures $\mu_1,\mu_2 \in \mathscr{P}(\mathcal{X})$ such that $\mu_2$ is absolutely continuous \wrt $\mu_1$, we have the following H\"older inequality, for any $q \in [1, +\infty]$:
\begin{equation}
	\int_\mathcal{X} f \de \mu_1 \le \left(\int_\mathcal{X} \left| \frac{\de \mu_1 }{\de \mu_2} \right|^\frac{q}{q-1} \de \mu_2  \right)^{\frac{q-1}{q}} \left( \int_\mathcal{X} |f|^q  \de \mu_2\right)^{\frac{1}{q}} .
\end{equation}
We now focus on a single term $ \rho \left(P_k^{\pi^{(J)}} \right)^{m} \left( P_k^{\pi^*_k} \right)^{\frac{J}{k}-j \bdiv k}  \left|\epsilon^{(j)} \right|^p$ and we apply the above inequality:
\begin{align*}
	\rho \left(P_k^{\pi^{(J)}} \right)^{m} \left( P_k^{\pi^*_k} \right)^{\frac{J}{k}-j \bdiv k}  (P^\delta)^{ k - j \bmod k - 1} \left|\epsilon^{(j)} \right|^p & \le \left(\int_{\SAs} \left| \frac{\de \rho \left(P_k^{\pi^{(J)}} \right)^{m} \left( P_k^{\pi^*_k} \right)^{\frac{J}{k}-j \bdiv k}  (P^\delta)^{ k - j \bmod k - 1}  }{\de \nu} \right|^\frac{q}{q-1}  \de \nu \right)^{\frac{q-1}{q}} \\
	& \quad \times \left(  \int_{\SAs} \left|\epsilon^{(j)} \right|^{pq} \de \nu\right)^{\frac{1}{q}} \\
	& = \cvA \left( m, \frac{J}{k}-j \bdiv k, k- j\bmod k - 1; \pi^{(J)} \right) \norm[pq][\nu]{\epsilon^{(j)}}^p.
\end{align*}
Proceeding in an analogous way for the remaining terms, we get to the expression:
\begin{align*}
\norm[p][\rho]{Q^*_k - Q^{\pi^{(J)}}_k }^p & \le \left( \frac{2 \gamma^k(1-\gamma^{J+1})}{(1-\gamma^k)(1-\gamma)} \right)^p \Bigg[ \frac{1-\gamma^k}{2} \sum_{j=0}^{J-1} \sum_{m=0}^{+\infty} \gamma^{km} \bigg( \cvA \left( m, \frac{J}{k}-j \bdiv k, k- j\bmod k - 1; \pi^{(J)} \right) \\
& \quad + \cvB \left( m+1, k- j\bmod k - 1  ; \{\pi^{(J-lk)}\}_{l=1}^{j \bdiv k} \right) \bigg)  \norm[pq][\nu]{\epsilon^{(j)}}^p  + \alpha_J \left(\frac{2R_{\max}}{1-\gamma} \right)^p  \Bigg].
\end{align*}
To separate the concentrability coefficients and the approximation errors, we apply H\"older inequality with $s \in [1,+\infty]$:
\begin{equation}
	\sum_{j=0}^J a_j b_j \le \left( \sum_{j=0}^J |a_j|^s \right)^{\frac{1}{s}}  \left( |b_j|^\frac{s}{s-1} \right)^{\frac{s-1}{s}}.
\end{equation}
Let $r \in [0,1]$, we set $a_j = \alpha_j^{r} \norm[pq][\nu]{\epsilon^{(j)}}^p$ and $b_j = \alpha_j^{1-r} \frac{1-\gamma^k}{2} \sum_{j=0}^{J-1} \sum_{m=0}^{+\infty} \gamma^{km} \bigg( \cvA \left( m, \frac{J}{k}-j \bdiv k, k- j\bmod k - 1; \pi^{(J)} \right)
+ \cvB \left( m+1, k- j\bmod k - 1  ; \{\pi^{(J-lk)}\}_{l=1}^{j \bdiv k} \right) \bigg)$. The application of H\"older inequality leads to:
\begin{align*}
\norm[p][\rho]{Q^*_k - Q^{\pi^{(J)}}_k }^p & \le \left( \frac{2 \gamma^k(1-\gamma^{J+1})}{(1-\gamma^k)(1-\gamma)} \right)^p  \frac{1-\gamma^k}{2} \Bigg[ \sum_{j=0}^{J-1} \alpha_j^{\frac{s(1-r)}{s-1}} \bigg( \sum_{m=0}^{+\infty} \gamma^{km} \bigg( \cvA \left( m, \frac{J}{k}-j \bdiv k, k- j\bmod k - 1; \pi^{(J)} \right) \\
& \quad + \cvB \left( m+1, k- j\bmod k - 1  ; \{\pi^{(J-lk)}\}_{l=1}^{j \bdiv k} \right) \bigg)  \bigg)^\frac{s}{s-1} \Bigg]^{\frac{s-1}{s}} \Bigg[ \sum_{j=0}^{J-1} \alpha_j^{sr}  \norm[pq][\nu]{\epsilon^{(j)}}^{sp}   \Bigg]^{\frac{1}{s}}\\
& \quad + \left( \frac{2 \gamma^k(1-\gamma^{J+1})}{(1-\gamma^k)(1-\gamma)} \right)^p \alpha_J  \left(\frac{2R_{\max}}{1-\gamma} \right)^p.
%
%
%
\end{align*}
Since the policies $(\pi^{(J-lk)})_{l=1}^{j \bdiv k}$ are not known, we define the following quantity by taking the supremum over any sequence of policies:
\begin{equation}\label{eq:CConcetrability}
\begin{aligned}
	C_{\mathrm{VI},\rho,\nu}(J;r,s,q) & = \left( \frac{1-\gamma^k}{2} \right)^s \sup_{\pi_0,...,\pi_{J} \in \Pi} \sum_{j=0}^{J-1} \alpha_j^{\frac{s(1-r)}{s-1}} \bigg( \sum_{m=0}^{+\infty} \gamma^{km} \bigg( \cvA \left( m, \frac{J}{k}-j \bdiv k, k- j\bmod k - 1; \pi_J \right) \\
& \quad + \cvB \left( m+1, k- j\bmod k - 1  ; \{\pi_l\}_{l=1}^{j \bdiv k} \right) \bigg)  \bigg)^\frac{s}{s-1}.
\end{aligned}
\end{equation}
Moreover, we define the following term that embeds all the terms related to the approximation error:
\begin{align}\label{eq:CalE}
	\mathcal{E}(\epsilon^{(0)}, \dots, \epsilon^{(J-1)} ;r,s,q) = \sum_{j=0}^{J-1} \alpha_j^{sr}  \norm[pq][\nu]{\epsilon^{(j)}}^{sp}.
\end{align}
Observing that $\frac{1-\gamma}{1-\gamma^{J+1}} \le 1$ and $1-\gamma^{J-1} \le 1$, we can put all together and taking the $p$--th root and recalling that the inequality holds for all $q \in [1,+\infty]$, $r \in [0,1]$, and $s \in [1, +\infty]$:
\begin{align*}
\norm[p][\rho]{Q^*_k - Q^{\pi^{(J)}}_k } & \le  \frac{2 \gamma^k}{(1-\gamma^k)(1-\gamma)} \left[ \inf_{\substack{q \in [1,+\infty] \\ r \in [0,1] \\ s \in [1, +\infty]}}  C_{\mathrm{VI},\rho,\nu}(J;r,s,q)^{\frac{s-1}{ps}} \mathcal{E}(\epsilon^{(0)}, \dots, \epsilon^{(J-1)} ;r,s,q)^{\frac{1}{ps}} +  \gamma^{\frac{J}{p}} \frac{2R_{\max}}{1-\gamma}\right].
\end{align*}
The statement is simplified by taking $s = 2$.

\end{proof}


\subsection{Proofs of Section~\ref{sec:PersistenceSelection}}
\lowerBoundQ*
\begin{proof}
	We start by providing the following equality, recalling that $T^*_k Q = T^\pi_k Q$, being $\pi$ the greedy policy \wrt $Q$:
	\begin{align*}
		Q^{\pi}_k - Q &= T^\pi_k Q^\pi_k - T^\pi_k Q + T^*_k Q  - Q \\
			&  = \gamma^k P^\pi_k \left( Q^\pi_k - Q \right) +  T^*_k Q  - Q \\
			& = \left( \Id - \gamma^k P^\pi_k \right)^{-1} \left(  T^*_k Q  - Q \right),
	\end{align*}
	where the last equality follows from the properties of the Neumann series. We take the expectation \wrt to the distribution $\rho \pi$ both sides. For the left hand side we have:
		\begin{equation*}
			J^{\rho,\pi}_k - J^{\rho}= \rho \pi Q^{\pi}_k - \rho \pi Q.
		\end{equation*}
		Concerning the right hand side, instead, we have:
		\begin{equation*}
		 \rho \pi \left( \Id - \gamma^k P^\pi_k \right)^{-1}  \left(  T^*_k Q  - Q \right) = \frac{1}{1-\gamma^k} \eta^{\rho,\pi} \left(  T^*_k Q  - Q \right),
		\end{equation*}
		where we introduced the $\gamma$--discounted stationary distribution~\citep{sutton2000policy} after normalization. Putting all together, we can derive the following inequality:
		\begin{align*}
		J^{\rho,\pi}_k - J^{\rho} & = \frac{1}{1-\gamma^k} \eta^{\rho,\pi} \left(  T^*_k Q  - Q \right) \\
		& \ge -\frac{1}{1-\gamma^k} \eta^{\rho,\pi} \left|  T^*_k Q  - Q \right| \\
		& = -\frac{1}{1-\gamma^k} \norm[1][\eta^{\rho,\pi}]{T^*_k Q  - Q}.
		\end{align*}
\end{proof}

\section{Details on Bounding the Performance Loss (Section~\ref{sec:loose})}\label{apx:details}
In this appendix, we report some additional material that is referenced in Section~\ref{sec:loose}, concerning the performance loss due to the usage of action persistence.

\subsection{Discussion on the Persistence Bound (Theorem~\ref{thr:PersistenceBound})}
\label{sec:discussionOnBoundd}

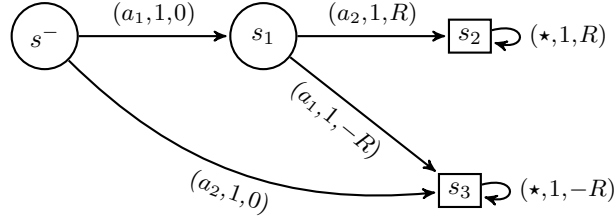
\begin{figure}[!ht]
    \centering
%
%
    \begin{tikzpicture}[->,>=stealth',shorten >=1pt,thick, node distance=2cm]
  \tikzstyle{every state}=[fill=white,draw=black,text=black, align=center]
  \tikzset{rect state/.style={draw,rectangle}}

  \node[state] (A) {$s^-$};
  \node[state] (B) [right=of A] {$s_1$};
  \node[rect state] (D) [right=2cm of B] {$s_2$};
  \node[rect state] (E) [below right=1.5cm and 2cm of B] {$s_3$};
  
  \path (A) edge node (AB) [inner sep=0mm,pos=0.2] {} node [above, pos=0.5] {\small $(a_1,1,0)$} (B) ;
  \path (B) edge node (BD) [inner sep=0mm,pos=0.2] {} node [above, pos=0.5] {\small $(a_2,1,R)$} (D) ;
  \path (B) edge  node (BE) [inner sep=0mm,pos=0.2] {} node [below, pos=0.4, rotate=-39] {\small $(a_1,1,-R)$} (E) ;
  \path (D) edge [loop right] node (DD) [inner sep=0mm,pos=0.2] {} node [right, pos=0.5] {\small $(\star,1,R)$} (D) ;
  \path (E) edge [loop right] node (EE) [inner sep=0mm,pos=0.2] {} node [right, pos=0.5] {\small $(\star,1,-R)$} (E) ;
   \path (A) edge[bend left=-25] node (AB) [inner sep=0mm,pos=0.2] {} node [below, pos=0.5, rotate=-20] {\small $(a_2,1,0)$} (E);
\end{tikzpicture}
\caption{The MDP counter-example of Proposition~\ref{prop:negative}, where $R>0$. Each arrow connecting two states $s$ and $s'$ is labeled with the 3-tuple $(a,P(s'|s,a),r(s,a))$; the symbol $\star$ denotes any action in $\As$. While the optimal policy in the original MDP starting in $s^-$ can avoid negative rewards by executing an action sequence of the kind $(a_1, a_2, \dots)$, every policy in the $k$-persistent MDP, with $k \in \Nat[2]$, inevitably ends in the negative terminal state, as the only possible action sequences are of the kind $(a_1, a_1, \dots)$ and $(a_2, a_2, \dots)$.}\label{fig:CounterExample}
\end{figure}

We start with a negative result, showing that with no structure it is possible to make the bound of Theorem~\ref{thr:PersistenceBound} vacuous, and thus, independent from $k$.

\begin{restatable}[]{prop}{negative}\label{prop:negative}
	For any MDP $\mathcal{M}$ and $k \in \mathbb{N}_{\ge 2}$ it holds that:
	\begin{equation}
		 V^*_k(s) \ge V^*(s) - \frac{2 \gamma R_{\max}}{1-\gamma}, \; \quad \forall s \in \Ss.
	\end{equation}	
	Furthermore, there exists an MDP $\mathcal{M}^-$ (Figure~\ref{fig:CounterExample}) and a state $s^- \in \Ss$ such that the bound holds with equality for all $k \in \mathbb{N}_{\ge 2}$.
\end{restatable}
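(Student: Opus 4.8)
\textbf{Proof plan for Proposition~\ref{prop:negative}.}
The plan is to prove the lower bound first and then exhibit the tight example. For the lower bound $V^*_k(s) \ge V^*(s) - \frac{2\gamma R_{\max}}{1-\gamma}$, I would fix any state $s$ and let $\pi^*$ be an optimal policy of $\mathcal{M}$. Consider the $k$-persistent policy $\pi^*_{t,k}$ induced by (some Markovian greedy version of) $\pi^*$; by duality (Section~\ref{sec:duality}) its return in $\mathcal{M}_k$ equals $V^{\pi^*}_k(s)$, hence $V^*_k(s) \ge V^{\pi^*}_k(s)$. Now I compare $V^*(s) = V^{\pi^*}(s)$ with $V^{\pi^*}_k(s)$ directly: both quantities are expectations of discounted reward streams along trajectories of $\mathcal{M}$, and the only difference is that in the persistent execution actions at steps not divisible by $k$ may disagree with what $\pi^*$ would have played. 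A crude bound suffices: decompose each return into the reward collected at time $0$ (where both executions play the same action $A_0 \sim \pi^*(\cdot|s)$) and the reward collected from time $1$ onward, which in absolute value is at most $\sum_{t \ge 1} \gamma^t R_{\max} = \frac{\gamma R_{\max}}{1-\gamma}$ for each of the two executions. Therefore $|V^*(s) - V^{\pi^*}_k(s)| \le \frac{2\gamma R_{\max}}{1-\gamma}$, giving the claim. (A slightly cleaner route: couple the two executions so they agree at $t=0$, so only rewards from $t \ge 1$ can differ.)

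For the tightness claim, I would use the MDP $\mathcal{M}^-$ of Figure~\ref{fig:CounterExample} and the state $s^-$. In the original MDP, starting from $s^-$, the optimal behavior is to play $a_1$ (moving $s^- \to s_1$ with reward $0$), then play $a_2$ (moving $s_1 \to s_2$ with reward $R$), then remain in the absorbing state $s_2$ collecting $R$ forever; hence $V^*(s^-) = \gamma R + \gamma^2 R + \dots = \frac{\gamma R}{1-\gamma} = \frac{\gamma R_{\max}}{1-\gamma}$, taking $R = R_{\max}$. In the $k$-persistent MDP with $k \ge 2$, the first action is persisted for $k \ge 2$ steps: playing $a_1$ from $s^-$ forces $a_1$ again from $s_1$, which leads to the absorbing negative state $s_3$ with reward stream eventually $-R$ forever; playing $a_2$ from $s^-$ sends us directly to $s_3$. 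Either way the best persistent policy starting from $s^-$ obtains $V^*_k(s^-) = \gamma \cdot 0 + \sum_{t \ge 2} \gamma^t (-R) $ along the $a_1$-branch (or an even worse value along the $a_2$-branch); one computes $V^*_k(s^-) = -\frac{\gamma^2 R}{1-\gamma}$, and more carefully, accounting for the reward $-R$ already at the step into $s_3$, the value is $-\frac{\gamma R}{1-\gamma} = -\frac{\gamma R_{\max}}{1-\gamma}$. Thus $V^*(s^-) - V^*_k(s^-) = \frac{2\gamma R_{\max}}{1-\gamma}$, matching the bound with equality, uniformly in $k \ge 2$.

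The main obstacle is purely bookkeeping: getting the reward/discount indices in $\mathcal{M}^-$ exactly right so that the two values add up to precisely $\frac{2\gamma R_{\max}}{1-\gamma}$ rather than something off by a factor of $\gamma$. The labels $(a,P(s'|s,a),r(s,a))$ in Figure~\ref{fig:CounterExample} place the reward on the \emph{outgoing} transition, so I must be careful about whether the reward $R$ (resp. $-R$) on entering $s_2$ (resp. $s_3$) is discounted by $\gamma^1$ or $\gamma^2$ relative to $s^-$, and about the contribution of the self-loops. I would verify this by writing the two geometric series explicitly. Everything else — the duality argument and the $\frac{2\gamma R_{\max}}{1-\gamma}$ triangle-type estimate — is routine. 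I would also note in passing that this shows the quantity $\|d^\pi_{\mathcal{Q}_k}\|_{p,\eta^{\rho,\pi}_k}$ appearing in Theorem~\ref{thr:PersistenceBound} must itself be order $R_{\max}$ in this example, confirming that some regularity assumption (such as Assumption~\ref{ass:TimeLipAss}) is genuinely needed for a non-vacuous, $k$-dependent bound.
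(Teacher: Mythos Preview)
Your proposal is correct. Both parts --- the lower bound and the tightness example --- match the paper's conclusions, and the tightness computation is identical to the paper's (the paper also obtains $V^*(s^-)=\frac{\gamma R}{1-\gamma}$ and $V^*_k(s^-)=-\frac{\gamma R}{1-\gamma}$; note, incidentally, that the $a_2$-branch gives the \emph{same} value, not a worse one).

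For the lower bound you take a slightly more elementary route than the paper. The paper expands $Q^{\pi^*}-Q^{\pi^*}_k$ via the fixed-point equations $Q^{\pi^*}=T^{\pi^*}Q^{\pi^*}$ and $Q^{\pi^*}_k=T^{\pi^*}_k Q^{\pi^*}_k$, cancels the common first-step reward $r$, and then bounds the three remaining pieces $\gamma P^{\pi}Q^{\pi^*}$, $-\sum_{i=1}^{k-1}\gamma^i(P^{\delta})^i r$, and $-\gamma^k P_k^{\pi}Q^{\pi^*}_k$ separately, which sum to $\frac{2\gamma R_{\max}}{1-\gamma}$. Your argument is the trajectory-level version of exactly the same idea: couple the two executions at $t=0$ (common action, common expected reward) and crudely bound each tail $\sum_{t\ge 1}\gamma^t R_t$ by $\frac{\gamma R_{\max}}{1-\gamma}$. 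Your version avoids any operator algebra and makes the ``agree at $t=0$, disagree thereafter'' intuition explicit; the paper's version is perhaps more in keeping with the operator-theoretic style of the rest of Section~\ref{sec:loose}. Either way the content is the same one-line triangle estimate.
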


\begin{proof}
First of all, we recall that $V^*(s) - V^*_k(s) \ge 0$ since we cannot increase performance when executing a policy with a persistence $k$. Let $\pi^*$ an optimal policy on the MDP $\mathcal{M}$, we  observe that for all $s \in \Ss$:
    \begin{equation}
       V^*(s) - V^*_k(s) \le V^{\pi^*}(s) - V^{\pi^*}_k(s),
    \end{equation}
    since $V^{\pi^*}(s) = V^*(s)$ and $V^*_k(s) \ge V^{\pi^*}_k(s)$. Let us now consider the corresponding Q-functions $Q^{\pi^*}(s,a)$ and $Q^{\pi^*}_k(s,a)$. Recalling that they are the fixed points of the Bellman operators $\EOp[\pi^*]$ and $\EOp[\pi^*]_k$ we have:
   \begin{align*}
   		Q^{\pi^*} - Q^{\pi^*}_k & =  \EOp[\pi^*] Q^{\pi^*}  -  \EOp[\pi^*]_k Q^{\pi^*}_k  \\
   		& = r + \gamma P^{\pi}  Q^{\pi^*} - r_k - \gamma^k P_k^\pi Q^{\pi^*}_k \\
   		& = r + \gamma P^{\pi}  Q^{\pi^*} - \sum_{i=0}^{k-1} \gamma^i \left(P^{\delta} \right)^i r - \gamma^k P_k^\pi Q^{\pi^*}_k \\
   		& = \gamma P^{\pi}  Q^{\pi^*} - \sum_{i=1}^{k-1} \gamma^i \left(P^{\delta} \right)^i r - \gamma^k P_k^\pi Q^{\pi^*}_k, \\
   	\end{align*}
   	where we exploited the definitions of the Bellman expectation operators in the $k$-persistent MDP. As a consequence, we have that for all $(s,a) \in \SAs$:
   	\begin{align*}
   		Q^{\pi^*}(s,a) - Q^{\pi^*}(s,a) & \le \gamma \frac{R_{\max}}{1-\gamma} + R_{\max} \sum_{i=1}^{k-1} \gamma^i +  \gamma^k \frac{R_{\max}}{1-\gamma} \\
   		& = \gamma \frac{R_{\max}}{1-\gamma} + R_{\max}\frac{\gamma(1-\gamma^{k-1})}{1-\gamma} +  \gamma^k \frac{R_{\max}}{1-\gamma} = \frac{2\gamma R_{\max}}{1-\gamma},
   		\end{align*}
   		where we considered the following facts that hold for all $(s,a) \in \SAs$: $\left(P^{\pi}  Q^{\pi^*} \right)(s,a) \le \frac{R_{\max}}{1-\gamma}$, $\left(\left(P^{\delta} \right)^i r\right)(s,a) \le R_{\max}$, and $\left( P_k^\pi Q^{\pi^*}_k \right) \le R_{\max}$. The result follows, by simply observing that $V^{\pi^*}(s) - V^{\pi^*}_k(s) = \E \left[ Q^{\pi^*}(s,A) - Q^{\pi^*}(s,A) \right]$, where $A \sim \pi^*(\cdot|s)$.
   		
We now prove that the bound is tight for the MDP of Figure~\ref{fig:CounterExample}. From inspection, we observe that the optimal policy must reach the terminal state $s_2$ yielding the positive reward $R > 0$. Thus the optimal policy plays action $a_1$ in state $s^-$ and action $a_2$ in state $s_1$, generating a value function $V^*(s^-) = \frac{\gamma R}{1-\gamma}$. Let us now consider the $2$-persistent MDP $\mathcal{M}^-_2$. Whichever action is played in state $s^-$ it is going to be persisted for the subsequent decision epoch and, consequently, we will end up in state $s_3$, yielding the negative reward $-R < 0$. Thus, the optimal value function will be $V^*_2(s^-) = -\frac{\gamma R}{1-\gamma}$. Clearly, the same rationale holds for any persistence $k \in \Nat[3]$.
\end{proof}

The quantity $\frac{2 \gamma R_{\max}}{1-\gamma}$ is the maximum performance that we can lose if we perform the same action at decision epoch $t=0$ and then we follow an arbitrary policy thereafter.

\subsection{On using divergences other than the Kantorovich}
The Persistence Bound presented in Theorem~\ref{thr:PersistenceBound} is defined in terms
of the dissimilarity index $d_{\mathcal{Q}_k}^\pi$ which depends on the set of functions $\mathcal{Q}_k $ defined in terms of the $k$-persistent Q-function $Q^\pi_k$ and in terms of the Bellman operators $\EOp$ and $\POp$. Clearly, this bound is meaningful when it yields a value that is smaller than $\frac{2\gamma R_{\max}}{1-\gamma}$ that we already know to be the maximum performance degradation we experience when executing policy $\pi$ with persistence (Proposition~\ref{prop:negative}). Therefore, for any meaningful choice of $\mathcal{Q}_k $, we require that, at least for $k=2$, the following condition to hold:
\begin{equation}\label{eq:ConditionValidBound}
	 \frac{\gamma (1-\gamma^{k-1})}{(1-\gamma)(1-\gamma^k)} \left\| d^\pi_{\mathcal{Q}_k} \right\|_{p, \eta^{\rho,\pi}_{k}} \bigg\rvert_{k=2} = \frac{\gamma}{(1-\gamma^2)} \left\| d^\pi_{\mathcal{Q}_2} \right\|_{p,\eta^{\rho,\pi}_{2}} < \frac{2\gamma R_{\max}}{1-\gamma}.
\end{equation}
If we require no additional regularity conditions on the MDP, we can only exploit the fact that all functions $f \in \mathcal{Q}_k $ are uniformly bounded by $\frac{R_{\max}}{1-\gamma}$, reducing $d^\pi_{\mathcal{Q}_k}$ to the total variation distance between $P^\pi$ and $P^\delta$:
\begin{equation}
	d^\pi_{\mathcal{Q}_k}(s,a) \le \frac{R_{\max}}{1-\gamma} \sup_{f : \norm[\infty][]{f} \le 1}  \left| \int_{\Ss} \int_{\As} \left( \Ppi(\de s', \de a' | s,a) - P^\delta (\de s', \de a'| s,a) \right) f(s',a') \right| = \frac{2 R_{\max}}{1-\gamma} d_{\text{TV}}^\pi(s,a).
\end{equation}

We restrict our discussion to deterministic policies and, for this purpose, we denote with $\pi(s) \in \As$ the action prescribed by policy $\pi$ in the state $s \in \Ss$. Thus, the total variation distance as follows:
\begin{align*}
d_{\text{TV}}^\pi(s,a) &= \frac{1}{2} \int_{\Ss} \int_{\As} \left| P^\pi(\de s', \de a' |s,a) -P^\delta(\de s', a' |s,a) \right| \\
& =  \frac{1}{2} \int_{\Ss} P(\de s' |s,a) \int_{\As} \left| \pi( \de a' | s') -\delta_{a}( \de a') \right|  \\
& = \frac{1}{2} \int_{\Ss} P(\de s' |s,a) \int_{\As} \left| \delta_{\pi(s')}(a')-\delta_{\pi(s)}( \de a') \right| \\
& = \int_{\Ss} P(\de s' |s,a) \mathds{1}_{\{\pi(s) \neq \pi(s')\}},
\end{align*}
where $\mathds{1}_{\mathcal{X}}$ denotes the indicator function for the measurable set $\mathcal{X}$. Consequently, we can derive for the norm:
\begin{align*}
\left\| d^\pi_{\mathcal{Q}_2} \right\|_{p,\eta^{\rho,\pi}_{2}}^p & \le \frac{2 R_{\max}}{1-\gamma} \int_{\Ss} \int_{\As} \eta_{k}^{\rho,\pi}(\de s, \de a) \left|\int_{\Ss} P(\de s' |s,a) \mathds{1}_{\{\pi(s) \neq \pi(s')\}} \right|^p \\
& \le \frac{2 R_{\max}}{1-\gamma}  \int_{\Ss} \int_{\As} \eta_{k}^{\rho,\pi}(\de s, \de a) \int_{\Ss} P(\de s' |s,a) \left|\mathds{1}_{\{\pi(s) \neq \pi(s')\}} \right|^p \\
& =  \frac{2 R_{\max}}{1-\gamma} \int_{\Ss} \int_{\As} \eta_{k}^{\rho,\pi}(\de s, \de a) \int_{\Ss} P(\de s' |s,a) \mathds{1}_{\{\pi(s) \neq \pi(s')\}}. 
\end{align*}
Thus, such term depend on the expected fraction of state-next-state pairs such that their policies prescribe different actions. Consequently, considering the condition at Equation~\eqref{eq:ConditionValidBound}, we have that it must be fulfilled:
\begin{equation*}
	\int_{\Ss} \int_{\As} \eta_{k}^{\rho,\pi}(\de s, \de a) \int_{\Ss} P(\de s' |s,a) \mathds{1}_{\{\pi(s) \neq \pi(s')\}} \le 1-\gamma^2.
\end{equation*}
However, if for every state-next-state pair the prescribed actions are different (even if very similar in some metric space), the left hand side would be 1 and the inequality never satisfied. To embed the notion of closeness of actions we need to resort to distance metrics different from the total variation (\eg the Kantorovich). These considerations can be extended to 
the case of stochastic policies.

\subsection{Time--Lipschitz Continuity for dynamical systems}\label{ex:dynamicalSystem}
We now draw a connection between the rate at which a dynamical system evolves and the $L_T$ constant of Assumption~\ref{ass:TimeLipAss}.	Consider a continuous-time dynamical system having $\mathcal{S}=\Reals^{d_{\mathcal{S}}}$ and $\mathcal{A}= \Reals^{{d}_{\mathcal{A}}}$ governed by the law $\dot{\mathbr{s}}(t) = \mathbr{f}(\mathbr{s}(t),\mathbr{a}(t))$ such that $\sup_{\mathbr{s} \in \Ss, \mathbr{a} \in \As}\left\| \mathbr{f}(\mathbr{s},\mathbr{a}) \right\| \le F < +\infty$. Suppose to control the system with a discrete time step $\Delta t_0 > 0$, inducing an MDP with transition model $P_{\Delta t_0}$. Using a norm $\left\| \cdot \right\|$, Assumption~\ref{ass:TimeLipAss} becomes: 
	\begin{align*}
		\Kant \left( P_{\Delta t_0}(\cdot|\mathbr{s},\mathbr{a}) , \delta_{\mathbr{s}} \right) & = \left\|\mathbr{s}(t+\Delta t_0) - \mathbr{s}(t)  \right\| \\
		& = \left\| \int_{t}^{t+\Delta t_0} \dot{\mathbr{s}}(\de t) \right\| \le F \Delta t_0.
	\end{align*}
	Thus, the Time Lipschitz constant $L_T$ depends on: i) how fast the dynamical system evolves ($F$); ii) the duration of the control time step ($\Delta t_0$).
	
\subsection{Discussion on Conditions of Theorem~\ref{thr:corollTLC}}\label{apx:conditionsThr42}
In order to bound the dissimilarity term $\left\| d^\pi_{\mathcal{Q}_k} \right\|_{p, \eta^{\rho,\pi}_{k}}$ we require in Theorem~\ref{thr:corollTLC} that $ \max\left\{ L_P + 1, L_P (1+L_\pi) \right\} < \frac{1}{\gamma}$. This condition can be decomposed in the two conditions: (i) $ L_P + 1 < \frac{1}{\gamma}$ and ii) $  L_P (1+L_\pi) < \frac{1}{\gamma}$. While (ii) inherits from the Lipschitz MDP literature with Wasserstein metric~\cite{rachelson2010locality}, condition i) is typical of action persistence. In principle, we could replace Wasserstein with Total Variation, getting less restrictive conditions~\citep[][Section 7]{munos2008finite} but this would rule out deterministic systems. Moreover, the Lipschitz constants are a bound, derived to separate the effects of $\pi$ and $P$, as commonly done in the literature. Tighter bounds can be obtanied if we consider the Lipschitz constants of the joint transition models $P^{\pi}$ and $P^\delta$. Indeed, lookning at the proof of Lemma~\ref{lemma:Lipoperators} we immediately figure out that:
\begin{equation}\label{eq:boundLipConst}
	L_{P^{\pi}} \le L_{P}(L_{\pi}+1), \qquad L_{P^\delta} \le L_{P}+1.
\end{equation}

To clarify the point, consider the following deterministic dynamical linear system with $\mathcal{S}=\mathbb{R}^{d_{\mathcal{S}}}$ controlled via a deterministic linear policy with $\mathcal{A}=\mathbb{R}^{d_{\mathcal{A}}}$:
\begin{align*}
	& \mathbf{s}_{t+1} = \mathbf{A} \mathbf{s}_{t} + \mathbf{B} \mathbf{a}_{t}, \\
	& \mathbf{a}_{t} = \mathbf{K} \mathbf{s}_{t},
\end{align*}
where $\mathbf{A}$, $\mathbf{B}$, and $\mathbf{K}$ are properly sized matrices. Let us now compute $L_{P^{\pi}}$ and $L_{P^\delta}$ and the corresponding bounds of Equation~\eqref{eq:boundLipConst}. To this purpose we use as metric $d_{\SAs}((\mathbf{s},\mathbf{a}),(\overline{\mathbf{s}},\overline{\mathbf{a}})) = \| \mathbf{s}-\overline{\mathbf{s}}\| +  \| \mathbf{a}-\overline{\mathbf{a}}\|$:
\begin{align*}
\mathcal{W}_1 \left( P^\pi(\cdot|\mathbf{s},\mathbf{a}), P^\pi(\cdot|\overline{\mathbf{s}},\overline{\mathbf{a}}) \right) & \le \left\| \mathbf{A} (\mathbf{s}-\overline{\mathbf{s}}) + \mathbf{B} (\mathbf{a}-\overline{\mathbf{a}}) \right\| + \left\| \mathbf{KA} (\mathbf{s}-\overline{\mathbf{s}}) + \mathbf{KB} (\mathbf{a}-\overline{\mathbf{a}}) \right\| \\
& \le (\|\mathbf{KA}\| + \|\mathbf{A}\|) \left\| \mathbf{s}-\overline{\mathbf{s}}\right\|  + (\|\mathbf{KB}\| + \|\mathbf{B}\|) \left\| \mathbf{a}-\overline{\mathbf{a}}\right\|,
\end{align*}
\begin{align*}
\mathcal{W}_1 \left( P^\delta(\cdot|\mathbf{s},\mathbf{a}), P^\delta(\cdot|\overline{\mathbf{s}},\overline{\mathbf{a}}) \right) & \le \left\| \mathbf{A} (\mathbf{s}-\overline{\mathbf{s}}) + \mathbf{B} (\mathbf{a}-\overline{\mathbf{a}}) \right\| + \left\|\mathbf{a}-\overline{\mathbf{a}} \right\| \\
& \le  \|\mathbf{A}\| \left\| \mathbf{s}-\overline{\mathbf{s}}\right\|  + (\|\mathbf{B}\| + 1)\left\| \mathbf{a}-\overline{\mathbf{a}}\right\|,
\end{align*}
leading to $L_{P^\pi} \le \max \left\{ \|\mathbf{KA}\| + \|\mathbf{A}\| ,  \|\mathbf{KB}\| + \|\mathbf{B}\| \right\}$ and $L_{P^\delta} \le \max \left\{  \|\mathbf{A}\| ,  \|\mathbf{B}\| + 1 \right\}$. If instead, we compute the corresponding bounds of Equation~\eqref{eq:boundLipConst}, we have:
\begin{align*}
\mathcal{W}_1 \left( P(\cdot|\mathbf{s},\mathbf{a}), P(\cdot|\overline{\mathbf{s}},\overline{\mathbf{a}}) \right) & \le \left\| \mathbf{A} (\mathbf{s}-\overline{\mathbf{s}}) + \mathbf{B} (\mathbf{a}-\overline{\mathbf{a}}) \right\| \le \|\mathbf{A}\| \left\| \mathbf{s}-\overline{\mathbf{s}}\right\|  +  \|\mathbf{B}\| \left\| \mathbf{a}-\overline{\mathbf{a}}\right\|,
\end{align*}
\begin{align*}
\mathcal{W}_1 \left( \pi(\cdot|\mathbf{s}), \pi(\cdot|\overline{\mathbf{s}}) \right) & \le \left\| \mathbf{K} (\mathbf{s}-\overline{\mathbf{s}}) \right\| \le \| \mathbf{K} \| \left\| \mathbf{s}-\overline{\mathbf{s}} \right\|,
\end{align*}
leading to $L_{P} \le \max\{\|\mathbf{A}\|,\|\mathbf{B}\|\}$ and $L_{\pi} \le  \| \mathbf{K} \|$ and, consequently, $L_{P}(L_{\pi+1}) \le \max\{\|\mathbf{A}\|,\|\mathbf{B}\|\}( \| \mathbf{K} \| +1)$ and $L_P+1 \le  \max\{\|\mathbf{A}\|,\|\mathbf{B}\|\}+1$. Clearly, these latter results induce more restrictive conditions for certain values of $\mathbf{A}$, $\mathbf{B}$, and $\mathbf{K}$. Nevertheless, we believe that the bounds of Equation~\eqref{eq:boundLipConst} are unavoidable in the general case.
 
\section{Details on Persistence Selection (Section~\ref{sec:PersistenceSelection})}\label{apx:discussionSimplification}
In this appendix, we illustrate some details behind the simplifications of Lemma~\ref{thr:lowerBoundQ} to get the persistence selection index $B_k$.

\subsection{Change of Distribution}\label{apx:changeDistribution}
We discuss intuitively the effects of replacing the distribution $\eta^{\rho,\pi}$ with the sampling distribution $\nu$. To this purpose, we consider the particular case in which $\nu$ is the $\gamma$-discounted stationary distribution obtained by running a sampling policy $u$ in the environment and using the same $\rho$ as initial state distribution. Therefore, we can state:
\begin{align*}
 & \eta^{\rho,\pi} = (1-\gamma^k) \rho \pi \left( \Id - \gamma^k P^\pi_k \right)^{-1} = (1-\gamma^k) \sum_{i=0}^{\infty} \gamma^{ki} \rho \pi\left(P^{\pi}_k \right)^i, \\ 
 & \nu = (1-\gamma) \rho \pi \left( \Id - \gamma P^u \right)^{-1} = (1-\gamma) \sum_{i=0}^{\infty} \gamma^{i} \rho u \left(P^{u} \right)^i.
\end{align*}

There are two main differences between $\eta^{\rho,\pi}$ and $ \nu$. First, $\eta^{\rho,\pi}$ a discounted stationary distribution in the $k$-persistent MDP, while $\nu$ is the sampling distribution and thus, it is defined in the original (1-persistent) MDP. Second, while $\eta^{\rho,\pi}$ comes from the execution of the policy  $\pi$ obtained after a certain number iterations of learning, $\nu$ is derived by the execution of the sampling policy $u$. To decouple the effects stated above, let us define the following auxiliary discounted stationary distributions:
\begin{align*}
 & \eta^{\rho,\pi}_1 = (1-\gamma) \rho \pi \left( \Id - \gamma P^\pi \right)^{-1} , \\
 & \nu_k = (1-\gamma^k) \rho \pi \left( \Id - \gamma^k P^u_k \right)^{-1}.
\end{align*}
Thus, $\eta^{\rho,\pi}_1$ is obtained by executing policy $\pi$ in the original (1-persistent) MDP, while $\nu_k$ comes from the execution of $u$ in the $k$-persistent MDP. Therefore, we can provide the following two decomposition of $\left\| \frac{\eta^{\rho,\pi}}{\nu} \right\|_{\infty}$:
\begin{align*}
& \left\| \frac{\eta^{\rho,\pi}}{\nu} \right\|_{\infty} = \left\| \frac{\eta^{\rho,\pi}}{\eta^{\rho,\pi}_1} \frac{\eta^{\rho,\pi}_1}{\nu} \right\|_{\infty} \le \left\| \frac{\eta^{\rho,\pi}}{\eta^{\rho,\pi}_1} \right\|_{\infty} \left\| \frac{\eta^{\rho,\pi}_1}{\nu} \right\|_{\infty},\\
& \left\| \frac{\eta^{\rho,\pi}}{\nu} \right\|_{\infty} = \left\| \frac{\eta^{\rho,\pi}}{\nu_k} \frac{\nu_k}{\nu} \right\|_{\infty} \le \left\| \frac{\eta^{\rho,\pi}}{\nu_k} \right\|_{\infty} \left\| \frac{\nu_k}{\nu} \right\|_{\infty}.
\end{align*}
Therefore, looking at the first decomposition, we observe that in order to keep $\left\| \frac{\eta^{\rho,\pi}}{\nu} \right\|_{\infty}$ small we can require the following two conditions. First, executing the same policy $\pi$ at persistence $k$ and 1 must induce similar discounted stationary distributions, \ie $\left\| \frac{\eta^{\rho,\pi}}{\eta^{\rho,\pi}_1} \right\|_{\infty} \simeq 1$. This is a condition related to persistence only and connected, in some sense, to the regularity conditions employed in Section~\ref{sec:loose} to bound the loss induced by action persistence. Second, executing policy $\pi$ or policy $u$ in the same 1-persistent MDP must induce similar $\gamma$-discounted stationary distributions, \ie $\left\| \frac{\eta^{\rho,\pi}_1}{\nu} \right\|_{\infty} \simeq 1$. This condition, instead, depends on the similarity between policies $\pi$ and $u$ and on the properties of the transition model. Clearly, an analogous rationale holds when focusing on the second decomposition. We leave as future work the derivation of more formal conditions to bound the magnitude of $\left\| \frac{\eta^{\rho,\pi}}{\nu} \right\|_{\infty}$.

\subsection{Estimating the Expected Bellman Residual}
Once we have an approximation $\widetilde{Q}_k$ of $T^*_k Q$ obtained with the regressor $\mathtt{Reg}$, we can proceed to the decomposition, thanks to the triangular inequality:
\begin{equation}
\norm[1][\nu]{T^*_k Q - Q } \le \norm[1][\nu]{ \widetilde{Q}_k - Q }  + \norm[1][\nu]{ T^*_kQ - \widetilde{Q}_k }.
\end{equation}
As discussed in~\citet{farahmand2011model}, simply using $\norm[1][\nu]{ \widetilde{Q}_k - Q } $ as a proxy for $\norm[1][\nu]{T^*_k Q - Q }$ might be overlay optimistic. To overcome this problem we must prevent the underestimation of the expected Bellman residual. The idea proposed in~\citet{farahmand2011model} consists in replacing the regression error $\norm[1][\nu]{ T^*_kQ - \widetilde{Q}_k }$ with a high--probability bound $b_{k,\mathcal{G}}$, depending on the functional space $\mathcal{G}$ of the chosen regressor $\mathtt{Reg}$. Clearly, we have the new problem of getting a meaningful bound $b_{k,\mathcal{G}}$. This issue is treated in Section 7.4 of~\citet{farahmand2011model}. If $\mathcal{G}$ is a \emph{small} functional space, \ie with finite pseudo--dimension, we can employ a standard learning theory bound~\citep{gyorfi2002a}. Since for the persistence selection we employ the same functional space $\mathcal{G}$ and the same number of samples $m$ for all persistences $k \in \mathcal{K}$, the value of such a bound will not depend on $k$ and, therefore, it can be neglected in the optimization process. We stress that our goal is to provide a practical method to have an idea on which is a reasonable persistence to employ.

\clearpage
\section{Details on Experimental Evaluation (Section~\ref{sec:experimental})}\label{apx:Experiments}
In this appendix, we report the details about our experimental setting (Appendix~\ref{apx:experimentaldetails}), together with additional plots (Appendix~\ref{apx:additionalPlots}) and an experiment investigating the effect of the batch size when using persistence (Appendix~\ref{apx:batch_dep}).

\subsection{Experimental Setting}\label{apx:experimentaldetails}
Table~\ref{tab:PFQI_experiments} reports the parameters of the experimental setting, which are described in the following.

\textbf{Infrastructure}~~The experiments have been run on a machine with two CPUs Intel(R) Xeon(R) CPU E7-8880 v4 @ 2.20GHz (22 cores, 44 thread, 55 MB cache) and 128 GB RAM.

\textbf{Environments}~~The implementation of the environments are the ones provided in Open AI Gym~\citep{brockman2016open} \url{https://gym.openai.com/envs/}.

\textbf{Action Spaces}~~For the environments with finite action space, we collect samples with a uniform policy over $\As$; whereas for the environments with a continuous action space, we perform a discretization, reported in the column \quotes{Action space}, and we employ the uniform policy over the resulting finite action space. 

\textbf{Sample Collection}~~Samples are collected in the base MDP at persistence 1, although for some of them the uniform policy is executed at a higher persistence, $k_{\text{sampling}}$, reported in the column \quotes{Sampling Persistence}. Using a persistence greater than 1 to generate samples has been fundamental in some cases (\eg Mountain Car) to get a better exploration of the environment and improving the learning performances.\footnote{When considering a sampling persistence $k_{\text{sampling}} > 1$, we record in the dataset all the intermediate repeated actions, so that the tuples $(S_t,A_t,S'_t,R_t)$ are transitions of the base MDP $\mathcal{M}$.} 

\textbf{Number of Iterations}~~In order to perform a complete application of a $k$-Persisted Bellman Operator in the PFQI algorithm, we need $k$ iterations, so the total number of iterations needed to complete the training must be an integer multiple of $k$. In order to compare the resulting performances, we chose the persistences as a range of powers of 2. The total number of iterations $J$ is selected empirically so that the estimated Q-function has reached convergence for all tested persistences.

\textbf{Time Discretization}~~Every environment has its own way to deal with time discretization. In some cases, in order to make the benefits of persistence evident, we needed to reduce the base control timestep of the environment \wrt to the original implementation. We report in the column \quotes{Original timestep} ($\Delta t_{\text{original}}$) the control timestep in the original implementation of the environment, while the base time step ($\Delta t_0$) is obtained as a fraction of $\Delta t_{\text{original}}$. The reduction of the timestep by a factor $m = \Delta t_{\text{original}} / \Delta t_0 $ results in an extension of the horizon of the same factor, hence there is a greater number of rewards to sum, with the consequent need of a larger discount factor to maintain the same \quotes{effective horizon}. Thus, the new horizon $H$ (resp. discount factor $\gamma$) can be determined starting from the original horizon $H_{\text{original}}$ (resp. original discount factor $\gamma_{\text{original}}$) as:
\begin{equation*}
	H =  m H_{\text{original}}, \qquad \gamma = \left(\gamma_{\text{original}}\right)^\frac{1}{m} , \qquad \text{where} \quad m = \frac{\Delta t_{\text{original}}}{\Delta t_0}.
\end{equation*} 

\textbf{Regressor Hyperparameters}~~We used the class \emph{ExtraTreesRegressor} in the \textit{scikit-learn} library~\citep{scikit-learn} with the following parameters: n\_estimators = 100,	min\_samples\_split = 5, and min\_samples\_leaf = 2.

\setlength{\tabcolsep}{1.8pt}
\renewcommand{\arraystretch}{1.2}
\begin{table*}[h!]
\caption{Parameters of the experimental setting, used for the \algname experiments.}
\label{tab:PFQI_experiments}
\footnotesize
\vskip 0.15in
\begin{center}
\begin{tabular}{L{2cm} C{1.8cm} C{1.5cm} C{2cm} C{1.5cm} C{1.5cm} C{2cm} C{1.5cm} C{1.5cm}}
\toprule
Environment &
Action space $\As$ & 
Sampling Persistence $k_{\text{sampling}}$ & 
Original timestep $\Delta t_{\text{original}}$ (sec)& 
Factor $m = \Delta t_{\text{original}} /\Delta t_{0}$ &
 Original Horizon $H_{\text{original}}$ & 
Original Discount factor $\gamma_{\text{original}}$ & 
 Batch size $n$ & 
Iterations $J$\\
\midrule
\footnotesize Cartpole & $\{-1,1\}$ & 1 & 0.02 & 4 & 128 & 0.99 & 400 & 512 \\
\footnotesize Mountain Car & $\{-1,0, 1\}$ & 8 & 1 & 2 & 128 & 0.99 & 20 & 256 \\
\footnotesize Lunar Lander & \{Nop, left, main, right\} & 1 & $0.02$  & 1 &  256 & 0.99 & 100 & 256 \\
\footnotesize Pendulum & $\{-2,0,2\}$ & 1 & 0.05 & 1 & 256 & 0.99 & 100 & 64 \\
\footnotesize Acrobot & $\{-1,0,1\}$ & 4 & 0.2 & 4 & 128 & 0.99 & 200 & 512 \\
\footnotesize Swimmer & $\{-1,0,1\}^2$ & 1 & 2 (frame-skip) & 2 & 128 & 0.99 & 100 & 128 \\
\footnotesize Hopper & $\{-1,0,1\}^3$ & 1 & 1 (frame-skip) & 2 & 128 & 0.99 & 100 & 128 \\
\footnotesize Walker 2D & $\{-1,0,1\}^9$ & 1 & 1 (frame-skip) & 2 & 128 & 0.99 & 100 & 128 \\
%
\bottomrule
\end{tabular}
\end{center}
\vskip -0.1in
\end{table*}

\clearpage

\subsection{Additional Plots}\label{apx:additionalPlots}
\begin{figure}[h!]
\raggedleft
\centerline{Cartpole}\vspace{.2cm}
\subcaptionbox*{}{\includegraphics[scale=1.18]{./plot_cartpole}}
\centerline{Mountain Car}\vspace{.2cm}
\subcaptionbox*{}{\includegraphics[scale=1.18]{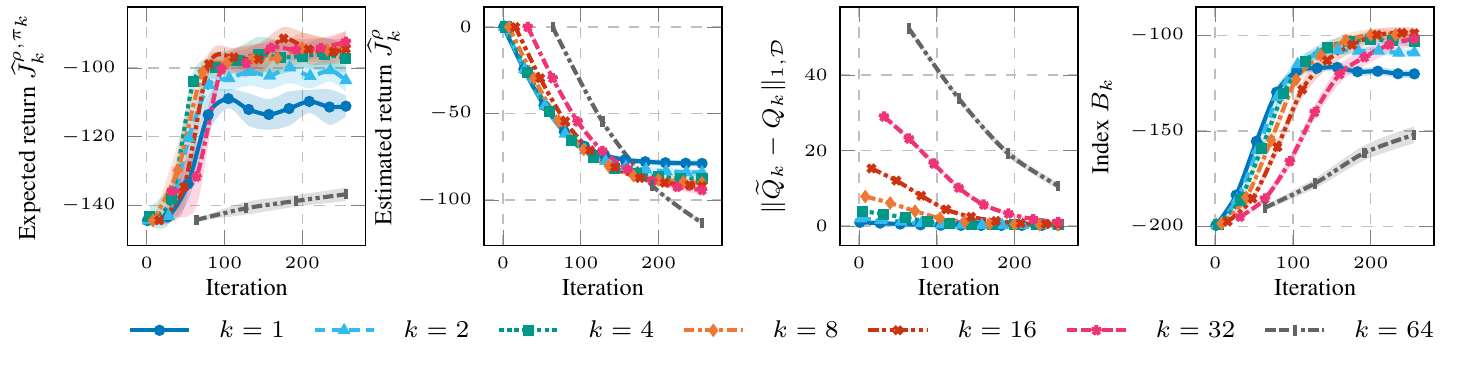}}
\centerline{Lunar Lander}\vspace{.2cm}
\subcaptionbox*{}{\includegraphics[scale=1.18]{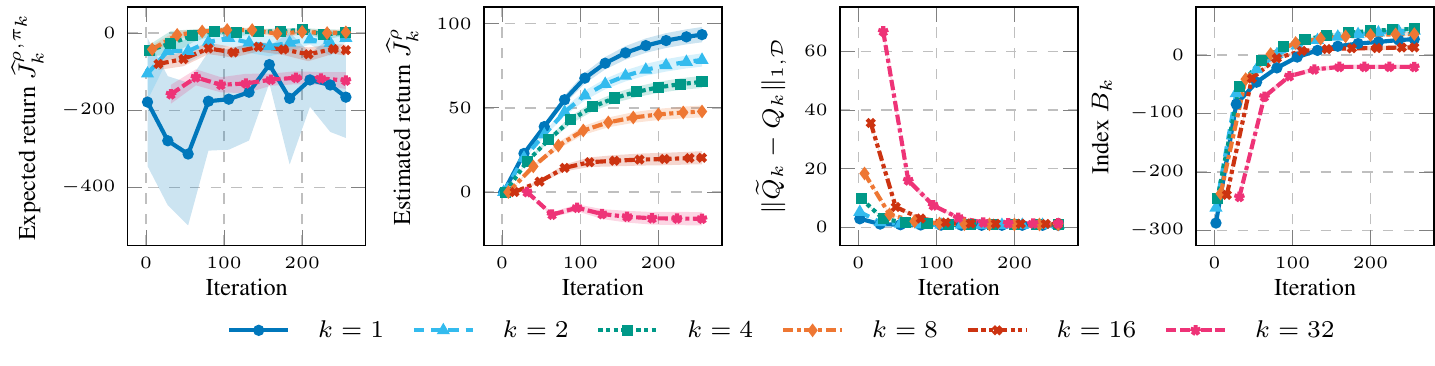}}
\centerline{Pendulum}\vspace{.2cm}
\subcaptionbox*{}{\includegraphics[scale=1.18]{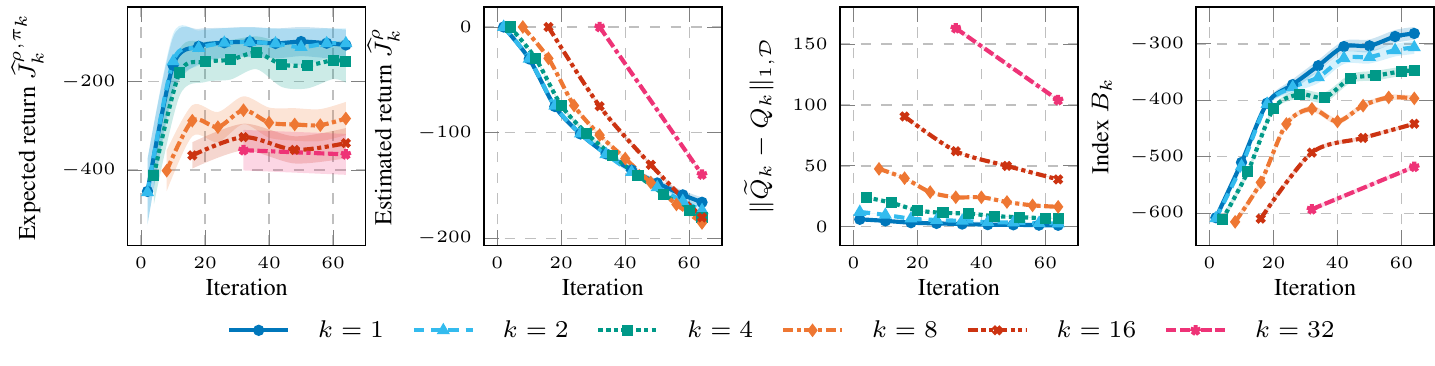}}
\end{figure}

\addtocounter{figure}{-1}
\begin{figure}[h!]
\raggedleft
\centerline{Acrobot}\vspace{.2cm}
\subcaptionbox*{}{\includegraphics[scale=1.18]{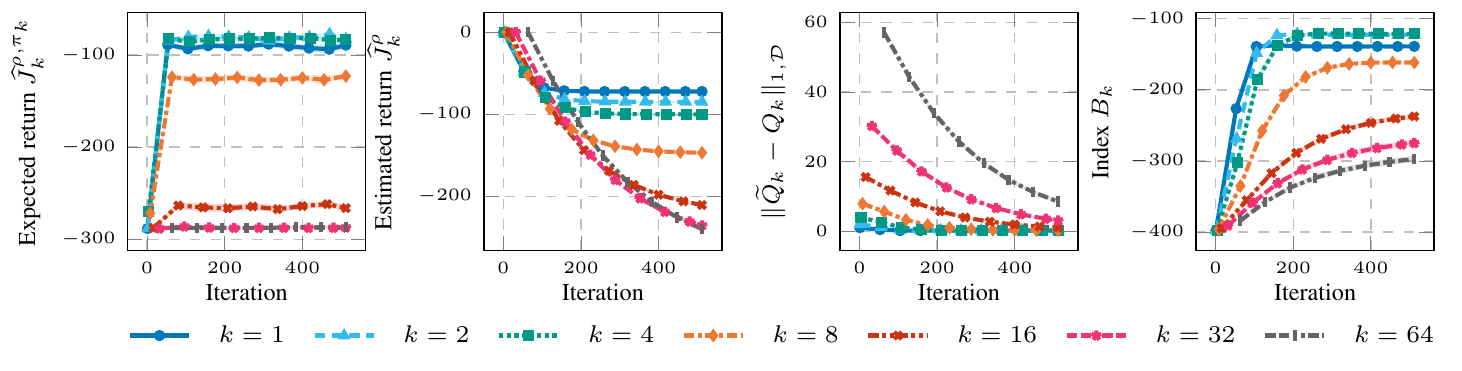}}
\centerline{Swimmer}\vspace{.2cm}
\subcaptionbox*{}{\hspace{.3cm}\includegraphics[scale=1.18]{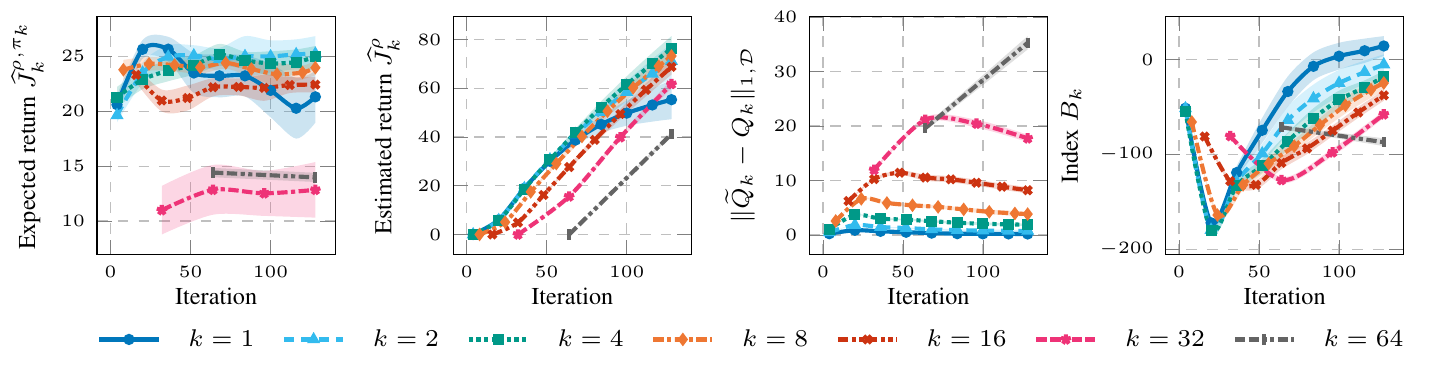}}
\centerline{Hopper}\vspace{.2cm}
\subcaptionbox*{}{\hspace{.3cm}\includegraphics[scale=1.18]{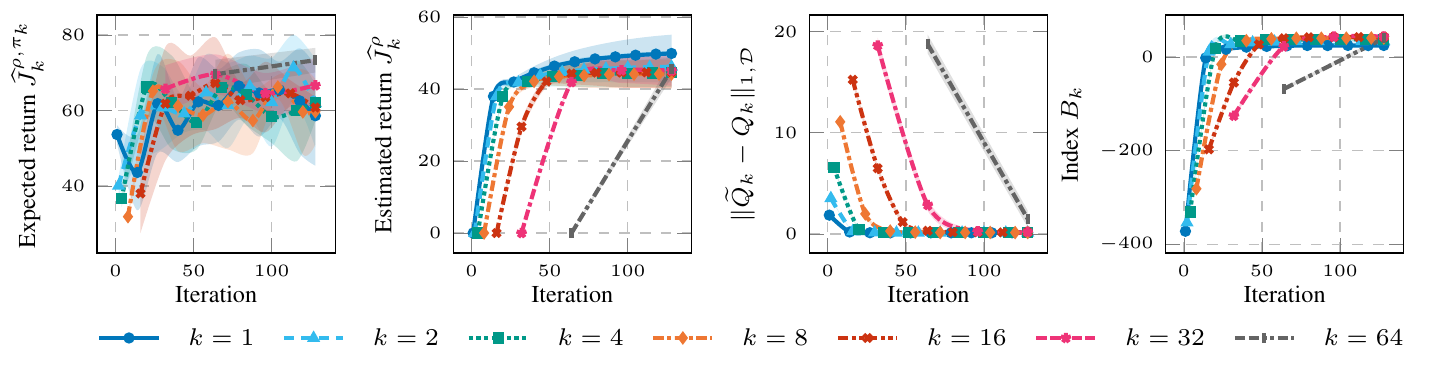}}
\centerline{Walker 2D}\vspace{.2cm}
\subcaptionbox*{}{\hspace{.3cm}\includegraphics[scale=1.18]{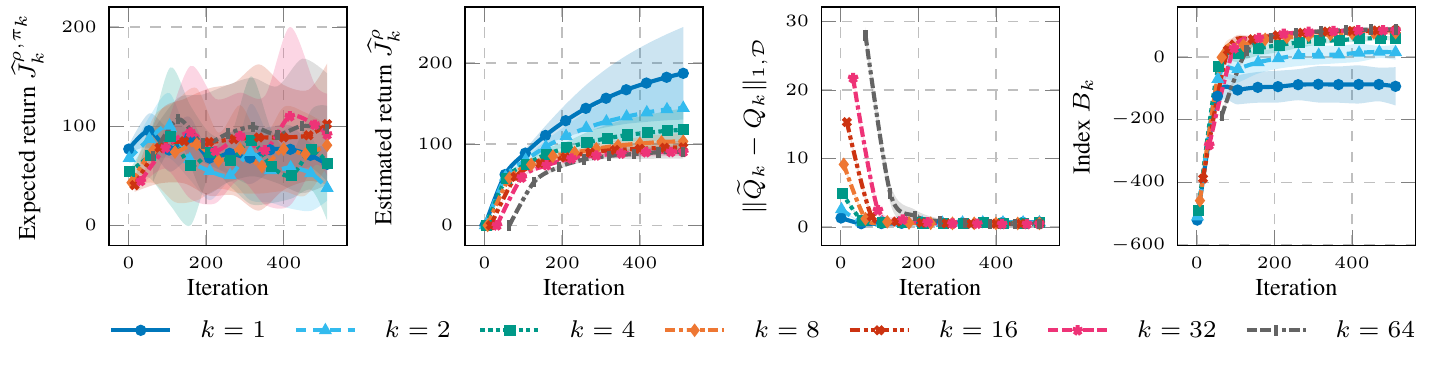}}\vspace{-.5cm}
\caption{Expected return $\widehat{J}_k^{\rho,\pi_k}$, estimated return $\widehat{J}_k^{\rho}$, estimated expected Bellman residual $\| \widetilde{Q}_k - Q_k \|_{1,\mathcal{D}}$, and persistence selection index $B_k$ for the different experiments as a function of the number of iterations for different persistences. 20 runs, 95 \% c.i.}
\end{figure}

\clearpage
\subsection{PFQI with Neural Network as regressor}
In the previous experiments we employed extra-trees as regressor to run PFQI. In this appendix, we investigate the effect of employing a neural network as regressor. More specifically, we consider a two-layer network with 64 neurons each and ReLU activation. Figure~\ref{fig:resFigNN} and Table~\ref{tab:resultsNN} show the results. The experimental setting is identical to that presented in Appendix~\ref{apx:experimentaldetails}. Although the performances are overall lower compared to the case of extremely randomized trees, we notice the same trade-off in the choice of the persistence.

\begin{figure}[h!]
\raggedleft
\centerline{Cartpole}\vspace{.2cm}
\subcaptionbox*{}{\includegraphics[scale=1.18]{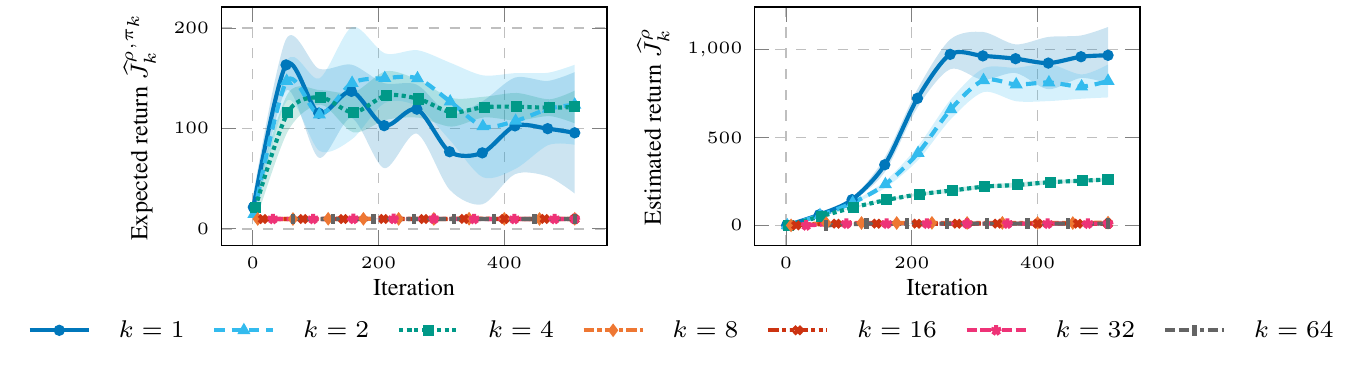}}
\centerline{Lunar Lander}\vspace{.2cm}
\subcaptionbox*{}{\includegraphics[scale=1.18]{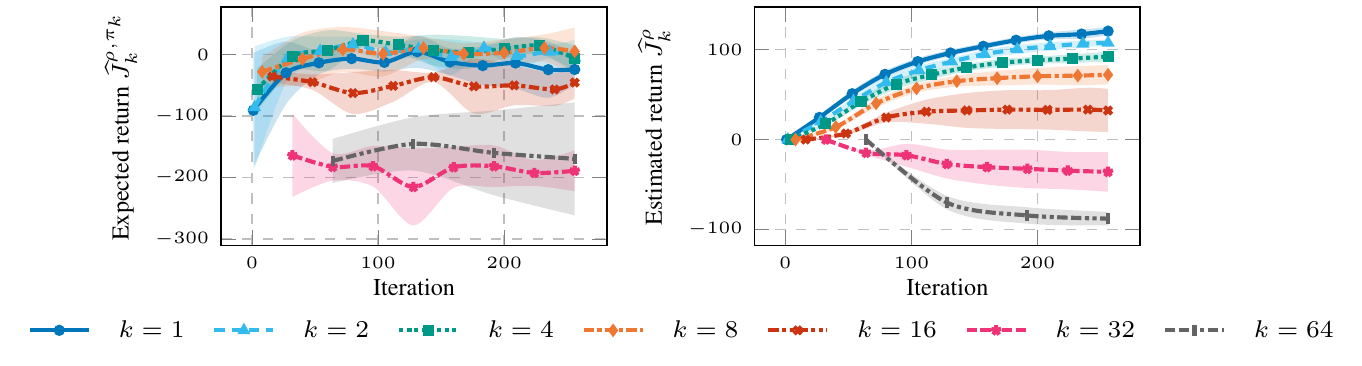}}
\centerline{Acrobot}\vspace{.2cm}
\subcaptionbox*{}{\includegraphics[scale=1.18]{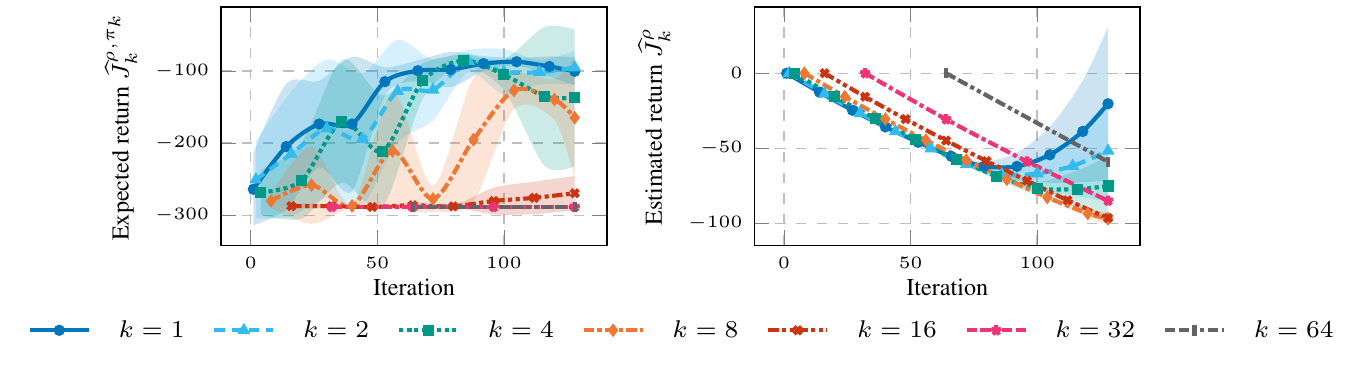}}\vspace{-.5cm}
\caption{Expected return $\widehat{J}_k^{\rho,\pi_k}$ and estimated return $\widehat{J}_k^{\rho}$ for the different experiments with neural network, as a function of the number of iterations for different persistences. 20 runs, 95 \% c.i.}\label{fig:resFigNN}
\end{figure} 

\setlength{\tabcolsep}{2pt}
\begin{table*}[!h]
\caption{Results of PFQI execution in different environments and persistences with neural network as regressor. For each persistence $k$, we report the sample mean and the standard deviation of the estimated return of the last policy $\widehat{J}_k^{\rho, \pi_k}$. For each environment, the persistence with the highest average performance and the ones that are not statistically significantly different from that one (Welch's t-test with $p < 0.05$) are in bold.}
\label{tab:resultsNN}
\begin{center}
\scriptsize
\begin{tabular}{lccccccc}
\toprule
\multirow{2}{*}{\footnotesize Environment} & \multicolumn{7}{c}{\footnotesize Expected return at persistence $k$ \scriptsize ($\widehat{J}_k^{\rho, \pi_k}$, mean $\pm$ std)} \\
\cline{2-8}
 &  $k=1$ &  $k=2$ &  $k=4$ &  $k=8$ &  $k=16$ &  $k=32$ &  $k=64$ \\
\midrule
\footnotesize Cartpole & $ \mathbf{ 95.6 \pm 21.8 } $ & $ \mathbf{ 123.6 \pm 14.4 } $ & $ \mathbf{ 121.4 \pm 5.9 } $ & $ 10.0 \pm 0.1 $ & $ 9.7 \pm 0.0 $ & $ 9.8 \pm 0.1 $ & $ 9.8 \pm 0.0 $  \\
\footnotesize LunarLander & $ -24.3 \pm 8.8 $ & $ \mathbf{ -5.3 \pm 10.4 } $ & $ \mathbf{ -5.9 \pm 7.4 } $ & $ \mathbf{ 5.0 \pm 14.0 } $ & $ -45.7 \pm 9.2 $ & $ -189.0 \pm 12.0 $ & $ -169.7 \pm 33.1 $ \\
\footnotesize Acrobot & $ \mathbf{ -100.6 \pm 7.1 } $ & $ \mathbf{ -95.4 \pm 8.5 } $ & $ \mathbf{ -137.2 \pm 34.0 } $ & $ -164.9 \pm 30.9 $ & $ -269.4 \pm 8.6 $ & $ -288.4 \pm 0.0 $ & $ -288.4 \pm 0.0 $ \\
\bottomrule
\end{tabular}
\end{center}
\end{table*}

\subsection{Performance Dependence on Batch Size}\label{apx:batch_dep}
In previous experiments we assumed we could choose the batch size, however, in real contexts this is not always allowed. In PFQI, lower batch sizes increase the estimation error, but the effect can change according to the used persistence.
We wanted to investigate how the batch size influences the performance of PFQI policies for different persistences. Therefore, we run PFQI on the Trading environment (described below) changing the number of sampled trajectories. As it can be noticed in Figure \ref{fig:trading_fig}, if the batch size is small ($10,50, 100$), higher persistences ($2,4,8$) results in better performances, while, with persistence $1$, performance decreases with the iterations. In particular, with $50$ trajectories, we can notice how all persistences except from $1$ obtain a positive gain. 

\textbf{FX Trading Environment Description}~~This environment simulates trading on a foreign exchange market. Trader's own currency is $USD$ and it can be traded with $EUR$. The trader can be in three different position w.r.t. the foreign currency: long, short or flat, indicated, respectively, with $1, -1, 0$. Short selling is possible, i.e., the agent can sell a stock it does not own. At each timestep the agent can choose its next position with its action $a_t$. The exchange rate at time $t$ is $p_t$, and the reward is equal to $R_t=a_t(p_t-p_{t-1})-f|a_t-a_{t-1}|$, where the first term is the profit or loss given by the action $a_t$, and the second term represents the transaction costs, where $f$ is a proportionality constant set to $4 \cdot 10^{-5}$. A timestep corresponds to 1 minute, an episode corresponds to a work day and it is composed by 1170 steps. It is assumed that at each time-step the trader goes long or short of the same unitary amount, thus the profits are not re-invested (and similarly for the losses), which means that the return is the sum of all the daily rewards (with a discount factor equal to $0.9999$).
The state consists of the last 60 minutes of price differences with the first price of the day ($p_t - p_0$), with the addition of the previous portfolio position as well as the fraction of time remaining until the end of the episode. For our experiments we sampled randomly daily episodes from a window of 64 work days of 2017, evaluating the performances on the last 20 days of the window. 

\textbf{Regressor Hyperparameters}~~We used the class \emph{ExtraTreesRegressor} in the \textit{scikit-learn} library~\citep{scikit-learn} with the following parameters: n\_estimators = 10, min\_samples\_split = 2, and min\_samples\_leaf = 2.

\begin{figure}[h!]
\includegraphics[scale=1.18]{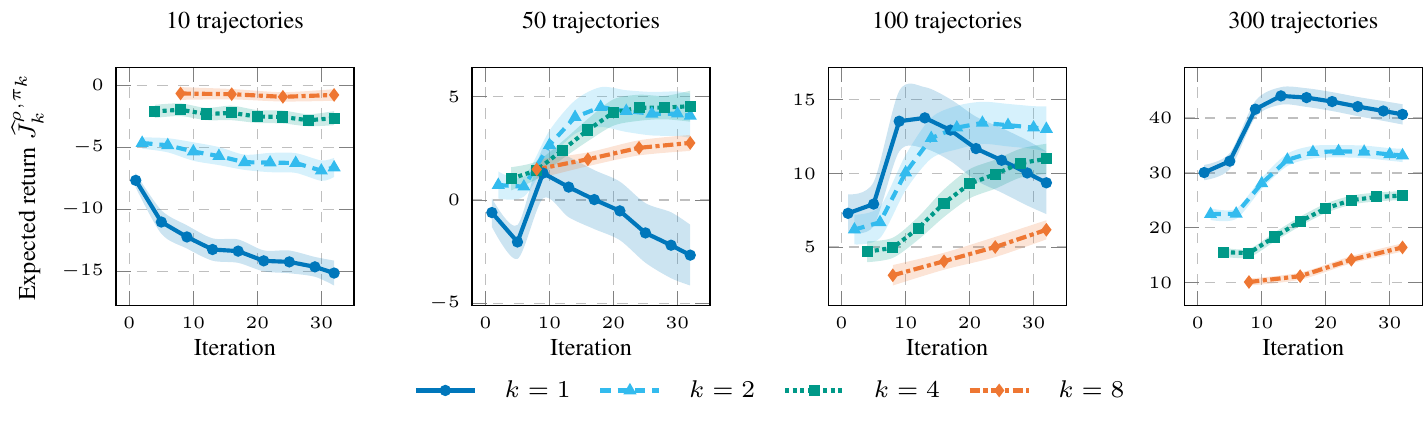}
\caption{Performances for each persistence along the iterations, with different numbers of trajectories. 10 runs, 95\% c.i.}\label{fig:trading_fig}
\end{figure}


\clearpage
\section{Preliminary Results on Open Questions (Section~\ref{sec:discussion})}\label{apx:openQuestions}
In this appendix, we report some preliminary results related to the first two open questions about action persistence we presented in Section~\ref{sec:discussion}.

\subsection{Improving Exploration with Persistence}\label{apx:persistenceExplor}
As we already mentioned, action persistence might have an effect on the exploration properties of distribution $\nu$ used to collect samples. To avoid this phenomenon, in this work, we assumed to feed \algname with the same dataset collected in the base MDP $\mathcal{M}$, independently on which target persistence $k$ we are interested in. In this appendix, we want to briefly analyze what happens when we feed standard FQI with a dataset collected by executing the same policy (\eg the uniform policy over $\As$) in the $k$--persistent MDP $\mathcal{M}_k$,\footnote{This procedure generates a different dataset compared to the case in which we use a \quotes{sampling persistence} $k_{\text{sampling}} > 1$, as illustrated in Appendix~\ref{apx:experimentaldetails}. Indeed, in this case we do not record in the dataset the intermediate repeated actions, since we want a dataset of transition of the $k$--persistent MDP $\mathcal{M}_k$.} in order to estimate the corresponding $k$--persistence action-value function $Q^*_k$. In this way, for each persistence $k$ we have a different sampling distribution $\nu_k$, but, being the dataset $\mathcal{D}_k \sim \nu_k$ collected in $\mathcal{M}_k$, we can apply standard FQI to estimate $Q^*_k$. Refer to Figure~\ref{fig:fqiComparison} for a graphical comparison between \algname executed in the base MDP and FQI executed in the $k$--persistent MDP.

\begin{figure*}[h!]
\centering
\subcaptionbox{\algname on $\mathcal{M}$}{\includegraphics[width=.4\textwidth]{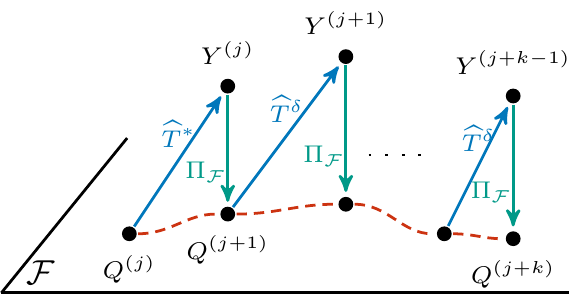}}\hspace{1cm}
\subcaptionbox{FQI on $\mathcal{M}_k$}{\includegraphics[width=.4\textwidth]{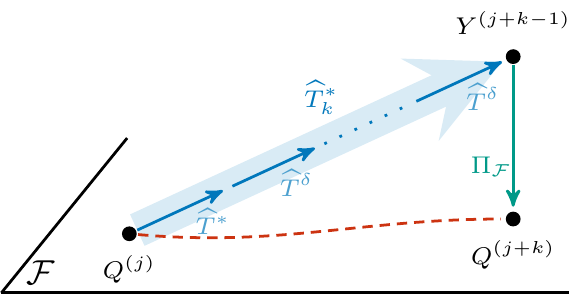}}
\caption{Illustration of (a) \algname executed in the base MDP $\mathcal{M}$ and (b) the standard FQI executed in the $k$-persistent MDP $\mathcal{M}_k$.}\label{fig:fqiComparison}
\end{figure*}

When we compare the performances of the policies obtained with different persistence levels learned starting with a dataset $\mathcal{D}_k \sim \nu_k$, we should consider two different effects: i) how training samples are generated (\ie the sampling distribution $\nu_k$, which changes for every persistence $k$); ii) how they affect the learning process in FQI. Unfortunately, in this setting we are not able to separate the two effects.

Our goal, in this appendix, is to compare for different values of $k \in \mathcal{K} = \{1,2,\dots 64\}$ the performance of \algname and the performance of FQI run on the $k$--persistent MDP $\mathcal{M}_k$. The experimental setting is the same as in Appendix~\ref{apx:Experiments}, apart from the \quotes{sampling persistence} which is set to 1 also for the Mountain Car environment. In Figure~\ref{fig:PFQI_FQI_results}, we show the performance at the end of training of the policies obtained with \algname, the one derived with FQI on $\mathcal{M}_k$, and the uniform policy over the action space. First of all, we observe that when $k=1$, executing FQI on $\mathcal{M}_1$ is in all regards equivalent to executing PFQI($1$) on $\mathcal{M}$, since PFQI($1$) is FQI and $\mathcal{M}_1$ is $\mathcal{M}$.
We can see that in the Cartpole environment, fixing a value of $k\in \mathcal{K}$, there is no significant difference in the performances obtained with \algname and FQI on $\mathcal{M}_k$. The behavior is significantly different when considering Mountain Car. Indeed, we notice that only FQI on $\mathcal{M}_k$ is able to learn a policy that reaches the goal for some specific values of $k \in \mathcal{K}$. We can justify this behavior with the fact that by collecting samples at a persistence $k$, like in FQI on $\mathcal{M}_k$, the exploration properties of the sampling distribution change, as we can see from the line \quotes{Uniform policy}. If the input dataset contains no trajectory reaching the goal, our algorithms cannot solve the task. This is why \algname, that uses persistence 1 to collect the samples, is unable to learn at all.\footnote{Recall that in our main experiments (Appendix~\ref{apx:experimentaldetails}), we had to employ for the Mountain Car a \quotes{sampling persistence} $k_{\text{sampling}} = 8$. Indeed, for $k_{\text{sampling}} \in \{1,2,4\}$ the uniform policy is unable to reach the goal, while for $k_{\text{sampling}} = 8$ it allows reaching the goal in the 6\% of the times on average.} 

This experiment gives a preliminary hint on how action persistence can affect exploration. More in general, we wonder which are the necessary characteristics of the environment such that the same sampling policy (\eg the uniform policy over $\As$) allows to perform a better exploration. More formally, we ask ourselves how the persistence affects the entropy of the stationary distribution induced by the sampling policy.

\begin{figure}[h!]
\centering
\includegraphics[scale=1.2]{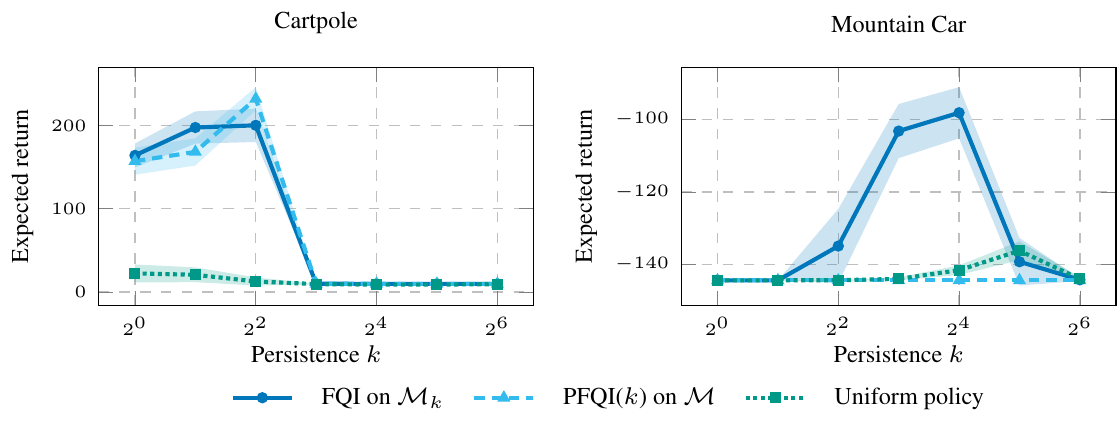}
\caption{Performance of the policies learned with FQI on $\mathcal{M}_k$, \algname on $\mathcal{M}$ and the one of the uniform policies for different values of the persistence $k \in \mathcal{K}$. 10 runs. 95\% c.i.}\label{fig:PFQI_FQI_results}
\end{figure}

\subsection{Learn in $\mathcal{M}_k$ and execute in $\mathcal{M}_{k'}$}\label{apx:khpersistence}
In this appendix, we empirically analyze what happens when a policy is learned by PFQI with a certain persistence level $k$ and executed later on with a different persistence level $k' \neq k$. We consider an experiment on the Cartpole environment, in the same setting as Appendix~\ref{apx:Experiments}. We run \algname for $k \in \mathcal{K} = \{1,2,\dots,256\}$ and then for each $k$ we execute policy $\pi_k$ (\ie the policy learned by applying the $k$--persistent operator) in the $k'$--persistent MDP $\mathcal{M}_{k'}$ for $k' \in \mathcal{K} $. The results are shown in Table~\ref{tab:reskh}. Thus, for each pair $(k,k')$, Table~\ref{tab:reskh} shows the sample mean and the sample standard deviation over 20 runs of the expected return of policy $\pi_k$ in MDP $\mathcal{M}_k$, \ie $J^{\rho,\pi_k}_{k'}$. First of all, let us observe that the diagonal of Table~\ref{tab:reskh} corresponds to the first row of Table~\ref{tab:results} (apart from the randomness due to the evaluation). If we take a row $k$, \ie we fix the persistence of the operator, we notice that, in the majority of the cases, the persistence $k'$ of the MDP yielding the best performance is smaller than $k$. Moreover, even if we learn a policy with the operator at a given persistence $k$ and we see that such a policy displays a poor performance in the $k$--persistent MDP (\eg for $k \ge 8$), when we reduce the persistence, the performance of that policy seems to improve. 

\setlength{\tabcolsep}{4pt}
\begin{table*}[t]
\caption{Results of PFQI execution of the policy $\pi_k$ learned with the $k$--persistent operator in the $k'$--persistent MDP $\mathcal{M}_{k'}$ in the Cartpole experiment. For each $k$, we report the sample mean and the standard deviation of the estimated return of the last policy $\widehat{J}_{k'}^{\rho, \pi_k}$. For each $k$, the persistence $k'$ with the highest average performance and the ones $k'$ that are not statistically significantly different from that one (Welch's t-test with $p < 0.05$) are in bold.}
\label{tab:reskh}
\begin{center}
\scriptsize
\begin{tabular}{lccccccccc}
\toprule
 &  $k'=1$ &  $k'=2$ &  $k'=4$ &  $k'=8$ &  $k'=16$ &  $k'=32$ &  $k'=64$ & $k'=128$ & $k'=256$ \\
\midrule
$ k=1 $ & $ \mathbf{ 172.0 \pm 6.8 } $ & $ \mathbf{ 174.1 \pm 6.5 } $ & $ 113.0 \pm 5.3 $ & $ 9.8 \pm 0.0 $ & $ 9.7 \pm 0.0 $ & $ 9.7 \pm 0.1 $ & $ 9.8 \pm 0.0 $ & $ 9.7 \pm 0.0 $ & $ 9.7 \pm 0.0 $\\
$ k=2 $ & $ \mathbf{ 178.4 \pm 6.7 } $ & $ \mathbf{ 182.2 \pm 7.2 } $ & $ 151.6 \pm 5.1 $ & $ 9.9 \pm 0.0 $ & $ 9.8 \pm 0.0 $ & $ 9.8 \pm 0.0 $ & $ 9.8 \pm 0.0 $ & $ 9.8 \pm 0.0 $ & $ 9.8 \pm 0.0 $\\
$ k=4 $ & $ 276.2 \pm 3.8 $ & $ \mathbf{ 287.3 \pm 1.1 } $ & $ 237.0 \pm 5.4 $ & $ 10.0 \pm 0.0 $ & $ 9.8 \pm 0.0 $ & $ 9.8 \pm 0.0 $ & $ 9.9 \pm 0.0 $ & $ 9.8 \pm 0.0 $ & $ 9.9 \pm 0.0 $\\
$ k=8 $ & $ \mathbf{ 284.3 \pm 1.6 } $ & $ \mathbf{ 281.4 \pm 3.0 } $ & $ 211.5 \pm 4.0 $ & $ 10.0 \pm 0.0 $ & $ 9.8 \pm 0.0 $ & $ 9.8 \pm 0.0 $ & $ 9.8 \pm 0.0 $ & $ 9.8 \pm 0.0 $ & $ 9.9 \pm 0.0 $\\
$ k=16 $ & $ \mathbf{ 285.9 \pm 1.1 } $ & $ \mathbf{ 282.9 \pm 2.6 } $ & $ 223.5 \pm 3.2 $ & $ 10.0 \pm 0.0 $ & $ 9.9 \pm 0.0 $ & $ 9.8 \pm 0.0 $ & $ 9.9 \pm 0.0 $ & $ 9.9 \pm 0.0 $ & $ 9.8 \pm 0.0 $\\
$ k=32 $ & $ \mathbf{ 285.7 \pm 1.3 } $ & $ \mathbf{ 283.6 \pm 2.7 } $ & $ 222.2 \pm 3.6 $ & $ 10.0 \pm 0.0 $ & $ 9.9 \pm 0.0 $ & $ 9.9 \pm 0.0 $ & $ 9.8 \pm 0.0 $ & $ 9.9 \pm 0.0 $ & $ 9.9 \pm 0.0 $\\
$ k=64 $ & $ \mathbf{ 283.6 \pm 2.3 } $ & $ \mathbf{ 284.1 \pm 2.0 } $ & $ 225.5 \pm 4.4 $ & $ 10.0 \pm 0.0 $ & $ 9.9 \pm 0.0 $ & $ 9.8 \pm 0.0 $ & $ 9.9 \pm 0.0 $ & $ 9.8 \pm 0.0 $ & $ 9.9 \pm 0.0 $\\
$ k=128 $ & $ \mathbf{ 282.9 \pm 2.2 } $ & $ \mathbf{ 282.5 \pm 3.1 } $ & $ 221.9 \pm 4.7 $ & $ 10.0 \pm 0.0 $ & $ 9.8 \pm 0.0 $ & $ 9.9 \pm 0.0 $ & $ 9.9 \pm 0.0 $ & $ 9.9 \pm 0.0 $ & $ 9.9 \pm 0.0 $\\
$ k=256 $ & $ \mathbf{ 282.5 \pm 2.3 } $ & $ \mathbf{ 283.4 \pm 2.4 } $ & $ 224.3 \pm 3.9 $ & $ 10.0 \pm 0.0 $ & $ 9.9 \pm 0.0 $ & $ 9.9 \pm 0.0 $ & $ 9.9 \pm 0.0 $ & $ 9.9 \pm 0.0 $ & $ 9.9 \pm 0.0 $\\
\bottomrule
\end{tabular}
\end{center}
\end{table*}

Figure~\ref{fig:kh} compares for different values of $k$, determining the persistence of the operator, the performance of the policy $\pi_k$ when we execute it in $\mathcal{M}_k$ and the performance of $\pi_k$ in the MDP $\mathcal{M}_{(k')^*}$, where $(k')^* \in \argmax_{k' \in 
\mathcal{K}} \widehat{J}^{\rho,\pi_k}_{k'}$. We clearly see that suitably selecting the persistence $k'$ of the MDP in which we will deploy the policy, allows reaching higher performances.

\begin{figure}[h!]
\centering
\includegraphics[scale=1.2]{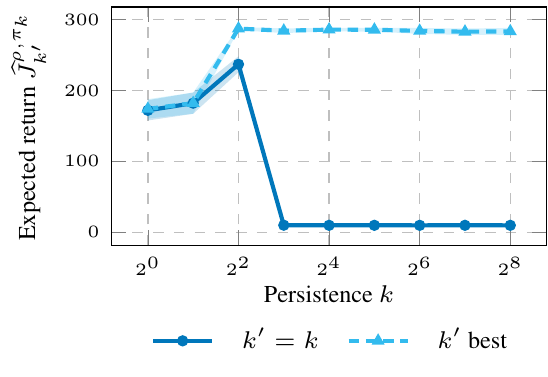}
\caption{Performance of the policies $\pi_k$ for $k\in\mathcal{K}$ comparing when they are executed in $\mathcal{M}_k$ and when they are executed in $\mathcal{M}_{(k')^*}$. 20 runs, 95\% c.i.}\label{fig:kh}
\end{figure}

The question we wonder is whether this behavior is a property of the Cartpole environment or is a general phenomenon that we expect to occur in environments with certain characteristics. If so, which are those characteristics? Furthermore, when we allow executing $\pi_k$ in $\mathcal{M}_{k'}$ we should rephrase the persistence selection problem (Equation~\eqref{eq:persistenceSelectionProblem}) as follows:
\begin{equation}
	k^*, (k')^* \in \argmax_{k,k' \in \mathcal{K}} J^{\rho,\pi_k}_{k'}, \quad \rho \in \mathscr{P}(\Ss).
\end{equation} 
Similarly to the case of Equation~\eqref{eq:persistenceSelectionProblem}, we cannot directly solve the problem if we are not allowed to interact with the environment. Is it possible to extend Lemma~\ref{thr:lowerBoundQ} and the subsequent heuristic simplifications to get a usable index $B_{k,k'}$ similar to Equation~\eqref{eq:index}?

\end{document}